\newtheoremstyle{plain}
	  {}
	  {}
	  {\itshape}
	  {}
	  {\bfseries}
	  {}
	  {5pt plus 1pt minus 1pt}
	  {}
\newtheoremstyle{definition}
  	  {}
	  {}
	  {\normalfont}
	  {}
	  {\bfseries}
	  {}
	  {5pt plus 1pt minus 1pt}
	  {}
\theoremstyle{plain}
\newtheorem{lemma}{Lemma}
\newtheorem{proposition}{Proposition}
\theoremstyle{definition}
\newtheorem{definition}{Definition}
\newcommand{\refeq}[1]			{(\ref{#1})} 
\newcommand{\reffig}[1]			{Fig. \ref{#1}} 
\newcommand{\refsec}[1]			{Section \ref{#1}}
\newcommand{\refapp}[1]			{Appendix \ref{#1}}
\newcommand{\reftab}[1]			{Table \ref{#1}}
\newcommand{\refprop}[1]		{Proposition \ref{#1}}
\newcommand{\reflem}[1]			{Lemma \ref{#1}}
\newcommand{\refdef}[1]			{Definition \ref{#1}}
\newcommand{\reffn}[1] 		    {\textsuperscript{\ref{#1}}}
\theoremstyle{plain}
\newcommand{\R}  	{\mathbb{R}} 
\newcommand{\Z}  	{\mathbb{Z}} 
\newcommand{\radius} 	{\rho}
\newcommand{\ctr} 	{\vect{c}}
\newcommand{\conv}      {\mathrm{conv}} 
\newcommand{\cone}      {\mathrm{C}} 
\newcommand{\dshape}{\mathrm{D}} 
\newcommand{\fwdsim}{\mathrm{F}} 
\newcommand{\ovect}[1] {\begin{bmatrix}\cos #1\\ \sin #1 \end{bmatrix}}
\newcommand{\ovectsmall}[1] {\scalebox{0.85}{$\ovect{#1}$}}
\newcommand{\ovecT}[1] {\tr{\ovect{#1}\!}}
\newcommand{\ovecTsmall}[1] {\scalebox{0.85}{$\ovecT{#1}$}}
\newcommand{\nvect}[1] {\begin{bmatrix}-\sin #1\\ \cos #1 \end{bmatrix}}
\newcommand{\nvecT}[1] {\tr{\nvect{#1}\!}}
\newcommand{\nvecTsmall}[1] {\scalebox{0.85}{$\nvecT{#1}$}}
\newcommand{\Rmat}{\mat{R}}
\newcommand{\headingline} 		{H}
\newcommand{\motionset}{\mathcal{M}} 
\newcommand{\freespace}	{\mathcal{F}} 
\newcommand{\workspace}	{\mathcal{W}} 
\newcommand{\ball}      {\mathrm{B}} 
\newcommand{\obstspace}	{\mathcal{O}} 
\newcommand{\path}      {\mathrm{p}}
\newcommand{\pathparam} {s}
\newcommand{\smin}      {\pathparam_{\min}}
\newcommand{\smax}      {\pathparam_{\max}}
\newcommand{\startpos}  {\pos_{\mathrm{start}}}
\newcommand{\goalpos}   {\pos_{\mathrm{goal}}}
\newcommand{\pos} 		{\vect{x}} 			
\newcommand{\ort}	    {\theta}			
\newcommand{\headingerror}{\psi}            
\newcommand{\fwdheadingerror}{\overrightarrow{\psi}}
\newcommand{\bckheadingerror}{\overleftarrow{\psi}}
\newcommand{\totalturning}{\Theta}
\newcommand{\fwdtotalturning}{\overrightarrow{\Theta}}
\newcommand{\bcktotalturning}{\overleftarrow{\Theta}}
\newcommand{\linvel}     {v}  
\newcommand{\fwdlinvel}     {\overrightarrow{v}}  
\newcommand{\bcklinvel}     {\overleftarrow{v}}  
\newcommand{\angvel}     {\omega} 
\newcommand{\fwdangvel}     {\overrightarrow{\omega}} 
\newcommand{\bckangvel}     {\overleftarrow{\omega}} 
\newcommand{\Si}{\mathrm{Si}} 
\newcommand{\xpos} {\widehat{\pos}} 
\newcommand{\xposr} {\widehat{\pos}^{r}} 
\newcommand{\cpos}	{\widebar{\pos}} 
\newcommand{\cposr}	{\widebar{\pos}^{r}} 
\newcommand{\fort} 		{\theta^{*}} 
\newcommand{\lingain}   	{\kappa_{v}} 			
\newcommand{\anggain}   	{\kappa_{\omega}} 			
\newcommand{\goal}		{\vect{x^{*}}}
\newcommand{\gain}   	{\kappa} 			
\newcommand{\ctrl}      {\vect{u}} 			
\newcommand{\fwdctrl}      {\overrightarrow{\vect{u}}} 			
\newcommand{\bckctrl}      {\overleftarrow{\vect{u}}} 			
\newcommand{\safedist}  {\mathrm{dist}}
\newcommand{\safelevel}{\sigma} 
\let\originalleft\left
\let\originalright\right
\renewcommand{\left}{\mathopen{}\mathclose\bgroup\originalleft}
\renewcommand{\right}{\aftergroup\egroup\originalright}
\newcommand{\plist}[1] 	{\left(#1\right)} 
\newcommand{\blist}[1]	{\left[ #1 \right]} 
\newcommand{\clist}[1]	{\left\{#1\right\}} 
\DeclareMathOperator{\arctantwo}{arctan2}
\newcommand{\vect}[1]   {\mathrm{#1}}
\newcommand{\mat}[1]    {\mathbf{#1}}
\newcommand{\tr}[1] {{#1}^{\mathrm{T}}} 
\newcommand{\norm}[1]  {\|#1\|}
\newcommand{\absval}[1]{\left|#1 \right|} 
\newcommand{\ldf}   {:=} 
\newcommand{\diff} {\mathrm{d}}
\DeclareMathOperator{\si}{Si} 
\title{\LARGE \bf 
Total Turning and Motion Range Prediction for Safe Unicycle Control \\[0mm] {\large (Technical Report)}}
\author{Abdulla Tarshahani  and Aykut \.{I}\c{s}leyen and \"{O}m\"{u}r Arslan 
\thanks{The authors are with the Department of Mechanical Engineering, Eindhoven University of Technology, P.O. Box 513, 5600 MB Eindhoven, The Netherlands. The authors are also affiliated with the Eindhoven AI Systems Institute. Emails: a.tarshahani@student.tue.nl, \{a.isleyen, o.arslan\}@tue.nl}
}
\begin{document}

\maketitle
\thispagestyle{empty}
\pagestyle{empty}

\begin{abstract}
Safe and smooth motion control is essential for mobile robots when performing various automation tasks around obstacles, especially in the presence of people and other mobile robots.
The total turning and space used by a mobile robot while moving towards a specified goal position play a crucial role in determining the required control effort and complexity.   
In this paper, we consider a standard unicycle control approach based on angular feedback linearization and provide an explicit analytical measure for determining the total turning effort during unicycle control in terms of unicycle state and control gains.
We show that undesired  spiral oscillatory motion around the goal position can be avoided by choosing a higher angular control gain compared to the linear control gain. 
Accordingly, we establish an accurate, explicit triangular motion range bound on the closed-loop unicycle trajectory using the total turning effort.   
The improved accuracy in motion range prediction results from a stronger dependency on the unicycle state and control parameters.
To compare alternative circular, conic, and triangular motion range prediction approaches, we present an application of the proposed unicycle motion control and motion prediction methods for safe unicycle path following around obstacles in numerical simulations.
\end{abstract}

\section{Introduction}
\label{sec.Introduction}

Autonomous mobile robots are key enablers for flexible automation in many various applications settings, including logistics \cite{renan_nascimento_RAS2021, fiorini_botturi_ISR2008} and service industries \cite{kim_etal_RAM2009, jones_RAM2006}. 
Safe and smooth autonomous motion around obstacles is crucial for mobile robots to perform automation tasks in complex environments, including interaction with people  and other mobile robots \cite{gul_rahiman_alhady_sahal_CE2019, snape_etal_IROS2010, philippsen_siegwart_ICRA2003}. 
Accurate motion prediction plays a key role in safety assessment, planning, and controlling autonomous robot motion around obstacles \cite{chakravarthy_debasish_TSM1998, fox_burgard_thrun_RAM1997, fiorini_shiller_IJRR1998, arslan_koditschek_ICRA2017, isleyen_vandewouw_arslan_RAL2022, arslan_isleyen_arXiv2023, li_yi_niu_atanasov_arXiv2023}.

In this paper, we consider a standard unicycle control approach using angular feedback linearization and introduce an explicit, accurate triangular motion bound on the resulting closed-loop unicycle trajectory. 
This prediction is based on an analytical estimation of the total turning effort and final orientation of the unicycle control. 
The improved accuracy of the triangular motion prediction, compared to alternative circular and conic motion predictions (illustrated in \reffig{fig.Unicycle_Feedback_Motion_Prediction}), results from its stronger dependence on unicycle state and control gains. 
We apply these unicycle motion prediction methods for safe path-following control around obstacles.

\begin{figure}[t]
\centering
\begin{tabular}{@{}c@{\hspace{0.5mm}}c@{\hspace{0.5mm}}c@{\hspace{1mm}}c@{}}
\includegraphics[width = 0.33\columnwidth]{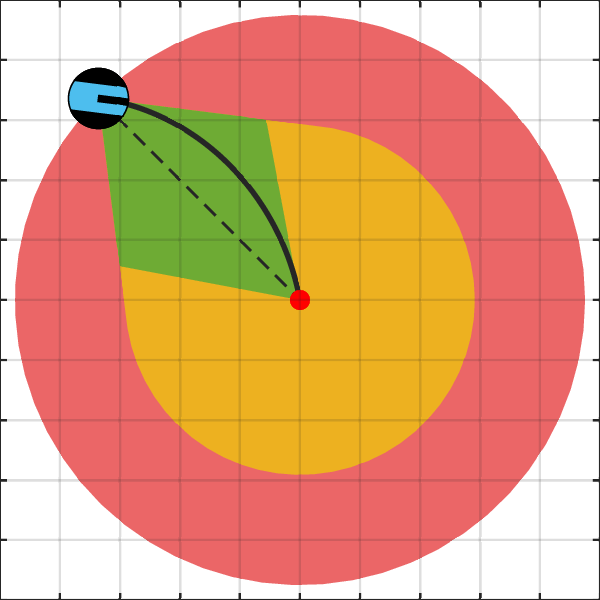} & 
\includegraphics[width = 0.33\columnwidth]{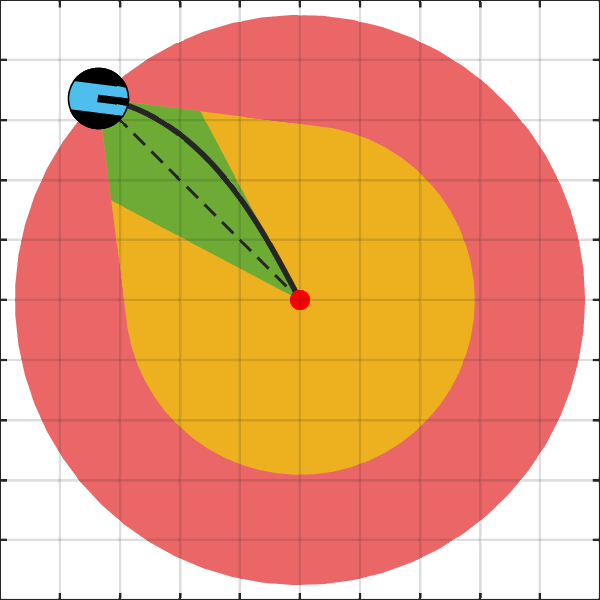} &
\includegraphics[width = 0.33\columnwidth]{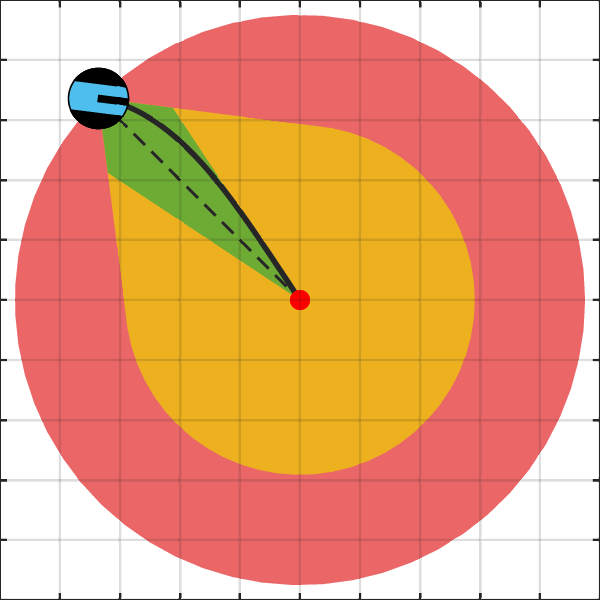} 
\end{tabular}
\vspace{-3.5mm}
\caption{Unicycle feedback motion predictions that bound the closed-loop unicycle motion trajectory (black line) towards a given goal  position (red dot) using circular (red), conic (orange), and triangular (green) motion sets for a shared linear control gain of $\lingain= 1$ and various angular control gains (left)   $\anggain = 1 $,   (middle) $\anggain = 2$  and (right) $\anggain = 3 $.
The triangular motion prediction varies as the control parameters change, mainly due to its stronger dependency on the control gains.}
\label{fig.Unicycle_Feedback_Motion_Prediction}
\vspace{-4.5mm}
\end{figure}

\subsection{Motivation and Relevant Literature}

Designing safe and smooth autonomous robot motion requires systematic understanding and characterization  of closed-loop robot motion under a feedback motion controller.
Existing control approaches for unicycle mobile robots primarily focus on the stability and convergence of closed-loop robot motion \cite{astolfi_JDSMC1999, astolfi_SCL1996, lee_etal_IROS2000, deluca_oriolo_vandittelli_IFAC2000, das_etal_TRA2002, novel_campion_bastin_IJRR1995}, but pay little attention to the geometric motion characteristics that are crucial for safety \cite{isleyen_vandewouw_arslan_IROS2023, isleyen_vandewouw_arslan_CDC2023}.
In our earlier work \cite{isleyen_vandewouw_arslan_IROS2023}, we present a family of conic feedback motion range bounds for a standard inner-outer-loop unicycle control approach \cite{astolfi_JDSMC1999} as a more accurate alternative to the standard circular Lyapunov sublevel sets.
This improvement is mainly due to the observation that, in addition to the straight-line Euclidean distance to the goal, the orientational goal alignment distance decreases during the closed-loop unicycle motion.   
In a follow-up work \cite{isleyen_vandewouw_arslan_CDC2023}, we introduce a new unicycle adaptive headway motion control approach based on feedback linearization with a headway point.
We demonstrate that under this adaptive headway control, the closed-loop unicycle motion can be accurately bounded by a triangular region defined by the convex hull of the unicycle position, the goal position, and the headway point.
This stronger dependency on the unicycle state and the control parameter (i.e., the headway point) allows for a simpler and more accurate motion range bound for the adaptive headway control.   
In this present paper, we aim to bridge the gap between the unicycle control and motion prediction methods in our previous works \cite{isleyen_vandewouw_arslan_IROS2023} and \cite{isleyen_vandewouw_arslan_CDC2023}.
To achieve this, we explore another standard inner-outer-loop unicycle motion control approach based on angular feedback linearization \cite{lee_etal_IROS2000}.
We provide an explicit measure to determine both the total turning effort and the final orientation during the unicycle control. 	 
Using the knowledge of the final unicycle orientation, we build a new accurate triangular motion range bound that surpasses alternative conic and circular motion range bounds due to its stronger dependence on the unicycle state and the control gains.

Predicting the future motion of autonomous systems is essential for ensuring safety, control, and planning of mobile robots navigating around obstacles \cite{lefevre_vasquez_laugier_ROBOMECH2014}.
Feedback motion prediction for finding a bounding motion set on the closed-loop motion trajectory of a mobile robot moving under a known control policy, allows for informative safety assessment and effective control and planning strategies around obstacles \cite{arslan_arXiv2022, isleyen_vandewouw_arslan_RAL2022, arslan_isleyen_arXiv2023,arslan_koditschek_ICRA2017}. 
Reachability analysis provides numerical methods for estimating such motion bounds for a wide range of control systems \cite{althoff_dolan_TR02014, althoff_frehse_girard_ARCRAS2021}.
However, it often involves high computational costs, making it less suitable for real-time motion planning and control, and lacks intuitive understanding and explicit characterization of closed-loop system motion.
For globally convergent autonomous systems, the concept of forward and backward reachable sets \cite{mitchell_HSCC2007} is straightforward. 
This is because the forward reachable set corresponds to the closed-loop system trajectory, given the autonomous nature of the system. 
Meanwhile, the backward reachability set encompasses the entire state space, thanks to global convergence.
In robotics, open-loop motion prediction based on forward system simulation or high-level motion planning using simple physical motion models (e.g., constant velocity, acceleration, and turning rates) \cite{schubert_richter_wanielik_ICIF2008}, or predefined/learned motion patterns (termed as motion primitives and maneuvers) \cite{schreier_willert_adamy_TITS2016, bennewitz_etal_IJRR2005}, also finds significant applications. However, such open-loop motion prediction methods are often not suitable for verifiable safety assessment, planning, and control.
In this paper, we propose new analytic (circular, conic, and triangular) motion prediction methods to bound the closed-loop unicycle robot motion by exploiting the geometric characteristic properties of the unicycle control. We apply these feedback motion prediction methods for verifiably safe unicycle path following around obstacles and compare their performance with the numerically computed forward reachable set of the closed-loop unicycle dynamics.

\subsection{Contributions and Organization of the Paper}

This paper introduces new explicit model-based methods for determining total turning effort and feedback motion prediction for safe unicycle control around obstacles.
In \refsec{sec.Unicycle_Dynamics_Control}, we briefly summarize a standard globally convergent control approach for the kinematic unicycle model using angular feedback linearization.
In \refsec{sec.Unicycle_Motion_Prediction}, we introduce an explicit way of determining the total turning effort and the final unicycle orientation under this unicycle control approach.
Through a systematic analysis of closed-loop unicycle motion, we design a highly accurate analytical triangular feedback motion prediction method that outperforms  circular and conic alternatives. 
In \refsec{sec.Safe Unicycle_Path_Following}, to compare these feedback motion prediction methods, we provide an example application of these unicycle feedback motion prediction methods for safe path following around obstacles in numerical simulations.
We conclude in \refsec{sec.Conclusions} with a summary of our contributions and future directions.

\section{Unicycle Dynamics \& Control}
\label{sec.Unicycle_Dynamics_Control}

In this section, we provide a brief description of the kinematic unicycle robot model and present a standard unicycle control approach using angular feedback linearization. 
We then highlight important geometric properties of the closed-loop unicycle motion to build an intuitive understanding and  characterize the resulting motion patterns.     

\subsection{Kinematic Unicycle Robot Model}
\label{sec.KinematicUnicycleRobotModel}

Consider a kinematic unicycle robot moving in a two-dimensional planar Euclidean space $\R^2$ whose state is represented by its position $\pos \in \R^2$ and forward orientation angle $\ort \in [ -\pi, \pi )$, measured in radians counterclockwise from the horizontal axis.
The equations of nonholonomic motion of the kinematic unicycle robot model are given by
\begin{align} \label{eq.UnicycleDynamics}
\dot{\pos} = \linvel \ovect{\ort} \quad \text{and} \quad \dot{\ort} = \angvel 
\end{align}
where  $\linvel \in \R$ and $\angvel \in \R$ are the scalar control inputs that respectively specify the linear and angular velocity of the unicycle robot.
Note that the kinematic unicycle robot model is underactuated (i.e.,  it has three state variables, but only two control inputs) and is subject to the nonholonomic motion constraint of no sideways motion, i.e., $\nvecTsmall{\ort}\dot{\pos} = 0$.

\subsection{Unicycle Control via Angular Feedback Linearization}

A standard angular navigation objective towards a given goal position $\goal \in \R^2$ involves minimizing the angular heading error $\headingerror_{\goal}(\pos, \ort)$ of a unicycle state $(\pos, \ort) \in \R^2 \times [-\pi, \pi)$, which is defined as the counterclockwise angle from the unicycle heading direction to the line passing through the unicycle position $\pos$ and the goal position $\goal$ as
\begin{equation}\label{eq.Angular_Heading_Error}
\headingerror_{\goal}(\pos, \ort) := \arctan\plist{\!\nvecTsmall{\ort}\!\! (\goal\!-\!\pos) \! \bigg / \! \ovecTsmall{\ort}\!\! (\goal\!-\!\pos)\!\!}\!\!\!
\end{equation}
where $\arctan : \mathbb{R} \rightarrow \blist{-\frac{\pi}{2}, \frac{\pi}{2}} $ denotes the inverse tangent function and $\tr{(.)}$ is the transpose operator. 
To resolve indeterminacy, we set $\headingerror_{\goal}(\pos, \ort) = 0$ for $\pos = \goal$.

Under the unicycle dynamics in \refeq{eq.UnicycleDynamics}, the angular heading error $\headingerror_{\goal}(\pos, \ort)$ away from the goal (i.e., $\pos \!\neq\! \goal$) evolves as\reffn{fn.AngularHeadingErrorDynamics}
\begin{align}\label{eq.Angular_Heading_Error_Dynamics}
\dot{\headingerror}_{\goal}(\pos, \ort) =  -\angvel + \linvel \nvecTsmall{\ort}\! \frac{\goal-\pos}{\norm{\goal - \pos}^2}  
\end{align} 
which follows from the chain and quotient rules of differentiation and the standard trigonometric differentiation identities. 
Hence, following a greedy navigation strategy for decreasing the Euclidean distance to the goal \cite{astolfi_JDSMC1999} and angular feedback linearization \cite{lee_etal_IROS2000}, we design a bidirectional unicycle motion controller, denoted by $\ctrl_{\goal}(\pos, \ort) = \plist{\linvel_{\goal}(\pos,\ort), \angvel_{\goal}(\pos,\ort)}$, that determines the linear velocity input $\linvel_{\goal}(\pos,\ort)$ and the angular velocity input $\angvel_{\goal}(\pos,\ort)$  for the kinematic unicycle model in \refeq{eq.UnicycleDynamics} to move towards to the goal position $\goal$ as\reffn{fn.Angular_Heading_Error_Trigonometry}
\begin{subequations}\label{eq.Bidirectional_Unicycle_Control}
\begin{align}
\linvel_{\goal}(\pos, \ort) &= \lingain \ovecTsmall{\ort}\! \!(\goal-\pos)\!\!
\\
\angvel_{\goal}(\pos, \ort) &= \anggain \headingerror_{\goal}(\pos, \ort) + \frac{\lingain}{2}\sin(2\headingerror_{\goal}(\pos, \ort)) 
\\
& \hspace{-9mm}=  \anggain \headingerror_{\goal}(\pos, \ort) + \lingain \nvecTsmall{\ort}\! \frac{\goal-\pos}{\norm{\goal \!-\! \pos}}  \ovecTsmall{\ort}\! \frac{\goal\!-\!\pos}{\norm{\goal\! - \!\pos}}  \!\! \!\nonumber 
\end{align}
\end{subequations}
where  $\headingerror_{\goal}(\pos, \ort)$ is the angular heading error defined in \refeq{eq.Angular_Heading_Error},  and $\lingain > 0$ and $\anggain >0$ are positive scalar control gains for the linear and angular velocity, respectively.
Note that the bidirectional unicycle controller in \refeq{eq.Bidirectional_Unicycle_Control} steers the unicycle either forward or backward, depending on which direction allows the robot to decrease its distance to the goal as
\begin{align}\label{eq.Distance_To_Goal_Dynamics}
\frac{\diff}{\diff t} \norm{\goal - \pos}^2 = - 2\lingain \plist{\ovecTsmall{\ort}\! \!(\goal-\pos)\!}^{\!2} \leq 0. 
\end{align}
Additionally, this bidirectional unicycle controller in \refeq{eq.Bidirectional_Unicycle_Control} ensures linear heading error dynamics as 
\begin{align}\label{eq.Linear_Angular_Heading_Error_Dynamics}
\dot{\headingerror}_{\goal}(\pos, \ort) = - \anggain \headingerror_{\goal}(\pos, \ort).
\end{align}
\addtocounter{footnote}{1}
\footnotetext{\label{fn.AngularHeadingErrorDynamics}The time rate of change of the angular heading error can be obtained as
\begin{align*}
\dot{\headingerror}_{\goal}(\pos, \ort) &= \frac{\diff}{\diff t}  \arctan\plist{\!\nvecTsmall{\ort}\!\! (\goal\!-\!\pos) \! \bigg / \! \ovecTsmall{\ort}\!\! (\goal\!-\!\pos)\!\!} 
\\
& = \frac{1}{\norm{\goal - \pos}^2}  \ovecTsmall{\ort}\!\! (\goal\!-\!\pos) \frac{\diff}{\diff t} \plist{\nvecTsmall{\ort}\!\! (\goal\!-\!\pos)\!\!} 
\\
& \quad \quad  - \frac{1}{\norm{\goal - \pos}^2} \nvecTsmall{\ort}\!\! (\goal\!-\!\pos) \frac{\diff}{\diff t} \plist{\ovecTsmall{\ort}\!\! (\goal\!-\!\pos)\!\!} 
\\
& = -\frac{1}{\norm{\goal - \pos}^2} \plist{\!\!\plist{\ovecTsmall{\ort}\!\! (\goal\!-\!\pos)\!\!}^{\!\!2} \!\! + \! \plist{\nvecTsmall{\ort}\!\! (\goal\!-\!\pos)\!\!}^{\!\!2}} \angvel 
\\
& \quad \quad +  \frac{1}{\norm{\goal - \pos}^2} \nvecTsmall{\ort}\!\! (\goal\!-\!\pos) \linvel
\\
& = - \angvel + \linvel \nvecTsmall{\ort}\! \frac{\goal-\pos}{\norm{\goal - \pos}^2}
\end{align*}
using the following relations
\begin{align*}
\norm{\goal - \pos}^2 &= \plist{\nvecTsmall{\ort}\!\! (\goal-\pos)\!\!}^2 +  \plist{\ovecTsmall{\ort}\!\! (\goal-\pos)\!\!}^2
\\
\frac{\diff}{\diff t} \nvecTsmall{\ort}\!\! (\goal\!-\!\pos) & = -\angvel\ovecTsmall{\ort}\!\! (\goal\!-\!\pos) + \linvel\nvecTsmall{\ort} \ovectsmall{\ort} 
\\
& = -\angvel\ovecTsmall{\ort}\!\! (\goal\!-\!\pos)
\\
\frac{\diff}{\diff t} \ovecTsmall{\ort}\!\! (\goal\!-\!\pos) & = \angvel  \nvecTsmall{\ort}\!\! (\goal\!-\!\pos) - \linvel\ovecTsmall{\ort} \ovectsmall{\ort} 
\\
& =  \angvel  \nvecTsmall{\ort}\!\! (\goal\!-\!\pos) - \linvel.
\end{align*}
}
\addtocounter{footnote}{1}
\footnotetext{\label{fn.Angular_Heading_Error_Trigonometry}It follows from the definition of $\headingerror_{\goal}(\pos, \ort)$ in \refeq{eq.Angular_Heading_Error} that
\begin{align*}
\tfrac{1}{2} \sin\plist{2 \headingerror_{\goal}(\pos, \ort)} &= \sin(\headingerror_{\goal}(\pos, \ort))\cos(\headingerror_{\goal}(\pos, \ort)) 
\\
&= \nvecTsmall{\ort}\! \frac{\goal \! - \! \pos}{\norm{\goal  \! -\! \pos}}  \ovecTsmall{\ort}\! \frac{\goal \! - \! \pos}{\norm{\goal \!-\! \pos}}. 
\end{align*}
}
Therefore, one can conclude the global convergence of the bidirectional unicycle control from the decreasing Euclidean distance to the goal and the angular heading error as follow.

\vspace{-2mm}

\begin{lemma} \label{lem.Global_Convergence}
\emph{(Global Convergence)}
The bidirectional unicycle motion control $\ctrl_{\goal}$ in \refeq{eq.Bidirectional_Unicycle_Control} asymptotically brings all unicycle states \mbox{$(\pos, \ort) \!\in\! \R^{2}\!\!\times\! [-\pi, \pi)$} to any given goal position \mbox{$\goal \!\in\! \R^2$}, i.e., the closed-loop unicycle position trajectory  $\pos(t)$ satisfies
\begin{equation}\label{eq.Global_Convergence}
\lim_{t \rightarrow \infty} \pos(t)  = \goal.
\end{equation}
\end{lemma}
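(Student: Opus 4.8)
The plan is to combine the two closed-loop invariants already derived in the excerpt: the nonincreasing squared distance-to-goal in \refeq{eq.Distance_To_Goal_Dynamics} and the exponentially decaying heading error in \refeq{eq.Linear_Angular_Heading_Error_Dynamics}. Writing $V(t) := \norm{\goal - \pos(t)}^2$ for the candidate Lyapunov function, \refeq{eq.Distance_To_Goal_Dynamics} gives $\dot V \leq 0$, so $V$ is bounded and converges to some limit $L \geq 0$; it then remains only to rule out $L > 0$. The subtle point is that the distance stalls ($\dot V = 0$) on the entire set of configurations where $\ovecTsmall{\ort}(\goal - \pos) = 0$, i.e., where the heading is perpendicular to the line toward the goal, so monotonicity of $V$ alone is not enough. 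The crux is to show that the exponential decay of the heading error forces the trajectory to spend too little time near such perpendicular configurations for $V$ to remain bounded away from zero.

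First I would record the elementary geometric identity obtained by resolving $\goal - \pos$ in the orthonormal body frame $\{\ovectsmall{\ort}, \nvectsmall{\ort}\}$: from the definition \refeq{eq.Angular_Heading_Error} one has $\tan \headingerror_{\goal} = \nvecTsmall{\ort}(\goal-\pos) / \ovecTsmall{\ort}(\goal-\pos)$, and combining this with $(\ovecTsmall{\ort}(\goal-\pos))^2 + (\nvecTsmall{\ort}(\goal-\pos))^2 = \norm{\goal-\pos}^2$ yields $(\ovecTsmall{\ort}(\goal-\pos))^2 = \norm{\goal-\pos}^2 \cos^2 \headingerror_{\goal}$ (valid regardless of sign). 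Substituting into \refeq{eq.Distance_To_Goal_Dynamics} turns the distance dynamics into the scalar linear time-varying equation $\dot V = -2\lingain \cos^2\!\big(\headingerror_{\goal}(t)\big)\, V$, whose explicit solution is $V(t) = V(0)\exp\!\big(-2\lingain \int_0^t \cos^2 \headingerror_{\goal}(\tau)\,\diff\tau\big)$.

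Next I would exploit \refeq{eq.Linear_Angular_Heading_Error_Dynamics}, which integrates to $\headingerror_{\goal}(t) = \headingerror_{\goal}(0)\, e^{-\anggain t}$ and hence $\headingerror_{\goal}(t) \to 0$. Consequently $\cos^2 \headingerror_{\goal}(t) \to 1$, so there is a time $T$ beyond which $\cos^2 \headingerror_{\goal}(t) \geq \tfrac{1}{2}$; this makes the exponent diverge, $\int_0^\infty \cos^2 \headingerror_{\goal}(\tau)\,\diff\tau = \infty$, and therefore $V(t) \to 0$, i.e., $\pos(t) \to \goal$, which is exactly \refeq{eq.Global_Convergence}. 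A short preliminary remark handles degeneracies: if $\pos(t) = \goal$ at some finite time, then $(\goal, \ort)$ is an equilibrium of \refeq{eq.Bidirectional_Unicycle_Control} and the claim is immediate; otherwise $\pos(t) \neq \goal$ for all $t$, so \refeq{eq.Distance_To_Goal_Dynamics} and \refeq{eq.Linear_Angular_Heading_Error_Dynamics} hold for all time and the argument above applies verbatim.

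The step I expect to be the main obstacle is exactly the one the integral-divergence argument resolves: because $\dot V$ vanishes on the whole family of perpendicular-heading configurations, one must use the \emph{quantitative} decay of $\headingerror_{\goal}$ to certify that these stalling configurations are only transient. An equivalent route is LaSalle's invariance principle---trajectories are bounded since $V$ is nonincreasing and $\ort$ lives on a compact circle, and the largest invariant set inside $\{\dot V = 0\}$ reduces to $\{\pos = \goal\}$ because $\headingerror_{\goal} = \pm\tfrac{\pi}{2}$ there contradicts the invariance demanded by $\dot\headingerror_{\goal} = -\anggain \headingerror_{\goal}$. I nonetheless prefer the explicit scalar-ODE computation, since it yields $V(t)\to 0$ directly and avoids invoking the invariance principle on a state space where the heading-error map is discontinuous at perpendicular headings.
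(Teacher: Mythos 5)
Your proposal is correct, but it takes a genuinely different route from the paper. The paper's proof uses the combined Lyapunov function $V_{\goal}(\pos, \ort) = \headingerror_{\goal}(\pos, \ort)^2 + \norm{\goal - \pos}^2$, shows via \refeq{eq.Distance_To_Goal_Dynamics} and \refeq{eq.Linear_Angular_Heading_Error_Dynamics} that $\dot{V}_{\goal} = -2\anggain \headingerror_{\goal}^2 - 2\lingain \plist{\ovecTsmall{\ort}\!(\goal - \pos)}^2 < 0$ away from the goal (the key point being that $\headingerror_{\goal} = 0$ forces $\plist{\ovecTsmall{\ort}\!(\goal-\pos)}^2 = \norm{\goal - \pos}^2$), and then invokes LaSalle's invariance principle. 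You instead exploit the cascade structure: solve the heading-error dynamics explicitly, $\headingerror_{\goal}(t) = \headingerror_{\goal}(0)e^{-\anggain t}$, substitute the identity $\plist{\ovecTsmall{\ort}\!(\goal-\pos)}^2 = \norm{\goal-\pos}^2\cos^2\headingerror_{\goal}$ into \refeq{eq.Distance_To_Goal_Dynamics} to get the scalar linear time-varying equation $\dot V = -2\lingain \cos^2(\headingerror_{\goal}(t))\,V$, and conclude from divergence of $\int_0^\infty \cos^2\headingerror_{\goal}(\tau)\,\diff\tau$ that $V(t) \to 0$. Both arguments rest on the same two closed-loop facts, \refeq{eq.Distance_To_Goal_Dynamics} and \refeq{eq.Linear_Angular_Heading_Error_Dynamics}, so neither is more demanding in its inputs; what yours buys is that it is fully explicit and quantitative (it yields an eventual exponential decay rate for $\norm{\goal - \pos(t)}^2$, namely rate at least $\lingain$ once $\absval{\headingerror_{\goal}(t)} \leq \pi/4$), and it sidesteps the invariance principle, which is a mildly delicate tool here because the closed-loop vector field is discontinuous at perpendicular headings ($\absval{\headingerror_{\goal}} = \tfrac{\pi}{2}$), a point the paper's one-line appeal to LaSalle glosses over. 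What the paper's approach buys is brevity and a single stability certificate valid for the joint state, rather than a two-stage argument. Your handling of the degenerate case $\pos(t_0) = \goal$ is sound (and in fact unnecessary in the strict sense: $\dot V \geq -2\lingain V$ shows the goal is never reached in finite time from $\pos_0 \neq \goal$), so there is no gap.
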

\begin{proof}
See \refapp{app.Global_Convergence}.
\end{proof}
\noindent Most existing unicycle control methods  \cite{astolfi_JDSMC1999, astolfi_SCL1996, lee_etal_IROS2000, deluca_oriolo_vandittelli_IFAC2000, das_etal_TRA2002, novel_campion_bastin_IJRR1995} are capable of establishing a global convergence guarantee to any given goal position. 
However, there are few examples \cite{isleyen_vandewouw_arslan_CDC2023} that allow for the estimation of the final orientation and the total turning effort during the motion. 
In \refsec{sec.Unicycle_Motion_Prediction} below, we demonstrate how angular feedback linearization in \refeq{eq.Linear_Angular_Heading_Error_Dynamics} facilitates the estimation of the total turning and motion range of the closed-loop unicycle motion.

\section{\!\!\!\! Unicycle Total Turning {\small and} Motion Prediction\!\!}
\label{sec.Unicycle_Motion_Prediction}

In this section, we show that the total turning effort of the unicycle control by angular feedback linearization can be explicitly determined in terms of the initial angular heading error and control parameters. This enables a proper selection of control gains to achieve (when desired, e.g., for exploration) or avoid (when undesired, e.g., for minimizing control effort) spiral circulation around the goal and establish an accurate motion range bound on the unicycle motion. 

\subsection{Unicycle Total Turning Effort}
\label{sec.Unicycle_Total_Turning_Effort}

The closed-loop linear heading error dynamics in \refeq{eq.Linear_Angular_Heading_Error_Dynamics} enable the explicit determination of the signed total turning effort of the bidirectional unicycle controller in \refeq{eq.Bidirectional_Unicycle_Control}.

\begin{proposition}\label{prop.Total_Turning_Effort}
\emph{(Total Turning Effort)} Starting at $t = 0$ from any initial unicycle state $(\pos_0, \ort_{0}) \in \R^2 \times [-\pi, \pi)$ towards any goal position $\goal \in \R^2$, the signed total turning effort of the unicycle control $\ctrl_{\goal}$ in \refeq{eq.Bidirectional_Unicycle_Control} along the closed-loop trajectory $(\pos(t), \ort(t))$ is defined as the infinite integral of the angular velocity input $\angvel_{\goal}(\pos(t), \ort(t))$ and is explicitly given by
\begin{subequations} \label{eq.Total_Turning_Effort}
\begin{align}
\totalturning_{\goal}(\pos_0, \ort_0)&:= \int_{0}^{\infty} \angvel_{\goal}(\pos(t), \ort(t)) \diff t 
\\
&\,\,=   \headingerror_{\goal}(\pos_0, \ort_{0}) + \frac{\lingain}{2\anggain}\ \Si(2\headingerror_{\goal}(\pos_0, \ort_{0}))
\end{align}
\end{subequations}
where $\headingerror_{\goal}(\pos, \ort) \! \in\! [-\tfrac{\pi}{2}, \tfrac{\pi}{2}]$ is the angular heading error function  in \refeq{eq.Angular_Heading_Error}, $\lingain$ and $\anggain$ are constant positive control gains, and $\Si(x):=\int_0^x  \frac{\sin(t)}{t} \diff t $ is the sine integral function.
\end{proposition}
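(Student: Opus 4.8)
The plan is to reduce the infinite integral defining $\totalturning_{\goal}$ to an elementary computation by exploiting the closed-loop linear heading error dynamics in \refeq{eq.Linear_Angular_Heading_Error_Dynamics}. First, I would integrate the scalar ODE $\dot{\headingerror}_{\goal} = -\anggain \headingerror_{\goal}$ along the closed-loop trajectory $(\pos(t), \ort(t))$ to obtain the explicit exponentially decaying solution
\begin{equation*}
\headingerror_{\goal}(\pos(t), \ort(t)) = \headingerror_{\goal}(\pos_0, \ort_0)\, e^{-\anggain t},
\end{equation*}
which tends to zero as $t \to \infty$ and, since $\anggain > 0$, remains within $[-\tfrac{\pi}{2}, \tfrac{\pi}{2}]$ throughout the motion.

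Next, I would substitute the angular velocity input from \refeq{eq.Bidirectional_Unicycle_Control}, namely $\angvel_{\goal} = \anggain \headingerror_{\goal} + \tfrac{\lingain}{2}\sin(2\headingerror_{\goal})$, into the defining integral and split it into two terms. The linear term integrates immediately: either by computing $\int_0^\infty \anggain \headingerror_{\goal}(\pos_0,\ort_0)\, e^{-\anggain t}\, \diff t = \headingerror_{\goal}(\pos_0,\ort_0)$ directly, or, more slickly, by observing that $\anggain \headingerror_{\goal} = -\dot{\headingerror}_{\goal}$ along the trajectory, so that $\int_0^\infty \anggain \headingerror_{\goal}\, \diff t = \headingerror_{\goal}(\pos_0, \ort_0) - \headingerror_{\goal}(\infty) = \headingerror_{\goal}(\pos_0, \ort_0)$ using the convergence $\headingerror_{\goal}(\infty) = 0$.

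For the sinusoidal term, the key step is the change of variables $u = 2\headingerror_{\goal}(\pos(t),\ort(t)) = 2\headingerror_{\goal}(\pos_0,\ort_0)\, e^{-\anggain t}$, which gives $\diff u = -\anggain u\, \diff t$ and maps the limits $t: 0 \to \infty$ to $u: 2\headingerror_{\goal}(\pos_0,\ort_0) \to 0$. This transforms the integral into
\begin{equation*}
\frac{\lingain}{2}\int_0^\infty \sin\plist{2\headingerror_{\goal}}\, \diff t = \frac{\lingain}{2\anggain}\int_0^{2\headingerror_{\goal}(\pos_0,\ort_0)} \frac{\sin u}{u}\, \diff u = \frac{\lingain}{2\anggain}\,\Si\plist{2\headingerror_{\goal}(\pos_0,\ort_0)},
\end{equation*}
which recovers the sine integral function directly from its definition. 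Summing the two contributions then yields \refeq{eq.Total_Turning_Effort}.

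The computation itself is short, so the main obstacle is conceptual rather than technical: recognizing that the substitution $u = 2\headingerror_{\goal}(t)$ is precisely what turns the oscillatory contribution into $\Si$, with the exponential decay rate $\anggain$ supplying the $\tfrac{1}{u}$ weight. A minor technical point I would verify is convergence of the improper integral, which holds because $\headingerror_{\goal}(t)$ decays exponentially and hence the integrand behaves like $(\anggain + \lingain)\,\headingerror_{\goal}(\pos_0,\ort_0)\, e^{-\anggain t}$ for large $t$.
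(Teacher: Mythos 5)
Your proof is correct and follows essentially the same route as the paper's: solving the linear heading error dynamics \refeq{eq.Linear_Angular_Heading_Error_Dynamics} to get the exponential decay $\headingerror_{\goal}(\pos(t),\ort(t)) = \headingerror_{\goal}(\pos_0,\ort_0)e^{-\anggain t}$, splitting the integral of $\angvel_{\goal}$ into its linear and sinusoidal parts, and applying the substitution $u = 2\headingerror_{\goal}(\pos_0,\ort_0)e^{-\anggain t}$ to produce the sine integral term. The paper's proof in \refapp{app.Total_Turning_Effort} uses exactly this decomposition and change of variables, so there is nothing to add.
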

\begin{proof}
See \refapp{app.Total_Turning_Effort}.
\end{proof}

Note that the magnitude of the signed total turning effort is the same as the total absolute turning, i.e.,
{
\begin{align*}
\absval{\totalturning_{\goal}(\pos_0, \ort_0)} \!=\! \absval{\int_{0}^{\infty} \!\!\! \!\angvel_{\goal}(\pos(t), \ort(t)\!) \diff t} 
\!=  \!\int_{0}^{\infty} \!\!\!\! \absval{\angvel_{\goal}(\pos(t), \ort(t)\!)} \diff t
\end{align*}
}%
which is due to the monotonicity of the angular velocity input $\angvel_{\goal}(\pos, \ort)$ in \refeq{eq.Bidirectional_Unicycle_Control} with respect to the angular heading error $\headingerror_{\goal}(\pos, \ort)$, and the linearity of the angular heading error dynamics $\dot{\headingerror}_{\goal}(\pos, \ort)$ in \refeq{eq.Linear_Angular_Heading_Error_Dynamics}.
Moreover, the magnitude of the total turning effort $\totalturning_{\goal}(\pos_0, \ort_0)$ can be expressed and linearly bounded from above and below in terms of the magnitude of the initial angular heading error $\headingerror_{\goal}(\pos_0, \ort_0)$ as
\begin{align}
\!\absval{\totalturning_{\goal}(\pos_0, \ort_0)} &= \absval{\headingerror_{\goal}(\pos_0, \ort_0)} + \tfrac{\lingain}{2\anggain} \absval{\Si(2\headingerror_{\goal}(\pos_0, \ort_0))} \!\!\! \label{eq.Turning_Effort_Heading_Error_Equality}
\\
\!\absval{\headingerror_{\goal}(\pos_0, \ort_0)} &\leq \absval{\totalturning_{\goal}(\pos_0, \ort_0)} \leq (1 \! +\! \tfrac{\lingain}{\anggain}) \absval{\headingerror_{\goal}(\pos_0, \ort_0)} \!\!\! \label{eq.Turning_Effort_Heading_Error_Inequality}
\end{align}
which follows from the fact that the sine integral function $\Si(x)$ is monotone increasing over $\blist{-\pi, \pi}$, since $\frac{\sin(x)}{x} \geq 0$ for any $x \in [-\pi, \pi]$, and it is linearly bounded as $\absval{\Si(x)} \leq \absval{x}$ over $[-\pi, \pi]$, as illustrated in \reffig{fig.Sine_Integral}.
Therefore, in cases where it is undesirable, for example, to minimize control effort and travel distance, one can avoid spiral circulation around the goal by setting $\lingain \leq \anggain$, ensuring that $\absval{\totalturning_{\goal}(\pos_0, \ort_0)} \! \leq \! 2\absval{\headingerror_{\goal}(\pos_0, \ort_0)}\! \leq\! \pi$. Similarly, in cases where it is desirable to explore the goal region while approaching the goal, as seen in nature with insects \cite{boyadzhiev_CMJ1999, muller_wehner_JCP1994}, one can achieve spiral circulation around the goal by setting $\lingain > \anggain$.

\begin{figure}[t]
\centering
\includegraphics[width = 0.85\columnwidth]{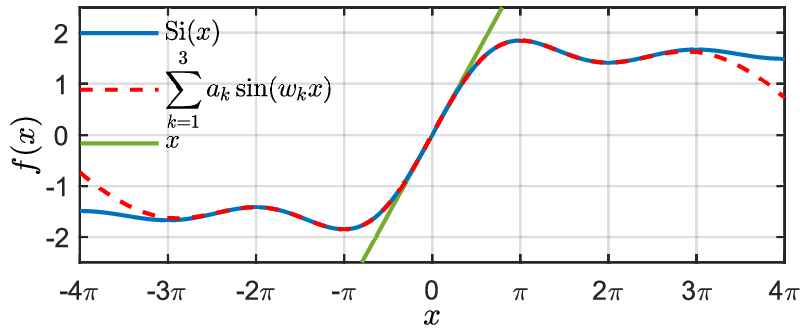} 
\vspace{-3.5mm}
\caption{Approximation of the sine integral function $\si(x)$ over $[-\pi, \pi]$ using a weight sum of three sinusoidal functions, where the optimal weights and frequencies are obtained using nonlinear least squares optimization as   $a_1 \!=\! 1.964 , a_2 \!=\! 0.553 , a_3 \! =\! 0.189$ and  $\omega_1 \!=\! 0.235, \omega_2 \!=\! 0.656, \omega_3 \!=\! 0.931$, corresponding to a root mean square error of $6.1 \times 10^{-4}$.}
\label{fig.Sine_Integral}
\vspace{-4mm}
\end{figure}

The explicit form of the total turning effort $\totalturning_{\goal}(\pos_0, \ort_0)$ in \refeq{eq.Total_Turning_Effort} also allows for determining the final unicycle orientation, denoted by $\ort^*_{\goal}(\pos_0, \ort_0)$, when the robot asymptotically reaches to the goal $\goal$ (up to the equivalence of angles) as\footnote{A unicycle orientation $\ort$ is an element of $\R/ \sim$ where two angles $\alpha, \beta \in \R$ are equivalent, denoted by $\alpha \sim \beta$, if and only if $\alpha = \beta + 2\pi k$ for some integer $k \in \Z$. Hence, the final orientation $\ort^*_{\goal}(\pos_0, \ort_0)$ in $[-\pi, \pi)$ satisfies
\begin{align*} \label{eq.Final_Orientation_Angle}
\fort_{\goal}(\pos_0, \ort_0) & \sim \lim_{t \rightarrow \infty} \ort(t) = \ort_0 + \totalturning_{\goal}(\pos_0, \ort_0)
\\ 
& = \!\! \mod\plist{\ort_0 + \totalturning_{\goal}(\pos_0, \ort_0) + \pi, 2\pi} - \pi
\end{align*}
where $\mod$ denotes the modulo operator.
}
\begin{align}
\ort^*_{\goal}(\pos_0, \ort_0) := \lim_{t \rightarrow \infty} \ort(t) &= \ort_0 + \totalturning_{\goal}(\pos_0, \ort_0).
\end{align}
The final unicycle orientation plays a key role in accurately bounding the closed-loop unicycle position trajectory later in \refprop{prop.Triangular_Motion_Bound} since it determines the approach angle to the goal for $\frac{\lingain}{2\anggain} \Si(2 \headingerror_{\goal}(\pos_0, \ort_0)\!) \!\in\! [-\frac{\pi}{2}, \frac{\pi}{2}]$ (e.g., when $\lingain \leq \anggain$) because  the angular heading error of the final orientation  is%
\footnote{The angular heading error of the final unicycle orientation can be over-approximated  using the linear bound $\absval{\Si(x)} \leq \absval{x}$ for $x \in [-\tfrac{\pi}{2}, \tfrac{\pi}{2}]$~as 
\begin{align*}
|\headingerror_{\goal}(\pos_0, \ort^*_{\goal}(\pos_0, \ort_0)) | \leq \tfrac{\lingain}{\anggain} \absval{\headingerror_{\goal}(\pos_0, \ort_0)} 
\end{align*}
for $\lingain \leq \anggain$, which can be used to overestimate the unicycle heading line intersections in \refeq{eq.Heading_Line_Intersection} to construct an analytical over-approximation of the triangular motion bounds in \refeq{eq.Triangular_Motion_Bound} and \refeq{eq.Diamond-Shaped_Motion_Prediction} without using the $\Si$ function.
} 
\begin{align}
\headingerror_{\goal}(\pos_0, \ort^*_{\goal}(\pos_0, \ort_0)) = - \tfrac{\lingain}{2\anggain} \Si(2 \headingerror_{\goal}(\pos_0, \ort_0))
\end{align}
for  $\frac{\lingain}{2\anggain} \Si(2 \headingerror_{\goal}(\pos_0, \ort_0)) \in [-\frac{\pi}{2}, \frac{\pi}{2}]$,  which is due to the following properties of the angular heading error $\headingerror_{\goal}(\pos, \ort)$
\begin{align}
&\headingerror_{\goal}(\pos, \ort + \headingerror_{\goal}(\pos, \ort)) = 0
\\
&\headingerror_{\goal}(\pos, \ort + \headingerror_{\goal}(\pos, \ort) + \ort') = - \ort' \quad \forall \ort' \in [-\tfrac{\pi}{2}, \tfrac{\pi}{2}]
\end{align} 
Hence, for $\lingain \leq \anggain$, the total turning effort and the current and final angular heading errors are related to each other as 
\begin{subequations} \label{eq.Total_Turning_Initila_Final_Heading_Error_Equality} 
\begin{align}
\totalturning_{\goal}(\pos, \ort) &= \headingerror_{\goal}(\pos, \ort) - \headingerror_{\goal}(\pos, \fort_{\goal}(\pos, \ort)\!)
\\
\absval{\totalturning_{\goal}(\pos, \ort)} &= \absval{\headingerror_{\goal}(\pos, \ort)} +\absval{\headingerror_{\goal}(\pos, \fort_{\goal}(\pos, \ort)\!)}.
\end{align}
\end{subequations}
since the current and final heading errors, respectively, have  the same and opposite signs with the total turning effort, i.e.,  $\totalturning_{\goal}(\pos, \ort) \headingerror_{\goal}(\pos, \ort)\! \geq\! 0$ and $\headingerror_{\goal}(\pos, \ort) \headingerror_{\goal}(\pos, \fort_{\goal}(\pos, \ort)\!) \!\leq\! 0$.

\begin{table}[t]
\caption{Sine Integral Approximation as a Sum of Sinusoidals \\ $\si(x) \approx \sum_{k = 1}^{n} a_k \sin (\omega_k x)$}
\label{tab.Sine_Integral_Approximation}
  \centering
  \vspace{-3mm}
  \begin{tabular}{|c|c|c|c|c|c|c|c|}
    \hline
    $n$ &$a_1$&$\omega_1$&$a_2$&$\omega_2$&$a_3$&$\omega_3$  & RMSE \\
    \hline
    1 &1.839 &0.535 & -&- &- &- & $ 8.0\! \times\! 10^{-3}\!$\\
    \hline
    2 &1.931 &0.330 &0.424 &0.854 & - &-& $ 1.3 \!\times\! 10^{-3}\!$\\
    \hline
    3 &1.964 &0.235 &0.553 &0.656 &0.189 &0.931  &$ 6.1 \!\times\! 10^{-4}\!$\\
    \hline
  \end{tabular}
  \vspace{-3mm}
\end{table}


Finally, it is useful to note that the sine integral function is bounded above by $2$ and satisfies $\Si(\infty) = \frac{\pi}{2}$ \cite{nikiforov1988special}. 
However, the sine integral $ \Si(x) = \int_0^x  \frac{\sin(t)}{t} \diff t $ is a well-known example of an integral that does not have an elementary function as its anti-derivative \cite{nikiforov1988special}.
It is available as a built-in function in many programming languages (e.g., $\mathtt{sinint}$ in MATLAB).
One can also accurately approximate the sine integral function over $[-\pi, \pi]$ as a weighted combination of sinusoidals by applying nonlinear least squares optimization to minimize $\int\limits_{-\pi}^{\pi} \!\!\plist{\si(x) \!-\! \sum\nolimits_{k=0}^{n} \!a_k\sin(\omega_k x)\!}^2  dx$ as in \reftab{tab.Sine_Integral_Approximation} and \reffig{fig.Sine_Integral}.

\subsection{Unicycle Motion Range Prediction}
\label{sec.Unicycle_Motion_Range_Prediction}

The monotone decrease of the distance to the goal in \refeq{eq.Distance_To_Goal_Dynamics}, the exponential decay of the angular heading error in \refeq{eq.Angular_Heading_Error_Dynamics}, the explicit form of the total turning error in \refeq{eq.Total_Turning_Effort} and the final orientation angle in \refeq{eq.Final_Orientation_Angle} allow us to establish circular, conic, and triangular motion range bounds on the closed-loop unicycle position trajectory. 
To highlight  key geometric characteristics of the closed-loop unicycle motion, we find it useful to introduce several fundamental geometric elements that define the unicycle motion range, illustrated in \reffig{fig.Unicycle_Motion_Prediction_Elements}.  


\begin{lemma}\label{lem.Projected_Goal}
\emph{(Projected Goal on Heading Line)}
The closest point $\cpos_{\goal}(\pos, \ort)$ of the heading line $\headingline(\pos, \ort)$ of a unicycle state $(\ort, \pos)$ to the goal position $\goal$, and its reflection $\cposr_{\goal}(\pos, \ort)$ with respect to the goal line $\blist{\pos, \goal}$ are given in terms of the angular heading error in \refeq{eq.Angular_Heading_Error} by
{\small
\begin{subequations}\label{eq.Projected_Goal}
\begin{align}
&\!\!\scalebox{0.97}{$
\cpos_{\goal}(\pos, \ort)\!:= \pos + \cos(\headingerror_{\goal}(\pos,\ort)\!) \Rmat(-\headingerror_{\goal}(\pos,\ort)\!) (\goal \!-\! \pos)$} \!\!  
\\
&
\!\!\scalebox{0.97}{$\cposr_{\goal}(\pos, \ort)\!:= \pos + \cos(\headingerror_{\goal}(\pos,\ort)\!) \Rmat(+\headingerror_{\goal}(\pos,\ort)\!) (\goal \!-\! \pos)$} \!\!
\end{align}
\end{subequations}
}%
where $\headingline(\pos, \ort)\!:=\!\clist{\pos \!+\! \alpha \scalebox{0.85}{$\ovectsmall{\ort}$} \Big | \alpha \!\in\! \R }$ is the line that passes through $\pos$ with orientation $\ort$, the goal line passing through $\pos$ and $\goal$ is denoted by $\blist{\pos, \goal}\!:=\!\clist{\alpha \pos \!+\! (1 \!-\! \alpha) \goal \big| \alpha \!\in\! \R}$, and  \mbox{$\Rmat(\theta)\! := \!\scalebox{0.8}{$\begin{bmatrix}
\cos \theta & - \sin \theta \\ \sin \theta & \cos\theta
\end{bmatrix}$}$} is the 2D rotation matrix. 
\end{lemma}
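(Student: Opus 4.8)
The plan is to verify the two claimed closed-form expressions directly by constructing the relevant points geometrically and matching them against the formulas. First I would observe that the heading line $\headingline(\pos,\ort)$ is the ray/line through $\pos$ in the direction $\ovectsmall{\ort}$, so its closest point to $\goal$ is the orthogonal projection of $\goal$ onto this line, namely $\cpos_{\goal}(\pos,\ort) = \pos + \plist{\ovecTsmall{\ort}(\goal-\pos)} \ovectsmall{\ort}$. The task is then to show this standard projection equals the claimed rotation-based expression $\pos + \cos(\headingerror_{\goal})\,\Rmat(-\headingerror_{\goal})(\goal-\pos)$.

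The key step is a change of viewpoint using the angular heading error. By the definition of $\headingerror_{\goal}$ in \refeq{eq.Angular_Heading_Error}, the angle from $\ovectsmall{\ort}$ to the goal direction $(\goal-\pos)/\norm{\goal-\pos}$ is exactly $\headingerror_{\goal}(\pos,\ort)$. Hence rotating the goal offset $(\goal-\pos)$ clockwise by $\headingerror_{\goal}$, i.e.\ applying $\Rmat(-\headingerror_{\goal})$, aligns it with the heading direction $\ovectsmall{\ort}$; concretely $\Rmat(-\headingerror_{\goal})(\goal-\pos) = \norm{\goal-\pos}\,\ovectsmall{\ort}$. Scaling by $\cos(\headingerror_{\goal})$ and noting that the scalar projection satisfies $\ovecTsmall{\ort}(\goal-\pos) = \norm{\goal-\pos}\cos(\headingerror_{\goal})$ then gives $\cos(\headingerror_{\goal})\,\Rmat(-\headingerror_{\goal})(\goal-\pos) = \plist{\ovecTsmall{\ort}(\goal-\pos)}\ovectsmall{\ort}$, which is precisely the projection offset. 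This establishes the first identity.

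For the reflection $\cposr_{\goal}(\pos,\ort)$, I would argue that reflecting $\cpos_{\goal}$ across the goal line $\blist{\pos,\goal}$ fixes $\pos$ and reflects the offset $\cpos_{\goal}-\pos$ about the goal direction. Since $\cpos_{\goal}-\pos$ makes angle $-\headingerror_{\goal}$ relative to the goal direction (it lies along $\ovectsmall{\ort}$, which sits at angle $-\headingerror_{\goal}$ from $(\goal-\pos)$), its reflection across the goal direction makes angle $+\headingerror_{\goal}$ and has the same length $\cos(\headingerror_{\goal})\norm{\goal-\pos}$. This swaps $\Rmat(-\headingerror_{\goal})$ for $\Rmat(+\headingerror_{\goal})$ in the formula, yielding the second identity directly from the first by symmetry.

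The main obstacle I anticipate is bookkeeping the sign conventions: $\headingerror_{\goal}$ is a \emph{signed} counterclockwise angle from the heading direction to the goal direction, so the rotation that carries the goal offset onto the heading line is $\Rmat(-\headingerror_{\goal})$ rather than $\Rmat(+\headingerror_{\goal})$, and getting this backwards would flip both formulas. I would pin this down once, carefully, using the explicit matrix form of $\Rmat(\theta)$ together with the trigonometric identities $\ovecTsmall{\ort}(\goal-\pos)=\norm{\goal-\pos}\cos\headingerror_{\goal}$ and $\nvecTsmall{\ort}(\goal-\pos)=\norm{\goal-\pos}\sin\headingerror_{\goal}$ (both immediate from \refeq{eq.Angular_Heading_Error}), and then treat the reflection case as a mirror image. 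The degenerate case $\pos=\goal$ is handled by the convention $\headingerror_{\goal}=0$, under which both formulas collapse to $\pos$, consistent with the projection of a point onto a line through itself.
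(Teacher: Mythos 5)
Your overall route --- identify $\cpos_{\goal}(\pos,\ort)$ as the orthogonal projection $\pos + \plist{\ovecTsmall{\ort}(\goal-\pos)}\ovectsmall{\ort}$, match it against the rotation formula, and obtain the reflection by symmetry --- is the same direct geometric verification the paper uses, and your reflection step (reflecting about the goal line conjugates $\Rmat(-\headingerror_{\goal})$ into $\Rmat(+\headingerror_{\goal})$) is sound. The gap is in the two identities your matching step relies on. Because $\headingerror_{\goal}$ is defined through $\arctan$ with range $[-\tfrac{\pi}{2},\tfrac{\pi}{2}]$, it is the angle from the heading to the goal \emph{line}, not to the goal vector $\goal-\pos$. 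Consequently, your claims ``$\Rmat(-\headingerror_{\goal})(\goal-\pos) = \norm{\goal-\pos}\,\ovectsmall{\ort}$'' and ``$\ovecTsmall{\ort}(\goal-\pos) = \norm{\goal-\pos}\cos\headingerror_{\goal}$'' hold only when $\ovecTsmall{\ort}(\goal-\pos) \geq 0$, i.e., when the goal lies in the robot's closed forward half-plane. That restriction appears nowhere in the lemma, and it is not vacuous: the controller is bidirectional, the robot drives backward precisely when $\ovecTsmall{\ort}(\goal-\pos) < 0$, and the lemma feeds the conic motion prediction of \refprop{prop.Circular_Conic_Motion_Prediction}, which must cover such states.

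Concretely, set $a = \ovecTsmall{\ort}(\goal-\pos)$, $b = \nvecTsmall{\ort}(\goal-\pos)$, $r = \norm{\goal-\pos}$. If $a < 0$, then $\cos\headingerror_{\goal} = -a/r$ and $\sin\headingerror_{\goal} = -b/r$, so $\Rmat(-\headingerror_{\goal})(\goal-\pos) = -r\,\ovectsmall{\ort}$ and $\ovecTsmall{\ort}(\goal-\pos) = -r\cos\headingerror_{\goal}$: each of your identities is off by a sign, and the ``angle from $\ovectsmall{\ort}$ to the goal direction'' exceeds $\tfrac{\pi}{2}$ in magnitude while $\headingerror_{\goal}$ cannot. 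The lemma's formula nevertheless remains valid because the two sign errors cancel in the product, $\cos(\headingerror_{\goal})\Rmat(-\headingerror_{\goal})(\goal-\pos) = (-a/r)(-r\,\ovectsmall{\ort}) = a\,\ovectsmall{\ort}$, which is exactly the projection offset. So the statement is true in general, but your proof as written establishes it only on the forward half-plane. The repair is small: carry out the computation with the sign-correct relations $\cos\headingerror_{\goal} = \absval{a}/r$ and $\sin\headingerror_{\goal} = \sgn(a)\, b/r$, or split into the cases $a \geq 0$ and $a < 0$ and note the cancellation in the latter.
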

\begin{proof}
See \refapp{app.Projected_Goal}.
\end{proof}


\begin{lemma}\label{lem.Heading_Line_Intersection}
\emph{(Heading Line Intersection)}
For control gains $\lingain \leq \anggain$,  the intersection point $\xpos_{\goal}(\pos, \ort)$ of the current heading line $\headingline(\pos, \ort)$  and the final heading line $\headingline(\goal, \fort_{\goal}(\pos, \ort))$  of a unicycle state $(\pos, \ort)$, and its reflection $\xposr_{\goal}(\pos, \ort)$ with respect to the goal line $\blist{\pos, \goal}$ are given by\reffn{fn.HeadingLineIntersection}
{\small
\begin{subequations} \label{eq.Heading_Line_Intersection}
\begin{align}
\!\!\scalebox{0.95}{$\xpos_{\goal}(\pos, \ort)$}  
& \scalebox{0.95}{$:= \pos \!-\! \scalebox{0.9}{$\dfrac{\sin(\headingerror_{\goal}(\pos, \fort_{\goal}(\pos, \ort)\!)\!)}{\sin(\totalturning_{\goal}(\pos, \ort)\!)}$}  \Rmat(-\headingerror_{\goal}(\pos, \ort)\!) (\goal\! - \!\pos)$}\! \nonumber
\\
& \scalebox{0.95}{$ = \! \goal \!-\! \scalebox{0.85}{$\dfrac{\sin(\headingerror_{\goal}(\pos, \ort)\!)}{\sin(\totalturning_{\goal}(\pos, \ort)\!)}$} \Rmat(-\headingerror_{\goal}(\pos, \fort_{\goal}(\pos, \ort)\!)\!) (\goal \!-\! \pos)$} \!\!
\\
\!\!\scalebox{0.95}{$\xposr_{\goal}(\pos, \ort)$} 
& \scalebox{0.95}{$:= \pos \!-\! \scalebox{0.9}{$\dfrac{\sin(\headingerror_{\goal}(\pos, \fort_{\goal}(\pos, \ort)\!)\!)}{\sin(\totalturning_{\goal}(\pos, \ort)\!)}$}  \Rmat(+\headingerror_{\goal}(\pos, \ort)\!) (\goal\! - \!\pos)$}\! \nonumber
\\
& \scalebox{0.95}{$ = \goal \!-\! \scalebox{0.85}{$\dfrac{\sin(\headingerror_{\goal}(\pos, \ort))}{\sin(\totalturning_{\goal}(\pos, \ort))}$} \Rmat(+\headingerror_{\goal}(\pos, \fort_{\goal}(\pos, \ort)\!)\!) (\goal \!-\! \pos)$}.  \!
\end{align}
\end{subequations}
}%
\end{lemma}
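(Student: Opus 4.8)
The plan is to recover $\xpos_{\goal}(\pos,\ort)$ as the unique solution of the $2\times 2$ system placing a point on both lines, and then to repackage that solution with the turning identity $\totalturning_{\goal}(\pos,\ort)=\headingerror_{\goal}(\pos,\ort)-\headingerror_{\goal}(\pos,\fort_{\goal}(\pos,\ort))$ from \refeq{eq.Total_Turning_Initila_Final_Heading_Error_Equality}. Writing the current heading line as $\pos+t\,\ovectsmall{\ort}$ and the final heading line as $\goal+s\,\ovectsmall{\fort_{\goal}(\pos,\ort)}$, and using $\fort_{\goal}(\pos,\ort)=\ort+\totalturning_{\goal}(\pos,\ort)$, a common point must satisfy
\[ t\,\ovectsmall{\ort}-s\,\ovectsmall{\ort+\totalturning_{\goal}(\pos,\ort)}=\goal-\pos. \]
By the definition of the heading error in \refeq{eq.Angular_Heading_Error}, the goal offset is $\goal-\pos=\epsilon\,\norm{\goal-\pos}\,\ovectsmall{\ort+\headingerror_{\goal}(\pos,\ort)}$, where $\epsilon=\pm1$ records whether the goal lies ahead of or behind the heading (the bidirectional folding of $\headingerror_{\goal}$ into $[-\tfrac{\pi}{2},\tfrac{\pi}{2}]$). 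Carrying this sign is the only nonstandard convention, and I expect it to cancel at the end.

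Next I would solve the system with the planar determinant $a\times b:=a_1b_2-a_2b_1$, for which $\ovectsmall{\alpha}\times\ovectsmall{\beta}=\sin(\beta-\alpha)$. Taking $(\cdot)\times\ovectsmall{\fort_{\goal}(\pos,\ort)}$ cancels the $s$-term, and since $(\ort+\totalturning_{\goal}(\pos,\ort))-(\ort+\headingerror_{\goal}(\pos,\ort))=-\headingerror_{\goal}(\pos,\fort_{\goal}(\pos,\ort))$ it leaves $t=-\epsilon\,\norm{\goal-\pos}\,\sin(\headingerror_{\goal}(\pos,\fort_{\goal}(\pos,\ort)))/\sin(\totalturning_{\goal}(\pos,\ort))$. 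Symmetrically, $(\cdot)\times\ovectsmall{\ort}$ cancels the $t$-term and leaves $s=-\epsilon\,\norm{\goal-\pos}\,\sin(\headingerror_{\goal}(\pos,\ort))/\sin(\totalturning_{\goal}(\pos,\ort))$. Substituting $t$ into $\pos+t\,\ovectsmall{\ort}$ and $s$ into $\goal+s\,\ovectsmall{\fort_{\goal}(\pos,\ort)}$ gives the two candidate expressions, still written along the heading directions $\ovectsmall{\ort}$ and $\ovectsmall{\fort_{\goal}(\pos,\ort)}$.

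It then remains to turn these heading directions into rotations of the goal offset through $\Rmat(-\headingerror_{\goal}(\pos,\ort))(\goal-\pos)=\epsilon\,\norm{\goal-\pos}\,\ovectsmall{\ort}$ and $\Rmat(-\headingerror_{\goal}(\pos,\fort_{\goal}(\pos,\ort)))(\goal-\pos)=\epsilon\,\norm{\goal-\pos}\,\ovectsmall{\fort_{\goal}(\pos,\ort)}$, both read off from the projection relation in \reflem{lem.Projected_Goal}. The factor $\epsilon$ here cancels the $\epsilon$ in $t$ and $s$, so the result is sign-free and reproduces exactly the two forms of $\xpos_{\goal}$ in \refeq{eq.Heading_Line_Intersection} with no case split. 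The reflected point $\xposr_{\goal}$ then requires no new computation: reflection across the goal line $\blist{\pos,\goal}$ fixes $\pos$, $\goal$ and $\goal-\pos$ while sending $\ovectsmall{\ort}$ to $\ovectsmall{\ort+2\headingerror_{\goal}(\pos,\ort)}$, i.e.\ it negates every heading error and so replaces $\Rmat(-\headingerror_{\goal}(\pos,\ort))$ and $\Rmat(-\headingerror_{\goal}(\pos,\fort_{\goal}(\pos,\ort)))$ by their positive-angle counterparts.

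The main obstacle is not the algebra but the well-definedness of the intersection, which is exactly where the hypothesis $\lingain\le\anggain$ enters: division by $\sin(\totalturning_{\goal}(\pos,\ort))$ is legitimate only when the two heading lines are neither parallel nor coincident. From \refeq{eq.Turning_Effort_Heading_Error_Inequality} this gain condition gives $\absval{\totalturning_{\goal}(\pos,\ort)}\le 2\absval{\headingerror_{\goal}(\pos,\ort)}\le\pi$, and since $\absval{\Si(x)}<\absval{x}$ for $x\neq 0$ the bound is in fact strict, $\absval{\totalturning_{\goal}(\pos,\ort)}<\pi$, so that $\sin(\totalturning_{\goal}(\pos,\ort))\neq 0$ whenever $\headingerror_{\goal}(\pos,\ort)\neq 0$. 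The excluded value $\headingerror_{\goal}(\pos,\ort)=0$ is the degenerate configuration in which both lines collapse onto the goal line and the intersection is immaterial for the motion bound. Carefully discharging this non-degeneracy, alongside the folded-sign bookkeeping noted above, is where the genuine care lies.
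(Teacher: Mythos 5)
Your proposal is correct, and it reaches \refeq{eq.Heading_Line_Intersection} by a genuinely different route than the paper. The paper argues synthetically: it notes that $\pos$, $\goal$, and $\xpos_{\goal}(\pos,\ort)$ form a triangle whose interior angles are $\absval{\headingerror_{\goal}(\pos,\ort)}$, $\absval{\headingerror_{\goal}(\pos,\fort_{\goal}(\pos,\ort)\!)}$, and $\pi-\absval{\totalturning_{\goal}(\pos,\ort)}$ (consistent by the unsigned identity in \refeq{eq.Total_Turning_Initila_Final_Heading_Error_Equality}), applies the sine theorem to fix the side lengths, and then identifies the unit directions of $\xpos_{\goal}-\pos$ and $\xpos_{\goal}-\goal$ as rotations of the goal direction. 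You instead parametrize both heading lines and solve the $2\times2$ intersection system by planar cross products (Cramer's rule), which is the signed algebraic core of that same sine rule; both arguments pivot on the same signed identity $\totalturning_{\goal}=\headingerror_{\goal}(\pos,\ort)-\headingerror_{\goal}(\pos,\fort_{\goal}(\pos,\ort)\!)$ and on the same role of the gain condition $\lingain\leq\anggain$ (guaranteeing $\absval{\totalturning_{\goal}}<\pi$, hence intersecting lines and $\sin(\totalturning_{\goal})\neq0$ away from degeneracy). What your route buys: sign bookkeeping is automatic rather than implicit --- the folding factor $\epsilon=\sgn(\ovecTsmall{\ort}(\goal-\pos)\!)$ cancels against the identical factor hidden in $\Rmat(-\headingerror_{\goal}(\pos,\ort)\!)(\goal-\pos)=\epsilon\norm{\goal-\pos}\ovectsmall{\ort}$, so no absolute values or case splits over angle signs are needed, and the two equivalent forms (anchored at $\pos$ and at $\goal$) emerge symmetrically from the two line parameters $t$ and $s$; you also justify the \emph{strictness} of $\absval{\totalturning_{\goal}}<\pi$ via $\absval{\Si(x)}<\absval{x}$ for $x\neq0$, which the paper asserts directly from the non-strict bound \refeq{eq.Turning_Effort_Heading_Error_Inequality}. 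What the paper's route buys: the explicit triangle with its interior angles is precisely the geometric object reused in the proofs of \refprop{prop.Triangular_Motion_Bound}, \refprop{prop.Radial_Decay}, and \refprop{prop.Inclusion_Order}, so its proof doubles as setup for those results. Your handling of the reflection (reflecting across the goal line negates all heading errors, flipping the rotation signs while leaving the ratio $\sin(\headingerror_{\goal}(\pos,\fort_{\goal}(\pos,\ort)\!)\!)/\sin(\totalturning_{\goal}(\pos,\ort)\!)$ invariant) and of the degenerate case $\headingerror_{\goal}(\pos,\ort)=0$ is at least as explicit as the paper's, which relegates the latter to a footnote.
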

\begin{proof}
See \refapp{app.Heading_Line_Intersection}.
\end{proof}

\begin{figure}[t]
\centering
\begin{tabular}{@{}c@{\hspace{2mm}}c@{}}
\multirow{2}{*}[ 0.215\columnwidth]{\includegraphics[width = 0.515\columnwidth]{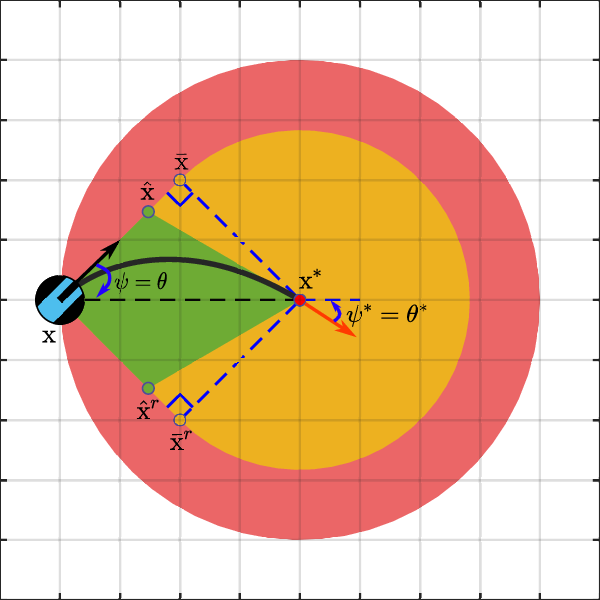}} & 
\includegraphics[width = 0.25\columnwidth]{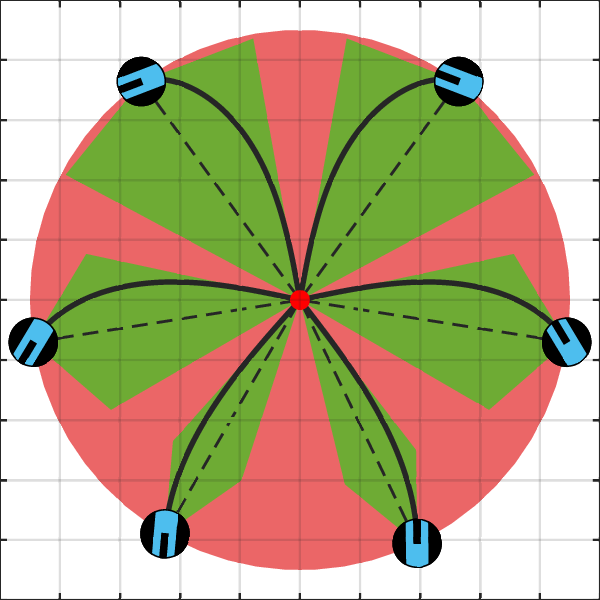} 
\\ 
& \includegraphics[width = 0.25\columnwidth]{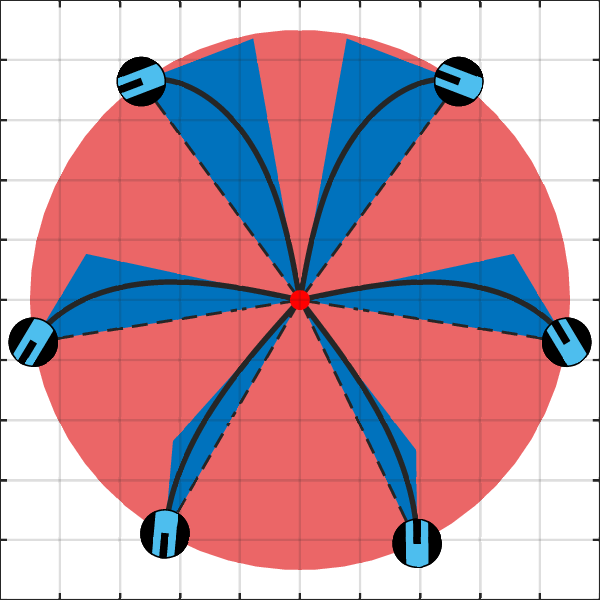} 
\end{tabular}
\vspace{-1mm}
\caption{Key characteristic geometric points of the closed-loop unicycle motion determining triangular motion bounds (blue), which are symmetrized to obtain diamond-shaped motion prediction (green), are compared to the circular (red) and conic (orange) motion predictions. The initial and final orientation and heading errors, relative to the straight line between the initial position and the goal positions, are depicted with the black and red arrows.}
\label{fig.Unicycle_Motion_Prediction_Elements}
\vspace{-3mm}
\end{figure}

A bounded total turning effort, for example, $\absval{\totalturning_{\goal}(\pos, \ort)} < \pi$, can be achieved by setting control gains as  $\lingain \leq \anggain$.
This facilitates the establishment of a simple triangular motion range bound on the closed-loop unicycle position trajectory.

\begin{proposition} \label{prop.Triangular_Motion_Bound}
(Triangular Motion Bound)
If the linear velocity gain is less than or equal to the angular velocity gain, i.e., $\lingain \leq \anggain$, the closed-loop unicycle position trajectory $\pos(t)$ under the unicycle control $\ctrl_{\goal}$ in \refeq{eq.Bidirectional_Unicycle_Control}, starting at $t = 0$ from any unicycle state $(\pos_0, \ort_0) \in \R^2 \times [-\pi, \pi)$ towards any given goal position $\goal \in \R^2$, is contained in the convex hull of the initial unicycle position $\pos_0$, the goal position $\goal$, and the intersection point $\xpos_{\goal}(\pos_0, \ort_0)$ of the initial and final  heading lines in \refeq{eq.Heading_Line_Intersection}, i.e., 
\begin{align}\label{eq.Triangular_Motion_Bound}
\pos(t) \in \conv\plist{\pos_0, \goal, \xpos_{\goal}(\pos_0, \ort_0)} \quad \forall t \geq 0
\end{align}
where $\conv$ denotes the convex hull operator.
\end{proposition}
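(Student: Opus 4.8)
The plan is to realize the triangle $\conv(\pos_0,\goal,\xpos_{\goal}(\pos_0,\ort_0))$ as the intersection of three closed half-planes, one per edge, and to verify that the closed-loop position trajectory $\pos(t)$ satisfies all three linear inequalities for every $t\ge 0$. First I would dispose of degenerate cases: if $\pos_0=\goal$ the robot is already at the goal, and if $\headingerror_0 := \headingerror_{\goal}(\pos_0,\ort_0)=0$ then by the linear heading-error dynamics \refeq{eq.Linear_Angular_Heading_Error_Dynamics} the heading error stays zero, so $\pos(t)$ is the straight segment $[\pos_0,\goal]$ and lies trivially in the hull. By the reflection symmetry of the problem about the goal line I may assume $\headingerror_0>0$. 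Moreover, since $\ovecT{\ort}(\goal-\pos)$ vanishes exactly when $\headingerror_{\goal}=\pm\tfrac{\pi}{2}$, which by \refeq{eq.Linear_Angular_Heading_Error_Dynamics} cannot occur for $t>0$, the sign of $\linvel_{\goal}$ is constant along the trajectory; a backward-moving run is converted to a forward-moving one by the relabeling $\ort\mapsto\ort+\pi$, which leaves every heading line, and hence all three triangle vertices, unchanged. Thus I would work in the forward-motion regime with $\headingerror_{\goal}(\pos(t),\ort(t))=\headingerror_0\,e^{-\anggain t}>0$ throughout.

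Next I would set up the kinematic bookkeeping. Writing $\alpha(t)$ for the polar angle of $\goal-\pos(t)$, forward motion gives $\alpha=\ort+\headingerror_{\goal}$, so that $\dot\ort=\angvel_{\goal}=\anggain\headingerror_{\goal}+\tfrac{\lingain}{2}\sin(2\headingerror_{\goal})>0$ and, by \refeq{eq.Linear_Angular_Heading_Error_Dynamics}, $\dot\alpha=\dot\ort+\dot\headingerror_{\goal}=\tfrac{\lingain}{2}\sin(2\headingerror_{\goal})>0$. Hence the heading $\ort(t)$ increases strictly and monotonically from $\ort_0$ to $\fort_{\goal}(\pos_0,\ort_0)=\ort_0+\totalturning_{\goal}(\pos_0,\ort_0)$, the total sweep satisfying $\totalturning_{\goal}\in(0,\pi)$ by $\lingain\le\anggain$ through \refeq{eq.Turning_Effort_Heading_Error_Inequality}, while $\alpha(t)$ increases from $\alpha_0:=\alpha(0)$ to $\alpha_0+\tfrac{\lingain}{2\anggain}\Si(2\headingerror_0)$. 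These monotonicities together with $\totalturning_{\goal}<\pi$ are the only facts the half-plane arguments need. Two of the three edges then follow at once: tracking $\nvecT{\ort_0}(\pos(t)-\pos_0)$, whose derivative $\linvel_{\goal}\sin(\ort-\ort_0)$ is nonnegative because $\ort-\ort_0\in[0,\totalturning_{\goal})\subset[0,\pi)$ and $\linvel_{\goal}>0$, keeps the trajectory on the side of $\headingline(\pos_0,\ort_0)$ containing $\goal$; symmetrically, $\nvecT{\fort_{\goal}}(\pos(t)-\goal)$ has nonpositive derivative since $\ort-\fort_{\goal}\in(-\totalturning_{\goal},0]\subset(-\pi,0]$, and being strictly decreasing with limit $0$ as $t\to\infty$ (where $\pos\to\goal$) it stays nonnegative, keeping the trajectory on the side of $\headingline(\goal,\fort_{\goal})$ containing $\pos_0$.

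The remaining edge, the goal line through $\pos_0$ and $\goal$, is the crux and the step I expect to be the main obstacle, because the signed distance $s(t):=\nvecT{\alpha_0}(\pos(t)-\pos_0)$ is not monotone. Its derivative $\linvel_{\goal}\sin(\ort-\alpha_0)$ has the sign of $\ort-\alpha_0$, the argument remaining in $(-\pi,\pi)$ since its range $(-\headingerror_0,\tfrac{\lingain}{2\anggain}\Si(2\headingerror_0))$ sits inside $(-\tfrac{\pi}{2},\tfrac{\pi}{2})$. Here I would use $\tfrac{d}{dt}(\ort-\alpha_0)=\dot\ort-\dot\alpha=\anggain\headingerror_{\goal}>0$, so $\ort-\alpha_0$ increases strictly from $-\headingerror_0<0$ to a positive limit, crossing zero exactly once. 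Consequently $s$ first decreases and then increases, while $s(0)=0$ and $s(t)\to 0$ as $t\to\infty$; a function that starts and ends at $0$ with a single down-then-up turn is nonpositive, yielding $s(t)\le 0$ for all $t$. This single-sign-change argument is the only place where the non-monotone behaviour of the distance to the goal line must be controlled, and hence the delicate part of the proof.

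Finally I would confirm that these three inequalities are oriented inward, i.e. each edge's opposite vertex satisfies the inequality derived for the trajectory: $\goal$ lies on the $\nvecT{\ort_0}(\cdot-\pos_0)\ge 0$ side, $\pos_0$ on the $\nvecT{\fort_{\goal}}(\cdot-\goal)\ge 0$ side, and, using the explicit form of $\xpos_{\goal}(\pos_0,\ort_0)$ in \refeq{eq.Heading_Line_Intersection} together with $\headingerror_{\goal}(\pos_0,\fort_{\goal})=-\tfrac{\lingain}{2\anggain}\Si(2\headingerror_0)<0$, one checks $s(\xpos_{\goal})\le 0$. Thus the three half-planes cut out exactly $\conv(\pos_0,\goal,\xpos_{\goal}(\pos_0,\ort_0))$, and since $\pos(t)$ satisfies all three inequalities for every $t\ge 0$, the bound \refeq{eq.Triangular_Motion_Bound} follows.
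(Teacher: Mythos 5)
Your proposal is correct, and it takes a genuinely different route from the paper's proof. The paper argues by set invariance: it invokes the Nagumo sub-tangentiality condition \cite{blanchini_Automatica1999} and performs a case analysis over the edges of $\conv\plist{\pos_0, \goal, \xpos_{\goal}(\pos_0,\ort_0)}$, showing that whenever the unicycle is on the boundary its velocity, which points along $\Rmat(-\headingerror_{\goal}(\pos,\ort))(\goal-\pos)$, aims at a point inside the hull; the key tool there is the decaying heading error combined with the angle comparison \refeq{eq.Angular_Triangle_Geometry}. You instead realize the triangle as an intersection of three half-planes and establish three global scalar inequalities along the entire trajectory, driven by two monotonicity facts the paper never states explicitly: for $\headingerror_{\goal}(\pos_0,\ort_0)>0$ in the forward regime, the orientation $\ort(t)$ is strictly increasing, and the polar angle $\alpha(t)$ of $\goal-\pos(t)$ is also increasing since $\dot\alpha = \tfrac{\lingain}{2}\sin(2\headingerror_{\goal})$. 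The two heading-line edges then follow from one-signed derivatives of linear functionals, and the chord $[\pos_0,\goal]$---the only functional that is not monotone---is settled by your zero-to-zero, single-dip argument. In terms of trade-offs: the paper's boundary argument is shorter and is essentially recycled for the conic bound in \refprop{prop.Circular_Conic_Motion_Prediction}, but it leans on viability theory to convert the boundary condition into invariance; your argument is elementary calculus, self-contained given \reflem{lem.Global_Convergence} and \refprop{prop.Total_Turning_Effort}, and it delivers extra structure (monotone turning, a single crossing of the chord direction) as a byproduct, at the cost of the forward/backward relabeling and angle-lift bookkeeping.

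Two cosmetic slips, neither fatal. First, since $\alpha_0$ is a constant, $\tfrac{\diff}{\diff t}(\ort - \alpha_0) = \dot\ort = \anggain\headingerror_{\goal} + \tfrac{\lingain}{2}\sin(2\headingerror_{\goal})$, not $\dot\ort - \dot\alpha = \anggain\headingerror_{\goal}$; both expressions are positive, so the single-crossing conclusion is unaffected. Second, when $\absval{\headingerror_{\goal}(\pos_0,\ort_0)} = \tfrac{\pi}{2}$ the range of $\ort-\alpha_0$ touches $-\tfrac{\pi}{2}$ (and $\linvel_{\goal}=0$ at $t=0$), so you should state $\sgn(\sin x)=\sgn(x)$ on the closed interval $[-\tfrac{\pi}{2},\tfrac{\pi}{2}]$ rather than claim the range sits inside the open one; similarly, strictness of $\totalturning_{\goal}<\pi$ (needed so that $\sin(\totalturning_{\goal})>0$ in the vertex check for $\xpos_{\goal}$) requires $\absval{\Si(x)}<\absval{x}$ for $x\neq 0$, not just \refeq{eq.Turning_Effort_Heading_Error_Inequality}, though this is exactly the guarantee already supplied by \reflem{lem.Heading_Line_Intersection}.
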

\begin{proof}
See \refapp{app.Triangular_Motion_Bound}.
\end{proof}

\addtocounter{footnote}{1}
\footnotetext{\label{fn.HeadingLineIntersection}
For any unicycle state $(\pos, \ort)$ with nonzero angular heading error relative to the goal $\goal$, i.e., $\headingerror_{\goal}(\pos, \ort) \neq 0$, the intersection point of the current and final heading lines, $\headingline(\pos, \ort)$ and $\headingline(\goal, \fort)$, where $\fort := \fort_{\goal}(\pos, \ort)$,  can be alternatively determined as
\begin{align*}
\xpos_{\goal}(\pos,\ort)& = \pos \!-\! \scalebox{0.88}{$\dfrac{\nvecTsmall{\ort^*}\!\! (\goal\! -\! \pos)}{\nvecTsmall{\ort^*}\!\ovectsmall{\ort}}$}\ovectsmall{\ort} = \goal \!-\! \scalebox{0.88}{$\dfrac{\nvecTsmall{\ort}\!\! (\goal\! -\! \pos)}{\nvecTsmall{\ort}\!\ovectsmall{\ort^*}}$}\ovectsmall{\ort^*}.
\end{align*}
}

Note that  the triangular motion range bound change discontinuously when the angular heading error magnitude is $\pi/2$ (i.e., $\absval{\headingerror_{\goal}(\pos, \ort)} = \frac{\pi}{2}$), with respect to changes in unicycle state and goal position, as seen in \reffig{fig.Unicycle_Motion_Prediction_Elements}.
Hence, below, we consider a symmetrized version of the triangular motion bound by reflecting it around the line passing through the unicycle position and the goal location. 
This results in a continuous feedback motion prediction that ensures Lipschitz continuity in its distance to any given point (see  \refprop{prop.Motion_Prediction_Distance}), which is critical for safe and smooth motion control \cite{isleyen_vandewouw_arslan_RAL2022}.

\begin{definition}\label{def.Diamond-Shape_Motion_Prediction}
\emph{(Diamond-Shaped Motion Prediction)} 
To continuously bound the closed-loop unicycle motion under the unicycle control $\ctrl_{\goal}$ in \refeq{eq.Bidirectional_Unicycle_Control}, for any $\lingain \leq \anggain$ and unicycle state $(\pos, \ort)$, we define the diamond-shaped unicycle motion prediction, denoted by $\motionset_{\ctrl_{\goal, \dshape}}(\pos, \ort)$, as the convex hull of the unicycle position $\pos$, the goal position $\goal$, the intersection point $\xpos_{\goal}(\pos, \ort)$ of the current and final heading lines, and its reflection $\xposr_{\goal}(\pos, \ort)$  with respect to the goal line in \refeq{eq.Heading_Line_Intersection} as
\begin{align}\label{eq.Diamond-Shaped_Motion_Prediction}
\motionset_{\ctrl_{\goal, \dshape}}(\pos, \ort) := \conv\plist{\pos, \goal, \xpos_{\goal}(\pos, \ort), \xposr_{\goal}(\pos, \ort)}.
\end{align}
%
\end{definition}


As the unicycle control in \refeq{eq.Bidirectional_Unicycle_Control} continuously reduces the positional distance to the goal and consistently aligns the orientation with the goal by decreasing the angular heading error, the closed-loop unicycle position trajectory can also be bounded by circular and conic motion sets \cite{isleyen_vandewouw_arslan_IROS2023}. 

\begin{proposition} \label{prop.Circular_Conic_Motion_Prediction}
\emph{(Circular \& Conic Motion Predictions)}
For $\lingain > 0$ and $\anggain > 0$, starting at $t = 0$ from any unicycle state $(\pos_0, \ort_0) \in \R^{2} \times [-\pi, \pi)$ towards any goal position $\goal \in \R^{2}$, the unicycle position along the closed-loop  unicycle trajectory $(\pos(t), \ort(t))$ of the unicycle dynamics in \refeq{eq.UnicycleDynamics} under the bidirectional unicycle control in \refeq{eq.Bidirectional_Unicycle_Control} is contained for all future times in the conic motion prediction set $\motionset_{\ctrl_{\goal}, \cone}(\pos_0, \ort_0)$, contained in the circular motion prediction set $\motionset_{\ctrl_{\goal}, \ball}(\pos_0, \ort_0)$ (i.e., $\pos(t) \in \motionset_{\ctrl_{\goal}, \cone}(\pos_0, \ort_0) \subseteq \motionset_{\ctrl_{\goal}, \ball}(\pos_0, \ort_0)$ for all $t \geq 0$)  that are, respectively, defined as 
\begin{align}
\motionset_{\ctrl_{\goal}, \cone}(\pos_0, \ort_0) &: = \cone(\pos_0, \goal, \absval{\headingerror_{\goal}(\pos_0, \ort_0)}) \label{eq.Circular_Motion_Prediction}
\\
\motionset_{\ctrl_{\goal}, \ball}(\pos_0, \ort_0) & := \ball(\goal, \norm{\pos_0 - \goal})  \label{eq.Conic_Motion_Prediction}
\end{align}
where $\ball(\ctr, \radius):=\clist{\vect{y} \in \R^{2} | \norm{\vect{y} - \ctr} \leq \radius}$ denotes the closed Euclidean ball centered at $\ctr \in \R^{2}$ with radius $\radius \geq 0$, and \mbox{$\cone(\vect{a}, \vect{b}, \theta) \!:=\! \clist{\vect{a} \!+\! \alpha (\vect{z} \!-\! \vect{a}) \big | \alpha \!\in\! [0,1], \vect{z} \in \ball(\vect{b}, \sin(\theta)\norm{\vect{a} \!-\! \vect{b}}) }$} denotes the closed convex cone\reffn{fn.Convex_Cone} with apex point $\vect{a} \in \R^2$, base point $\vect{b} \in \R^{2}$ and cone angle\reffn{fn.Cone_Angle} $\theta \in [0, \pi/2]$. 
\end{proposition}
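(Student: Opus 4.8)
The plan is to prove the chain $\pos(t) \in \motionset_{\ctrl_{\goal}, \cone}(\pos_0, \ort_0) \subseteq \motionset_{\ctrl_{\goal}, \ball}(\pos_0, \ort_0)$ by handling the three pieces separately, deferring the genuinely geometric conic containment to last. The circular bound is immediate: by \refeq{eq.Distance_To_Goal_Dynamics} the squared distance $\norm{\goal - \pos(t)}^2$ is nonincreasing, so $\norm{\goal - \pos(t)} \le \norm{\goal - \pos_0}$ for all $t \ge 0$, i.e. $\pos(t) \in \ball(\goal, \norm{\pos_0 - \goal})$. The nesting $\motionset_{\ctrl_{\goal}, \cone} \subseteq \motionset_{\ctrl_{\goal}, \ball}$ needs no dynamics at all: the outer ball $\ball(\goal, \norm{\pos_0 - \goal})$ is convex, its boundary contains the apex $\pos_0$, and the base ball $\ball(\goal, \sin(\absval{\headingerror_{\goal}(\pos_0, \ort_0)})\norm{\pos_0 - \goal})$ is concentric of radius at most $\norm{\pos_0 - \goal}$ because $\absval{\headingerror_{\goal}} \le \tfrac{\pi}{2}$; since the cone is by definition the convex hull of the apex and the base ball, it lies in the convex outer ball.

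For the conic containment I would argue that the cone is forward invariant along the closed-loop flow and that the trajectory enters it at $t=0$. The two structural facts I would lean on are the linear heading-error decay $\dot{\headingerror}_{\goal} = -\anggain \headingerror_{\goal}$ of \refeq{eq.Linear_Angular_Heading_Error_Dynamics}, which makes $\absval{\headingerror_{\goal}(\pos(t), \ort(t))}$ nonincreasing and sign-preserving, and the distance decay of \refeq{eq.Distance_To_Goal_Dynamics}. Geometrically, the velocity at $\pos_0$ is aligned with the heading, which subtends exactly the angle $\absval{\headingerror_{\goal}(\pos_0, \ort_0)}$ with the goal ray $\goal - \pos_0$; hence the trajectory departs tangent to a lateral face of the cone, and since the heading error thereafter relaxes monotonically to zero without reversing sign, the path curves back toward the goal ray. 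I would make this precise by tracking the angle of $\pos(t) - \pos_0$ about the fixed apex $\pos_0$, showing its outward rate is nonpositive on the lateral faces, and using the distance bound to exclude crossing the far boundary arc of the base ball.

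The main obstacle is exactly this lateral estimate. Converting ``the heading error decreases'' into ``the position vector from the fixed apex $\pos_0$ never exceeds the cone half-angle'' is delicate because the apex is frozen at the initial state while the heading, the bearing to the goal, and the position all co-evolve, so a naive angle computation about $\pos_0$ does not close on its own. The cleanest handle on the angular excursion is the bounded total turning of \refeq{eq.Total_Turning_Effort}: the regime $\lingain \le \anggain$ caps $\absval{\totalturning_{\goal}(\pos_0, \ort_0)} \le \pi$, which prevents the heading from sweeping far enough for the trajectory to cross the goal ray and wind around $\goal$, the one behavior that would break the sector bound. I would therefore carry out the invariance estimate within this turning budget, reusing the heading-error decay that also underlies the sharper triangular prediction of \refprop{prop.Triangular_Motion_Bound}.
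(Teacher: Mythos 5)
Your circular bound and the nesting argument $\motionset_{\ctrl_{\goal}, \cone}(\pos_0,\ort_0) \subseteq \motionset_{\ctrl_{\goal}, \ball}(\pos_0,\ort_0)$ are correct (the latter is the standard convexity argument and matches what the paper takes for granted). The genuine gap is in the conic containment, and it is twofold. First, you lean on the bounded total turning available only in the regime $\lingain \leq \anggain$, but the proposition is asserted for \emph{all} $\lingain, \anggain > 0$; in the spiral regime $\lingain > \anggain$ the total turning can exceed $\pi$ and the trajectory does wind around $\goal$, yet the conic bound still holds, so any proof routed through a turning budget cannot cover the stated claim. Second, even within that restricted regime you concede that the crucial lateral estimate --- converting decay of $\absval{\headingerror_{\goal}}$ into a sector bound about the frozen apex $\pos_0$ --- ``does not close on its own,'' and a cap on total turning does not close it either: it limits how much the heading sweeps, but by itself says nothing about whether the position leaves the lateral faces of the cone.

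The missing idea is the paper's decomposition \refeq{eq.Conic_Motion_Prediction_Decomposition} of the cone into the triangle $\conv(\pos_0, \cpos_{\goal}(\pos_0, \ort_0), \cposr_{\goal}(\pos_0, \ort_0))$ and the goal-centered ball $\ball(\goal, \sin(\absval{\headingerror_{\goal}(\pos_0, \ort_0)})\norm{\goal - \pos_0})$. The ball is forward invariant by exactly the distance-decay argument \refeq{eq.Distance_To_Goal_Dynamics} you already used for the circular bound, so the spiraling you were trying to exclude is simply absorbed: it can only occur after the robot is near the goal, i.e., inside this ball, where it cannot violate containment. Outside the ball, the closed-loop velocity satisfies $\dot{\pos} = \lingain \cos(\headingerror_{\goal}(\pos,\ort)) \, \Rmat(-\headingerror_{\goal}(\pos,\ort))(\goal - \pos)$, so it makes angle $\absval{\headingerror_{\goal}(\pos,\ort)} \leq \absval{\headingerror_{\goal}(\pos_0,\ort_0)}$ with the current bearing to the goal, by the monotone decay in \refeq{eq.Linear_Angular_Heading_Error_Dynamics}. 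Since every point of the cone is within distance $\norm{\pos_0 - \goal}$ of $\goal$, the base ball subtends from any such point an angular half-width of at least $\arcsin(\sin\absval{\headingerror_{\goal}(\pos_0,\ort_0)}) = \absval{\headingerror_{\goal}(\pos_0,\ort_0)}$; hence the velocity always points at a point of the base ball, which lies in the convex cone, and Nagumo's sub-tangentiality condition (the same device as in the proof of \refprop{prop.Triangular_Motion_Bound}) gives invariance. This argument uses only the monotone decay of $\norm{\pos - \goal}$ and of $\absval{\headingerror_{\goal}}$, both valid for all positive gains, which is precisely why the proposition needs no relation between $\lingain$ and $\anggain$.
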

\begin{proof}
See \refapp{app.Circular_Conic_Motion_Prediction}.
\end{proof}

Note that the conic motion prediction can be decomposed as a union of a triangle and a circle, using the projected goal point $\cpos_{\goal}(\pos, \ort)$ and its reflection $\cposr_{\goal}(\pos, \ort)$ in \refeq{eq.Projected_Goal}, as 
\begin{align}\label{eq.Conic_Motion_Prediction_Decomposition}
\!\!\motionset_{\goal, \cone}(\pos_0, \ort_0) =  &\, \conv(\pos_0, \cpos_{\goal}(\pos_0, \ort_0), \cposr_{\goal}(\pos_0, \ort_0) ) \nonumber \\
& \quad \quad \quad \cup \ball(\goal\!, \norm{\cpos_{\goal}(\pos_0, \ort_0) - \goal})\!\!\!
\end{align}
which is useful for fast collision checking and distance-to-collision computation.

As a ground truth, it is also convenient to have the exact forward motion set of the closed-loop unicycle motion.

\begin{definition}\label{def.Forward_Simulated_Motion_Prediction}
\emph{(Forward-Reachable Motion Set)} 
To capture the exact future unicycle motion under the unicycle control $\ctrl_{\goal}(\pos, \ort)$ in \refeq{eq.Bidirectional_Unicycle_Control}, starting at $t\! = \!0$ from any unicycle state $(\pos_0, \ort_0)$, we define the unicycle forward-reachable motion set, denoted by $\motionset_{\ctrl_{\goal}, \fwdsim}(\pos_{0}, \ort_{0})$, as
\begin{align}
\!\!\motionset_{\ctrl_{\goal}, \fwdsim}(\pos_{0}, \ort_{0})\!:= \! \left \{   \pos(t) \bigg | \right. &\dot{\pos}(t) = \linvel_{\goal}(\pos(t),\ort(t)\!) \ovectsmall{\ort(t)}, \nonumber
\\
&\dot{\ort}(t) = \angvel_{\goal}(\pos(t), \ort(t)\!), \nonumber
\\
& \pos(0) = \pos_0, \ort(0) = \ort_0, t \geq 0 \left. \bigg. \right\} \!\!\!
\end{align}
where $\linvel_{\goal}(\pos, \ort)$ and $\angvel_{\goal}(\pos, \ort)$ are the unicycle linear and angular velocity control in \refeq{eq.Bidirectional_Unicycle_Control}.
\end{definition}

\addtocounter{footnote}{1}
\footnotetext{\label{fn.Convex_Cone}The convex cone $\cone(\vect{a}, \vect{b}, \theta)$ is the convex hull of the point $\vect{a}$ and the ball $\ball(\vect{b}, \sin(\theta)\norm{\vect{a} - \vect{b}})\!)$, i.e., $\cone(\vect{a}, \vect{b}, \theta) = \conv(\vect{a}, \ball(\vect{b}, \sin(\theta)\norm{\vect{a} - \vect{b}})\!)$.} 

\addtocounter{footnote}{1}
\footnotetext{\label{fn.Cone_Angle}The cone angle is the angle between the line passing through the apex and base points and the cone boundary.}

Note that the forward-reachable motion set of the unicycle controller does not accept a closed-form solution and needs to be numerically computed.
It has some nice positive inclusion and asymptotic radial decay properties, but its minimum distance to any given point might discontinuously change  when the goal position is changed.
We below highlight some useful properties of the proposed unicycle feedback motion prediction methods that are essential for provably correct safe robot motion control \cite{isleyen_vandewouw_arslan_RAL2022, arslan_arXiv2022}.

\begin{proposition}\label{prop.Positive_Inclusion}
\emph{(Positive Inclusion of Motion Prediction)} The circular, conic, diamond-shaped, and forward-reachable motion prediction sets for the unicycle control $\ctrl_{\goal}$ in \refeq{eq.Bidirectional_Unicycle_Control} are all positively inclusive along the closed-loop unicycle state trajectory $(\pos(t), \ort(t))$, as illustrated in \reffig{fig.Positive_Inclusion},  i.e., for any $\motionset_{\ctrl_{\goal}} \in \clist{\motionset_{\ctrl_{\goal}, \ball}, \motionset_{\ctrl_{\goal}, \cone}, \motionset_{\ctrl_{\goal}, \dshape}, \motionset_{\ctrl_{\goal}, \fwdsim}}$
\begin{align}
\motionset_{\ctrl_{\goal}}(\pos(t), \ort(t)) \supseteq \motionset_{\ctrl_{\goal}}(\pos(t'), \ort(t')) \quad \forall t'\geq t. 
\end{align}
\end{proposition}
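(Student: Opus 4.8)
The plan is to prove each of the four inclusions by exploiting the time-invariance (autonomy) of the closed-loop dynamics together with the convexity of the three analytic prediction sets. Since a convex set given as the convex hull of finitely many generators contains another such set as soon as it contains all of its generators, I would reduce each inclusion to a handful of point-containment checks: it suffices to verify that every generator of the later set $\motionset_{\ctrl_{\goal}}(\pos(t'), \ort(t'))$ lies inside the earlier set $\motionset_{\ctrl_{\goal}}(\pos(t), \ort(t))$. Throughout I would use that, by autonomy, the state $(\pos(t'), \ort(t'))$ is the closed-loop trajectory re-initialized at $(\pos(t), \ort(t))$ and run for time $t'-t$, so the positional tail from $t'$ is a subtrajectory of the tail from $t$.

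The forward-reachable case is then immediate and needs no convexity: by \refdef{def.Forward_Simulated_Motion_Prediction}, $\motionset_{\ctrl_{\goal}, \fwdsim}(\pos(t), \ort(t))$ is the positional image of the whole trajectory tail from time $t$, and the tail from $t' \geq t$ is contained in it. For the circular set, both $\motionset_{\ctrl_{\goal}, \ball}(\pos(t), \ort(t)) = \ball(\goal, \norm{\pos(t) - \goal})$ and its later counterpart are centered at the same point $\goal$, so the inclusion reduces to $\norm{\pos(t') - \goal} \leq \norm{\pos(t) - \goal}$, which is exactly the monotone decay of the distance to the goal in \refeq{eq.Distance_To_Goal_Dynamics}.

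For the conic set I would use that $\motionset_{\ctrl_{\goal}, \cone}(\pos, \ort) = \cone(\pos, \goal, \absval{\headingerror_{\goal}(\pos, \ort)})$ is the convex hull of its apex $\pos$ and the base ball $\ball(\goal, \sin(\absval{\headingerror_{\goal}(\pos, \ort)})\norm{\pos - \goal})$. Three facts then close the case by convexity: (i) the later apex $\pos(t')$ lies in the earlier cone, since the trajectory is contained in its conic prediction by \refprop{prop.Circular_Conic_Motion_Prediction}; (ii) the later base ball is contained in the earlier base ball, because its radius $\sin(\absval{\headingerror_{\goal}(\pos(t'), \ort(t'))})\norm{\pos(t') - \goal}$ is a product of two nonincreasing factors, using the exponential heading-error decay in \refeq{eq.Linear_Angular_Heading_Error_Dynamics} (so $\absval{\headingerror_{\goal}}$, hence $\sin\absval{\headingerror_{\goal}}$ on $[0,\tfrac{\pi}{2}]$, is nonincreasing) and the distance decay in \refeq{eq.Distance_To_Goal_Dynamics}; and (iii) the earlier base ball is part of the earlier cone, which shares the base point $\goal$. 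As the later cone is the convex hull of a point and a ball both lying in the convex earlier cone, it is contained in it.

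The diamond case is the main obstacle. Its starting point is the invariance of the final orientation along the trajectory: because $\totalturning_{\goal}(\pos(t), \ort(t)) = \int_t^{\infty}\angvel_{\goal}\diff s = \lim_{s \to \infty}\ort(s) - \ort(t)$, the quantity $\ort(t) + \totalturning_{\goal}(\pos(t), \ort(t)) = \fort_{\goal}(\pos(t), \ort(t))$ is constant in $t$, so the final heading line $\headingline(\goal, \fort_{\goal}(\pos(t), \ort(t)))$ is one and the same line for all $t$, and every intersection point $\xpos_{\goal}(\pos(t), \ort(t))$ lies on it. From \refeq{eq.Heading_Line_Intersection} its distance to the goal is $\norm{\xpos_{\goal} - \goal} = \tfrac{\absval{\sin\headingerror_{\goal}(\pos(t),\ort(t))}}{\absval{\sin\totalturning_{\goal}(\pos(t),\ort(t))}}\norm{\goal - \pos(t)}$, with a sign fixed by the sign-invariant heading error. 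To finish by convexity it suffices that the four generators of the later diamond lie in the earlier one: $\goal$ is shared, $\pos(t')$ lies in the earlier triangle (hence diamond) by \refprop{prop.Triangular_Motion_Bound}, and the two vertices $\xpos_{\goal}(\pos(t'), \ort(t'))$ and $\xposr_{\goal}(\pos(t'), \ort(t'))$ must be placed inside. The hard part is exactly this last step: it requires showing that the goal-to-intersection distance above is nonincreasing in $t$ — a genuinely nonelementary monotonicity in the combination of $\norm{\goal - \pos(t)}$, $\sin\headingerror_{\goal}$ and $\sin\totalturning_{\goal}$ (note $\totalturning_{\goal}$ may exceed $\tfrac{\pi}{2}$, where $\sin$ decreases) — together with the monotone decay of the final heading-error magnitude $\tfrac{\lingain}{2\anggain}\absval{\Si(2\headingerror_{\goal})}$, which governs the reflected vertex across the moving goal line. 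Granting these monotonicities, $\xpos_{\goal}(\pos(t'),\ort(t'))$ falls on the earlier diamond edge $[\goal, \xpos_{\goal}(\pos(t), \ort(t))]$ and $\xposr_{\goal}(\pos(t'),\ort(t'))$ falls between the goal line and the earlier lower edge, so both are contained and convexity yields the inclusion.
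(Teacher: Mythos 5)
Your treatment of three of the four sets is correct and essentially coincides with the paper's proof: the ball case follows from the distance decay in \refeq{eq.Distance_To_Goal_Dynamics}, the cone case from the convex-hull decomposition of the cone together with the simultaneous decay of $\absval{\headingerror_{\goal}}$ (via \refeq{eq.Linear_Angular_Heading_Error_Dynamics}) and $\norm{\pos - \goal}$, and the forward-reachable case from the autonomy of the closed-loop dynamics. You also correctly isolate the key structural fact for the diamond case, namely that the final orientation $\fort_{\goal}(\pos(t), \ort(t))$ is invariant along the trajectory, so every intersection point $\xpos_{\goal}(\pos(t), \ort(t))$ lies on one and the same final heading line.

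However, your diamond argument has a genuine gap: the containment of the later vertex, which you reduce to the claim that $\norm{\xpos_{\goal}(\pos(t), \ort(t)) - \goal}$ is nonincreasing in $t$, is exactly the step requiring proof, and you explicitly ``grant'' it after noting that the monotonicity of $\sin(\headingerror_{\goal})/\sin(\totalturning_{\goal}) \cdot \norm{\goal - \pos}$ is nonelementary. The paper closes this step without any such analytic monotonicity, by reusing the triangle geometry from the proof of \refprop{prop.Triangular_Motion_Bound}: because the final heading line is constant, the triangular bound applies from the intermediate state, so $\pos(t') \in \conv\plist{\pos(t), \goal, \xpos_{\goal}(\pos(t), \ort(t))}$; then, by the angular comparison \refeq{eq.Angular_Triangle_Geometry}, the decayed heading error satisfies $\absval{\headingerror_{\goal}(\pos(t'), \ort(t'))} \leq \absval{\measuredangle\plist{\xpos_{\goal}(\pos(t), \ort(t)) - \pos(t'), \goal - \pos(t')}}$ with all angles sharing the same sign, i.e., the heading ray from $\pos(t')$ is pinched between the goal direction and the direction to the old intersection point. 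Since $\xpos_{\goal}(\pos(t'), \ort(t'))$ is precisely where this ray meets the fixed final heading line, it must land on the edge $\blist{\xpos_{\goal}(\pos(t), \ort(t)), \goal}$ of the earlier triangle, which is the containment you needed; the reflected vertex then follows by symmetry. So your reduction is sound in structure, but the hard step should be completed with this angular argument rather than assumed; as written, the proposal does not constitute a complete proof for the diamond-shaped case.
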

\begin{proof}
See \refapp{app.Positive_Inclusion}.
\end{proof}

\begin{figure}[t]
\centering
\begin{tabular}{@{}c@{\hspace{0.5mm}}c@{\hspace{0.5mm}}c@{}}
\includegraphics[width = 0.33\columnwidth]{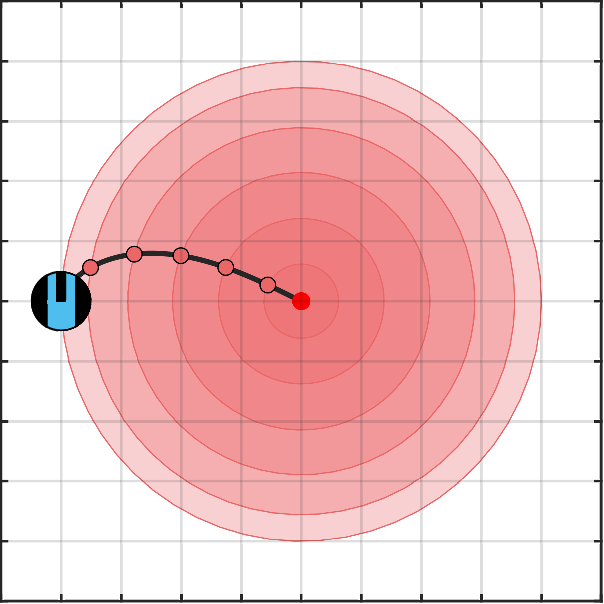} & 
\includegraphics[width = 0.33\columnwidth]{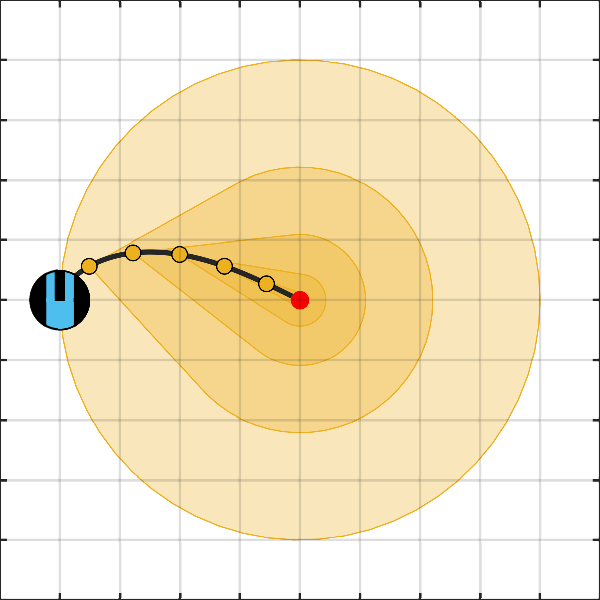} &
\includegraphics[width = 0.33\columnwidth]{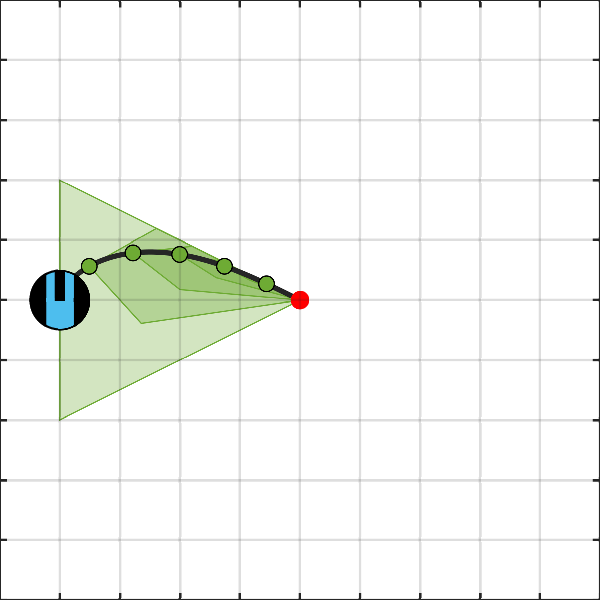}
\end{tabular}
\vspace{-2mm}
\caption{Positive inclusion and radial decay of unicycle feedback motion predictions represented as (left) ball-shaped, (middle) cone-shaped, and (right) diamond-shaped motion bounds on the unicycle trajectory (black~line).}
\label{fig.Positive_Inclusion}
\vspace{-3mm}
\end{figure}

\begin{proposition}\label{prop.Radial_Decay}
\emph{(Radial Decay of Motion Prediction)} Along the closed-loop unicycle state trajectory $(\pos(t), \ort(t))$ of the unicycle controller $\ctrl_{\goal}$ in \refeq{eq.Bidirectional_Unicycle_Control}, the circular, conic, diamond-shaped, and forward-reachable motion prediction sets asymptotically shrink to the goal position $\goal$ as their radii relative to the goal asymptotically decay to zero (see \reffig{fig.Positive_Inclusion}), i.e., for any $\motionset_{\ctrl_{\goal}} \in \clist{\motionset_{\ctrl_{\goal}, \ball}, \motionset_{\ctrl_{\goal}, \cone}, \motionset_{\ctrl_{\goal}, \dshape}, \motionset_{\ctrl_{\goal}, \fwdsim}}$
\begin{align}
\lim_{t \rightarrow \infty} \max_{\pos' \in \motionset_{\ctrl_{\goal}} (\pos(t), \ort(t))} \norm{\pos' - \goal} = 0 
\end{align}  
\end{proposition}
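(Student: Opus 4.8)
The plan is to reduce, in every one of the four cases, the claim to controlling a single scalar radius
\[
R_{\motionset}(t) := \max_{\pos' \in \motionset_{\ctrl_{\goal}}(\pos(t),\ort(t))} \norm{\pos' - \goal},
\]
the farthest distance from the goal attained inside the set. Since $\pos' \mapsto \norm{\pos'-\goal}$ is convex, this maximum over each of the convex prediction sets is attained at an extreme point: for the ball it is simply the radius, and for the diamond-shaped polytope it suffices to check its finitely many vertices. Throughout I would rely on two facts already in hand: global convergence $\lim_{t\to\infty}\pos(t)=\goal$, i.e.\ $\norm{\pos(t)-\goal}\to 0$, from \reflem{lem.Global_Convergence}, and the monotone decrease $\frac{\diff}{\diff t}\norm{\goal-\pos}^2 \leq 0$ from \refeq{eq.Distance_To_Goal_Dynamics}.

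The circular, conic, and forward-reachable cases then follow almost immediately. For the circular set, $R_{\ball}(t)=\norm{\pos(t)-\goal}\to 0$ by \reflem{lem.Global_Convergence}. For the conic set I would invoke the inclusion $\motionset_{\ctrl_{\goal},\cone}(\pos(t),\ort(t)) \subseteq \motionset_{\ctrl_{\goal},\ball}(\pos(t),\ort(t))$ from \refprop{prop.Circular_Conic_Motion_Prediction}, so $R_{\cone}(t)\leq R_{\ball}(t)\to 0$. For the forward-reachable set, time-invariance of the closed-loop dynamics gives $\motionset_{\ctrl_{\goal},\fwdsim}(\pos(t),\ort(t)) = \clist{\pos(s):s\geq t}$, and since the distance-to-goal is nonincreasing in $s$ by \refeq{eq.Distance_To_Goal_Dynamics}, one gets $R_{\fwdsim}(t)=\sup_{s\geq t}\norm{\pos(s)-\goal} = \norm{\pos(t)-\goal}\to 0$.

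The real work is the diamond-shaped set, whose vertices are $\pos(t)$, $\goal$, $\xpos_{\goal}(\pos(t),\ort(t))$, and $\xposr_{\goal}(\pos(t),\ort(t))$. The vertex $\goal$ contributes distance $0$, the vertex $\pos(t)$ contributes $\norm{\pos(t)-\goal}\to 0$, so only the two heading-line intersection points remain. Using the second expression for $\xpos_{\goal}$ in \refeq{eq.Heading_Line_Intersection} and the fact that $\Rmat(\cdot)$ is an isometry, I would write
\[
\norm{\xpos_{\goal}(\pos(t),\ort(t))-\goal} = \absval{\frac{\sin\plist{\headingerror_{\goal}(\pos(t),\ort(t))}}{\sin\plist{\totalturning_{\goal}(\pos(t),\ort(t))}}}\,\norm{\goal-\pos(t)},
\]
with the identical magnitude for $\xposr_{\goal}$. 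As $\norm{\goal-\pos(t)}\to 0$, it suffices to bound the scalar coefficient as $t\to\infty$. Here $\headingerror_{\goal}(\pos(t),\ort(t))\to 0$ exponentially by \refeq{eq.Linear_Angular_Heading_Error_Dynamics}, so both numerator and denominator vanish, a $0/0$ form. Substituting $\totalturning_{\goal}=\headingerror_{\goal}+\frac{\lingain}{2\anggain}\Si(2\headingerror_{\goal})$ from \refeq{eq.Total_Turning_Effort} and using the small-argument expansions $\sin x = x + O(x^3)$ and $\Si(x)=x+O(x^3)$, the coefficient converges to the finite constant $\frac{\anggain}{\anggain+\lingain}$; eventual boundedness of the coefficient together with $\norm{\goal-\pos(t)}\to 0$ forces both intersection-point distances to $0$, hence $R_{\dshape}(t)\to 0$.

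The main obstacle is precisely this $0/0$ limit for the diamond: one must confirm that the intersection points do not escape to infinity as the heading error collapses, equivalently that $\sin(\totalturning_{\goal})$ vanishes at the same linear rate as $\sin(\headingerror_{\goal})$. The hypothesis $\lingain\leq\anggain$ gives $\absval{\totalturning_{\goal}} \leq 2\absval{\headingerror_{\goal}} \leq \pi$ via \refeq{eq.Turning_Effort_Heading_Error_Inequality}, so $\totalturning_{\goal}$ shares the sign of $\headingerror_{\goal}$ and stays nonzero for finite $t$ (when $\headingerror_{\goal}(\pos_0,\ort_0)\neq 0$), while the leading-order cancellation $\totalturning_{\goal}\approx(1+\tfrac{\lingain}{\anggain})\headingerror_{\goal}$ is what produces the finite limit. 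I would finally treat the degenerate case $\headingerror_{\goal}(\pos_0,\ort_0)=0$ separately, where the motion is a straight line, the diamond collapses to the segment $\blist{\pos_0,\goal}$, and the claim is immediate.
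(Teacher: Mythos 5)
Your proposal is correct and follows essentially the same route as the paper: reduce each radius to $\norm{\pos(t)-\goal}$ for the ball, cone, and forward-reachable sets, and for the diamond reduce to its vertices and control $\norm{\xpos_{\goal}-\goal} = \absval{\sin(\headingerror_{\goal})/\sin(\totalturning_{\goal})}\norm{\pos-\goal}$ via the sine-theorem identity from \reflem{lem.Heading_Line_Intersection}. The only minor difference is how the ratio is bounded as the heading error collapses: the paper bounds it by $1$ once $\absval{\headingerror_{\goal}} \leq \pi/4$, using monotonicity of the sine on $[-\pi/2,\pi/2]$ together with $\absval{\headingerror_{\goal}}\leq\absval{\totalturning_{\goal}}\leq 2\absval{\headingerror_{\goal}}$, whereas you compute the exact limit $\anggain/(\anggain+\lingain)$ by small-angle expansion---both yield the eventual boundedness that completes the argument.
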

\begin{proof}
See \refapp{app.Radial_Decay}.
\end{proof}

\begin{proposition}\label{prop.Motion_Prediction_Distance}
\emph{(Distance to Motion Prediction)} For any unicycle state $(\pos, \ort) \in \R^2 \times [-\pi, \pi)$, goal position $\goal \in \R^2$ and motion prediction set $\motionset_{\ctrl_{\goal}} \!\in\! \clist{\motionset_{\ctrl_{\goal}, \ball}, \motionset_{\ctrl_{\goal}, \cone}, \motionset_{\ctrl_{\goal}, \dshape}}$, the minimum distance $\min_{\pos'\in \motionset_{\ctrl_{\goal}}(\pos, \ort)} \norm{\vect{y} - \vect{x'}}$ of any point $\vect{y} \! \in\! \R^2$ to the motion prediction set $\motionset_{\goal}(\pos, \ort)$ is a locally Lipschitz continuous\footnote{Here, local Lipschitz continuity is necessary to manage arbitrary continuous changes in the goal position \cite{isleyen_vandewouw_arslan_RAL2022}, as for safe path-following control in \refsec{sec.Safe Unicycle_Path_Following}. This requirement can be relaxed if a discrete-time goal update with an advance safety check is employed.} function of the unicycle position $\pos$, the unicycle orientation $\ort$, the goal position $\goal$, and the point $\vect{y}$.
\end{proposition}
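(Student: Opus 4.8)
The plan is to assemble the minimum distance from elementary Lipschitz building blocks together with a stability estimate for the sets themselves. For a nonempty closed set $S$, the map $\vect{y} \mapsto \min_{\pos' \in S}\norm{\vect{y} - \pos'}$ is globally $1$-Lipschitz in $\vect{y}$ by the triangle inequality, so the only issue is the dependence on $(\pos, \ort, \goal)$ through the set. Two further facts make this tractable: (a) for the convex hull of a fixed number of points, $\min_{\pos'\in\conv\clist{\vect{p}_1,\ldots,\vect{p}_m}}\norm{\vect{y}-\pos'} = \min_{\lambda\in\Delta_m}\norm{\vect{y}-\sum_i \lambda_i \vect{p}_i}$ is jointly locally Lipschitz in $(\vect{y},\vect{p}_1,\ldots,\vect{p}_m)$, since each inner term is Lipschitz with a common constant and the minimum over the compact simplex $\Delta_m$ preserves it; and (b) for a ball, $\min_{\pos'\in\ball(\ctr,\radius)}\norm{\vect{y}-\pos'} = \max\plist{0,\norm{\vect{y}-\ctr}-\radius}$ is locally Lipschitz in $(\vect{y},\ctr,\radius)$. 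Because the distance to a finite union is the minimum of the individual distances, which again preserves local Lipschitz continuity, it suffices to verify that the points and radii defining each motion prediction set are locally Lipschitz functions of $(\pos,\ort,\goal)$.

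For the circular set $\ball(\goal,\norm{\pos-\goal})$ the center $\goal$ and radius $\norm{\pos-\goal}$ are manifestly locally Lipschitz, so its distance function is as well. For the conic set I would use the decomposition in \refeq{eq.Conic_Motion_Prediction_Decomposition} into the triangle $\conv(\pos,\cpos_{\goal},\cposr_{\goal})$ and the ball $\ball(\goal,\norm{\cpos_{\goal}-\goal})$, and combine the two distances via (a), (b), and the union rule. The key point is that the projected-goal point $\cpos_{\goal}$ in \refeq{eq.Projected_Goal} equals the orthogonal projection $\pos+\plist{\ovecTsmall{\ort}(\goal-\pos)}\ovectsmall{\ort}$ of $\goal$ onto the heading line, a smooth function of $(\pos,\ort,\goal)$, and its reflection $\cposr_{\goal}$ across $\blist{\pos,\goal}$ is smooth for $\pos\neq\goal$; crucially, neither expression involves the discontinuous sign of $\headingerror_{\goal}$, so both are locally Lipschitz away from $\pos=\goal$ (the case $\pos=\goal$ being deferred to the squeeze below). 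This settles the circular and conic cases.

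The diamond set is the substantive case, because its vertices $\xpos_{\goal},\xposr_{\goal}$ in \refeq{eq.Heading_Line_Intersection} carry the denominator $\sin(\totalturning_{\goal})$. The restriction $\lingain\le\anggain$ enters here: by \refeq{eq.Turning_Effort_Heading_Error_Equality} one has $\absval{\totalturning_{\goal}} = \absval{\headingerror_{\goal}} + \tfrac{\lingain}{2\anggain}\absval{\Si(2\headingerror_{\goal})} \le \tfrac{\pi}{2} + \tfrac{1}{2}\Si(\pi) < \pi$, so $\sin(\totalturning_{\goal})$ vanishes only where $\totalturning_{\goal}=0$, i.e.\ where $\headingerror_{\goal}=0$. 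Away from $\pos=\goal$ and the loci $\headingerror_{\goal}\in\clist{0,\pm\tfrac{\pi}{2}}$ the vertices are therefore smooth. At $\headingerror_{\goal}=0$ the expressions take the form $0/0$, but using $\headingerror_{\goal}(\pos,\fort_{\goal}) = -\tfrac{\lingain}{2\anggain}\Si(2\headingerror_{\goal})$ together with $\totalturning_{\goal}=\headingerror_{\goal}-\headingerror_{\goal}(\pos,\fort_{\goal})$ from \refeq{eq.Total_Turning_Initila_Final_Heading_Error_Equality}, the coefficient $\sin(\headingerror_{\goal}(\pos,\fort_{\goal}))/\sin(\totalturning_{\goal})$ extends smoothly with finite limit $-\tfrac{\lingain}{\lingain+\anggain}$; the singularity is removable and $\xpos_{\goal},\xposr_{\goal}$ remain locally Lipschitz there.

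I expect the locus $\absval{\headingerror_{\goal}}=\tfrac{\pi}{2}$ to be the main obstacle: as $\headingerror_{\goal}$ crosses $\pm\tfrac{\pi}{2}$ it jumps in sign, so the individual vertices $\xpos_{\goal}$ and $\xposr_{\goal}$ are interchanged and neither is continuous. The clean way around this is to argue at the level of the set rather than the labelled vertices, bounding $\absval{\min_{\pos'\in S_1}\norm{\vect{y}-\pos'}-\min_{\pos'\in S_2}\norm{\vect{y}-\pos'}}$ by the Hausdorff distance $d_{\mathrm{H}}(S_1,S_2)$, which is insensitive to relabeling; because $\clist{\xpos_{\goal},\xposr_{\goal}}$ is symmetric about the goal line $\blist{\pos,\goal}$, the two vertices merely swap roles as the geometry reflects, so the unordered pair, and hence the convex hull defining the diamond, varies Lipschitz-continuously across the crossing. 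Finally, the degenerate locus $\pos=\goal$, where $\headingerror_{\goal}$ is genuinely discontinuous, is handled by squeezing: the coefficients in \refeq{eq.Heading_Line_Intersection} and \refeq{eq.Projected_Goal} stay bounded as $\pos\to\goal$, so the conic and diamond vertices all tend to $\goal$ and each set $S$ lies in some ball $\ball(\goal,C\norm{\pos-\goal})$ while containing $\goal$; thus $d_{\mathrm{H}}(S,\clist{\goal})\le C\norm{\pos-\goal}$, whence $\absval{\min_{\pos'\in S}\norm{\vect{y}-\pos'}-\norm{\vect{y}-\goal}}\le C\norm{\pos-\goal}$, restoring local Lipschitz continuity at $\pos=\goal$ and completing the argument.
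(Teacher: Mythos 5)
Your proof is correct, and it takes a genuinely more self-contained route than the paper's. The paper's proof is citation-driven: it invokes four structural facts from the literature (piecewise-$C^1$ functions are locally Lipschitz, metric projections onto convex sets are piecewise-$C^1$, continuous selections and compositions of locally Lipschitz maps are locally Lipschitz, and Lipschitz dependence of set distance under affine deformations of the set) and applies them to the same decompositions you use --- the explicit ball formula, the triangle-plus-ball splitting of the cone in \refeq{eq.Conic_Motion_Prediction_Decomposition}, and a splitting of the diamond into the two triangles $\conv(\pos,\goal,\xpos_{\goal}(\pos,\ort))$ and $\conv(\pos,\goal,\xposr_{\goal}(\pos,\ort))$. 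You replace that cited machinery with elementary estimates: the simplex parametrization of the distance to a convex hull (uniformly Lipschitz in the vertices), the min-rule for unions, and the bound of a distance difference by the Hausdorff distance. Your argument is sharper than the paper's in two spots: (i) for the cone, observing that $\cpos_{\goal}(\pos,\ort)$ is just the orthogonal projection $\pos + \plist{\ovecTsmall{\ort}(\goal-\pos)}\ovectsmall{\ort}$ bypasses the sign discontinuity of $\headingerror_{\goal}$ entirely, whereas the paper argues through one-sided limits of the $\headingerror$-based formulas; (ii) for the diamond at $\headingerror_{\goal}=0$, you verify the removable singularity of $\sin(\headingerror_{\goal}(\pos,\fort_{\goal}))/\sin(\totalturning_{\goal})$ with explicit limit $-\lingain/(\lingain+\anggain)$, a point the paper's claim of continuous differentiability on $\absval{\headingerror_{\goal}}<\tfrac{\pi}{2}$ silently presupposes. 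At $\absval{\headingerror_{\goal}}=\tfrac{\pi}{2}$ both proofs rest on the same key fact --- the unordered vertex pair $\clist{\xpos_{\goal},\xposr_{\goal}}$ has coinciding one-sided limits, so the set (not the labelled vertices) varies continuously --- which the paper expresses as a set-valued limit fed into its continuous-selection property, and you express via relabeling-invariance of the Hausdorff distance. The one place your write-up deserves an extra sentence is the passage from "Lipschitz on each side of a switching locus, with matching boundary values" plus "pointwise squeeze at $\pos=\goal$" to Lipschitz on a full neighborhood of those loci: this needs either a segment-splitting argument at the interface, or the remark that the local Lipschitz constants of the vertex maps stay uniformly bounded as $\pos\to\goal$ because the $1/\norm{\pos-\goal}$ blow-up in the derivative of $\headingerror_{\goal}$ is always multiplied by the factor $(\goal-\pos)$ in the vertex formulas; this is precisely the step the paper outsources to its continuous-selection lemma, so neither proof spells it out, but yours should since it otherwise claims to be self-contained.
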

\begin{proof}
See \refapp{app.Motion_Prediction_Distance}.
\end{proof}

Finally, it is useful to highlight the inclusion relation of unicycle feedback motion prediction methods seen in \reffig{fig.Unicycle_Feedback_Motion_Prediction}.

\begin{proposition}\label{prop.Inclusion_Order}
\emph{(Inclusion Order of Motion Predictions)} For control gains $\lingain \leq \anggain$ and any unicycle state $(\pos, \ort)$ with a total turning effort of $\absval{\totalturning_{\goal}(\pos, \ort)} \leq \tfrac{\pi}{2}$ towards the goal position $\goal$, the proposed unicycle feedback motion prediction methods for the unicycle control $\ctrl_{\goal}$ in \refeq{eq.Bidirectional_Unicycle_Control} satisfy
{\small
\begin{align*}
\motionset_{\ctrl_{\goal}, \fwdsim}(\pos, \ort) \subseteq \motionset_{\ctrl_{\goal}, \dshape}(\pos, \ort) \subseteq \motionset_{\ctrl_{\goal}, \cone}(\pos, \ort) \subseteq \motionset_{\ctrl_{\goal}, \ball}(\pos, \ort). 
\end{align*}
}%
\end{proposition}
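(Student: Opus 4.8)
The plan is to prove the three inclusions separately, working from the outermost (already established) toward the innermost, and then concentrate effort on the single nontrivial containment. The outermost inclusion $\motionset_{\ctrl_{\goal}, \cone}(\pos, \ort) \subseteq \motionset_{\ctrl_{\goal}, \ball}(\pos, \ort)$ needs no new argument: it is exactly what \refprop{prop.Circular_Conic_Motion_Prediction} asserts, so I would simply invoke that result. For the innermost inclusion $\motionset_{\ctrl_{\goal}, \fwdsim}(\pos, \ort) \subseteq \motionset_{\ctrl_{\goal}, \dshape}(\pos, \ort)$, I would observe that by \refdef{def.Forward_Simulated_Motion_Prediction} the forward-reachable set is precisely the image $\clist{\pos(t) \big| t \geq 0}$ of the closed-loop position trajectory, and since $\lingain \leq \anggain$, the triangular bound of \refprop{prop.Triangular_Motion_Bound} places this trajectory inside $\conv\plist{\pos, \goal, \xpos_{\goal}(\pos, \ort)}$, which is in turn contained in the diamond $\motionset_{\ctrl_{\goal}, \dshape}(\pos, \ort) = \conv\plist{\pos, \goal, \xpos_{\goal}(\pos, \ort), \xposr_{\goal}(\pos, \ort)}$ obtained by adjoining the extra vertex $\xposr_{\goal}(\pos, \ort)$. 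The inclusion follows.

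The remaining and principal inclusion is $\motionset_{\ctrl_{\goal}, \dshape}(\pos, \ort) \subseteq \motionset_{\ctrl_{\goal}, \cone}(\pos, \ort)$. Since the conic set is convex (being the convex hull of the apex $\pos$ and the base ball $\ball(\goal, \sin(\theta)\norm{\pos - \goal})$ with $\theta := \absval{\headingerror_{\goal}(\pos, \ort)}$) and the diamond is the convex hull of its four vertices, it suffices to check that each vertex lies in the cone. The apex vertex $\pos$ and the base-center vertex $\goal$ lie in the cone trivially. Moreover, both $\pos$ and the ball center $\goal$ lie on the goal line $\blist{\pos, \goal}$, so the cone is symmetric under reflection across $\blist{\pos, \goal}$; because $\xposr_{\goal}(\pos, \ort)$ is by construction the reflection of $\xpos_{\goal}(\pos, \ort)$ across that line, it remains only to verify that $\xpos_{\goal}(\pos, \ort) \in \motionset_{\ctrl_{\goal}, \cone}(\pos, \ort)$ (the degenerate case $\headingerror_{\goal}(\pos, \ort) = 0$ being handled separately, where both sets collapse to the segment $\blist{\pos, \goal}$).

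To verify this, I would set $\theta := \absval{\headingerror_{\goal}(\pos, \ort)}$, $\phi := \absval{\headingerror_{\goal}(\pos, \fort_{\goal}(\pos, \ort))}$ and $d := \norm{\pos - \goal}$, so that $\theta + \phi = \absval{\totalturning_{\goal}(\pos, \ort)} \leq \tfrac{\pi}{2}$ by \refeq{eq.Total_Turning_Initila_Final_Heading_Error_Equality} and the hypothesis. The point $\xpos_{\goal}(\pos, \ort)$ lies on the heading ray leaving $\pos$ at angle $\theta$ from $\blist{\pos, \goal}$, which is exactly the straight boundary edge of the cone; that edge runs from $\pos$ up to its tangency point with the base ball, at distance $d\cos\theta$. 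Applying the law of sines to the triangle $\pos, \goal, \xpos_{\goal}(\pos, \ort)$, whose interior angles at $\pos$, $\goal$ and $\xpos_{\goal}(\pos, \ort)$ are $\theta$, $\phi$ and $\pi - (\theta + \phi)$, yields $\norm{\xpos_{\goal}(\pos, \ort) - \pos} = d\,\sin\phi/\sin(\theta + \phi)$. The containment thus reduces to the scalar inequality $\sin\phi/\sin(\theta+\phi) \leq \cos\theta$, which after clearing denominators and applying the angle-addition and Pythagorean identities collapses to $\cos(\theta + \phi) \geq 0$, holding precisely because $\theta + \phi \leq \tfrac{\pi}{2}$.

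I expect the main obstacle to be this last step — specifically, the geometric setup rather than the algebra. One must correctly recognize that the straight edge of the cone terminates at the tangency distance $d\cos\theta$ (it does not extend indefinitely along the ray), identify $\xpos_{\goal}(\pos, \ort)$ as lying on that edge, and match the two lengths; one must also track signs carefully when passing from the signed quantities $\headingerror_{\goal}$ and $\totalturning_{\goal}$ to their magnitudes $\theta$ and $\phi$, using the opposite-sign relation between the current and final heading errors noted after \refeq{eq.Total_Turning_Initila_Final_Heading_Error_Equality}. Once the geometry is pinned down, the reduction to $\cos(\theta+\phi)\geq 0$ is clean and the role of the hypothesis $\absval{\totalturning_{\goal}(\pos, \ort)} \leq \tfrac{\pi}{2}$ becomes transparent.
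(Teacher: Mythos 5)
Your proposal is correct and takes essentially the same route as the paper: the paper also disposes of $\motionset_{\ctrl_{\goal}, \fwdsim} \subseteq \motionset_{\ctrl_{\goal}, \dshape}$ and $\motionset_{\ctrl_{\goal}, \cone} \subseteq \motionset_{\ctrl_{\goal}, \ball}$ by \refprop{prop.Triangular_Motion_Bound} and \refprop{prop.Circular_Conic_Motion_Prediction}, and settles the key inclusion $\motionset_{\ctrl_{\goal}, \dshape}(\pos, \ort) \subseteq \motionset_{\ctrl_{\goal}, \cone}(\pos, \ort)$ by the sine theorem from the proof of \reflem{lem.Heading_Line_Intersection}, showing $\xpos_{\goal}(\pos, \ort) \in [\pos, \cpos_{\goal}(\pos, \ort)]$ with equality $\xpos_{\goal}(\pos, \ort) = \cpos_{\goal}(\pos, \ort)$ exactly when $\absval{\totalturning_{\goal}(\pos, \ort)} = \tfrac{\pi}{2}$. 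Your scalar reduction $\sin\phi/\sin(\theta+\phi) \leq \cos\theta \Leftrightarrow \cos(\theta+\phi) \geq 0$ is precisely the worked-out form of that claim, since $d\cos\theta$ is the distance from $\pos$ to the tangency point $\cpos_{\goal}(\pos, \ort)$.
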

\begin{proof}
See \refapp{app.Inclusion_Order}.
\end{proof}

\section{Safe Unicycle Path-Following Control}
\label{sec.Safe Unicycle_Path_Following}

In this section, we demonstrate an example application of the unicycle motion controller in \refeq{eq.Bidirectional_Unicycle_Control} and the associated unicycle feedback motion prediction methods for safe path following of a reference path around obstacles using a time governor \cite{arslan_arXiv2022}.
In short, a time governor performs an online continuous time parametrization of a reference path for provably correct and safe path following based on the safety of the predicted robot motion \cite{arslan_arXiv2022}.
The time-governor framework requires a feedback motion prediction method that has asymptotic radial decay (\refprop{prop.Radial_Decay}) and Lipschitz-continuous point distance (\refprop{prop.Motion_Prediction_Distance}) properties, and enjoys positively inclusive motion prediction (\refprop{prop.Positive_Inclusion}). 

\begin{figure*}[ht]
\centering
\begin{tabular}{@{}c@{\hspace{0.005\columnwidth}}c@{\hspace{0.005\columnwidth}}c@{\hspace{0.005\columnwidth}}c }
\includegraphics[width = 0.48\columnwidth]{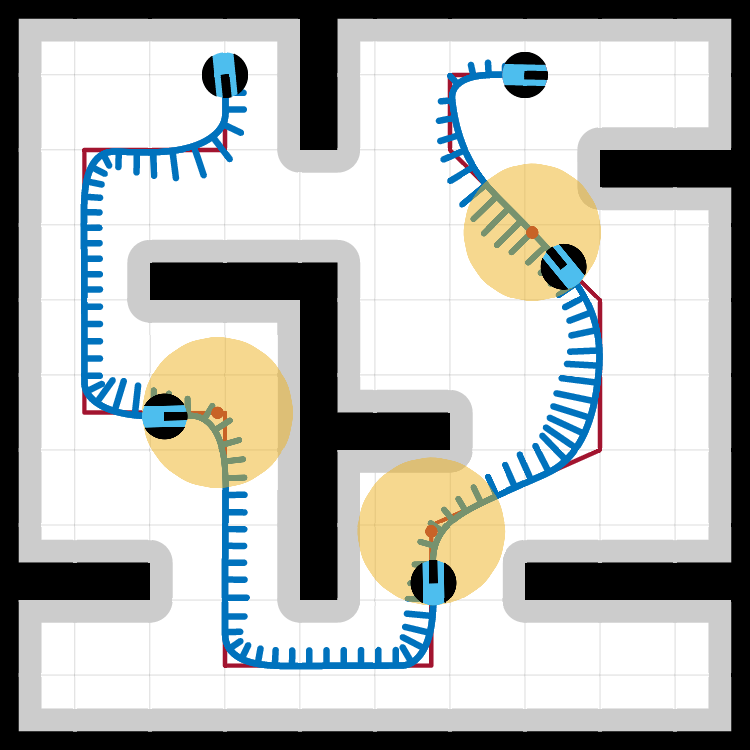} &  
\includegraphics[width = 0.48\columnwidth]{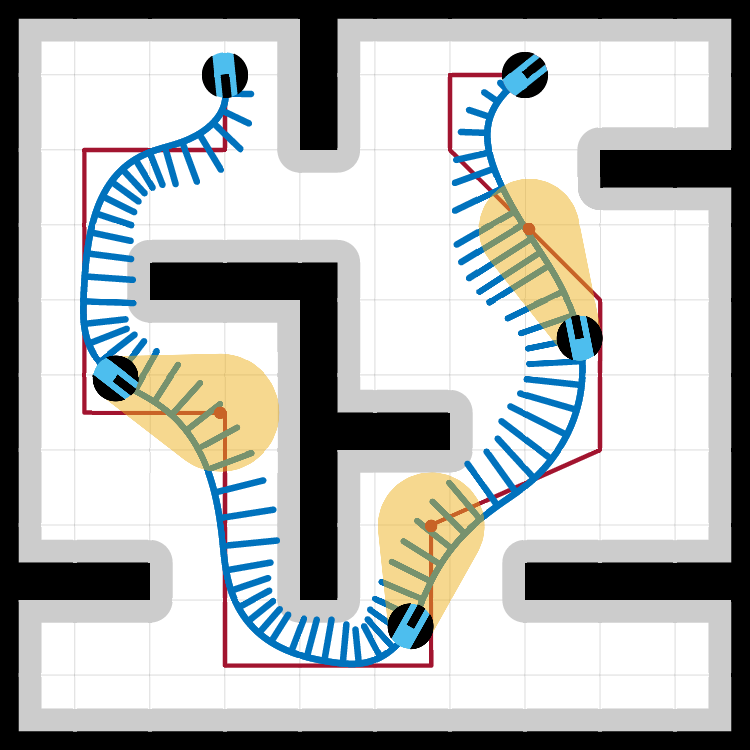} &  
\includegraphics[width = 0.48\columnwidth]{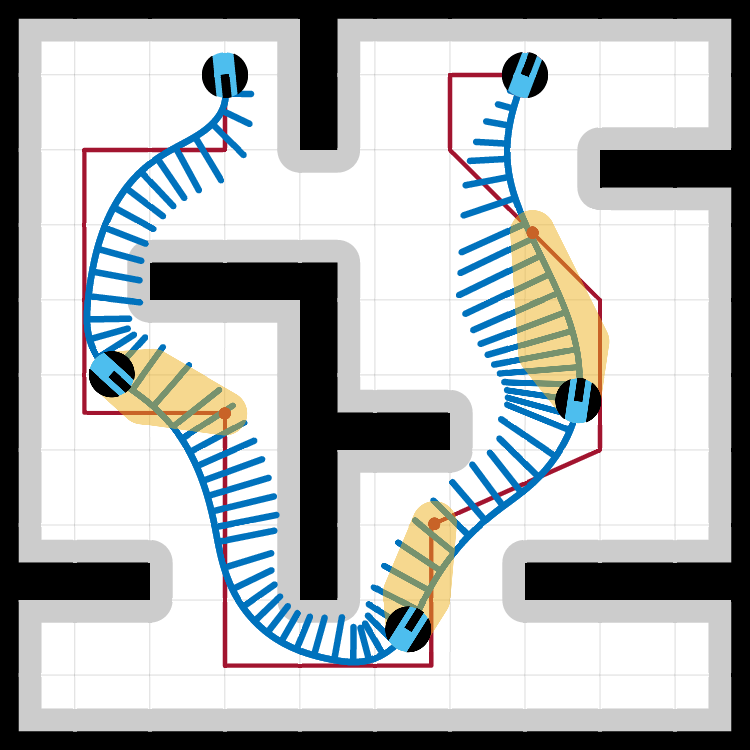} &  
\includegraphics[width = 0.48\columnwidth]{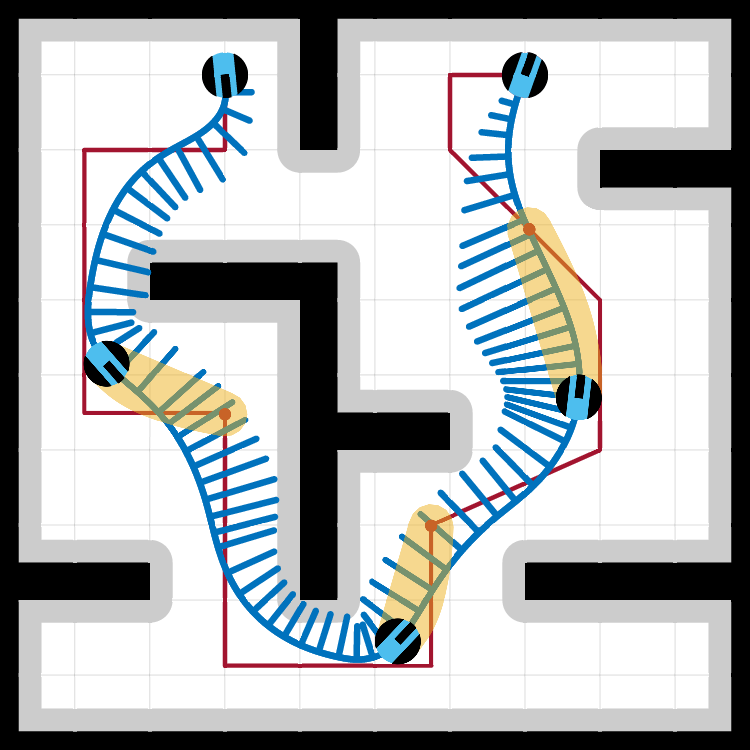} 
\\
\includegraphics[width = 0.48\columnwidth]{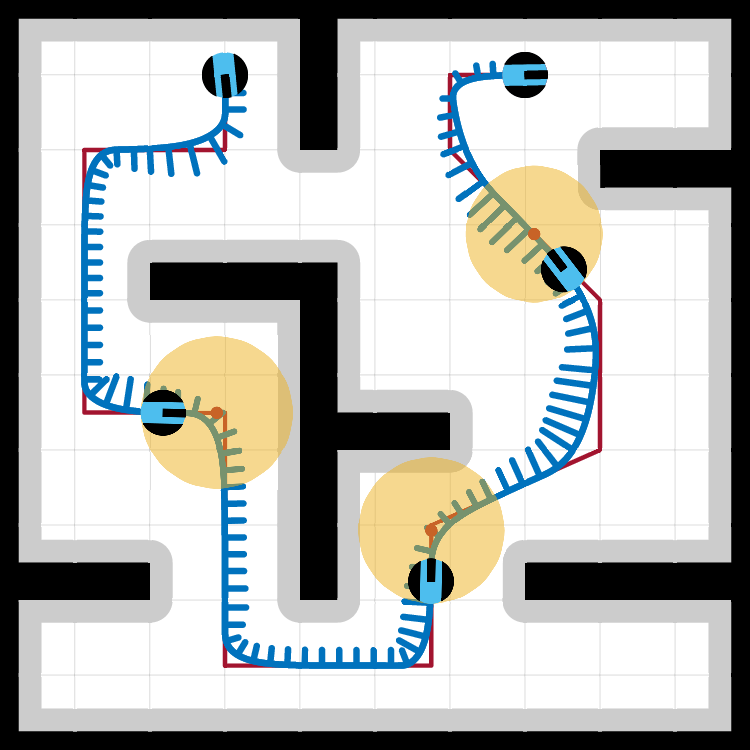} &  
\includegraphics[width = 0.48\columnwidth]{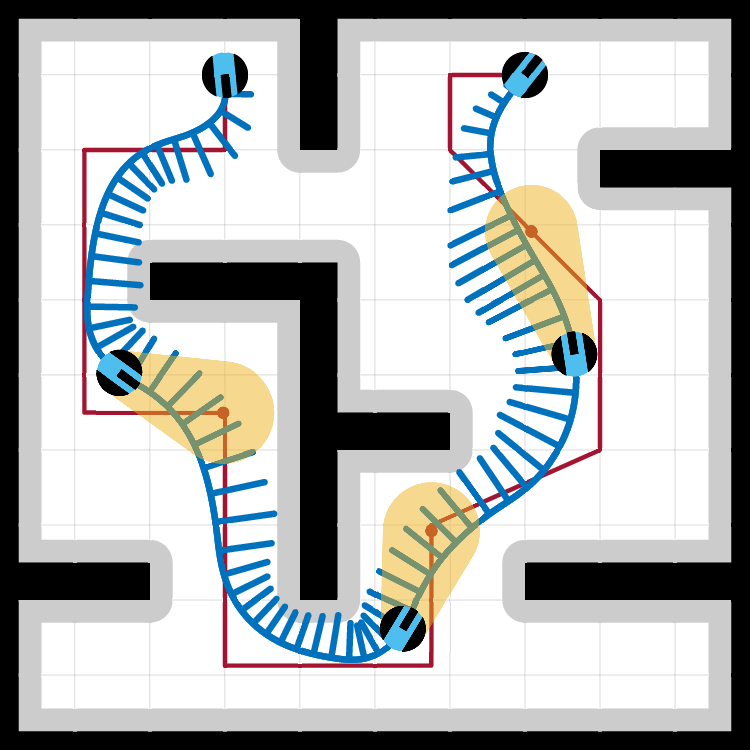} &  
\includegraphics[width = 0.48\columnwidth]{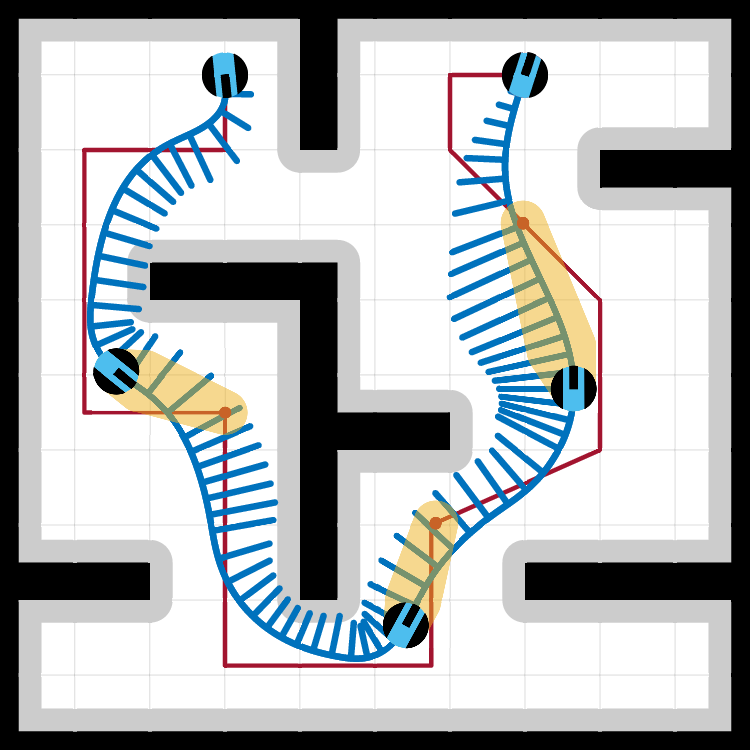} &  
\includegraphics[width = 0.48\columnwidth]{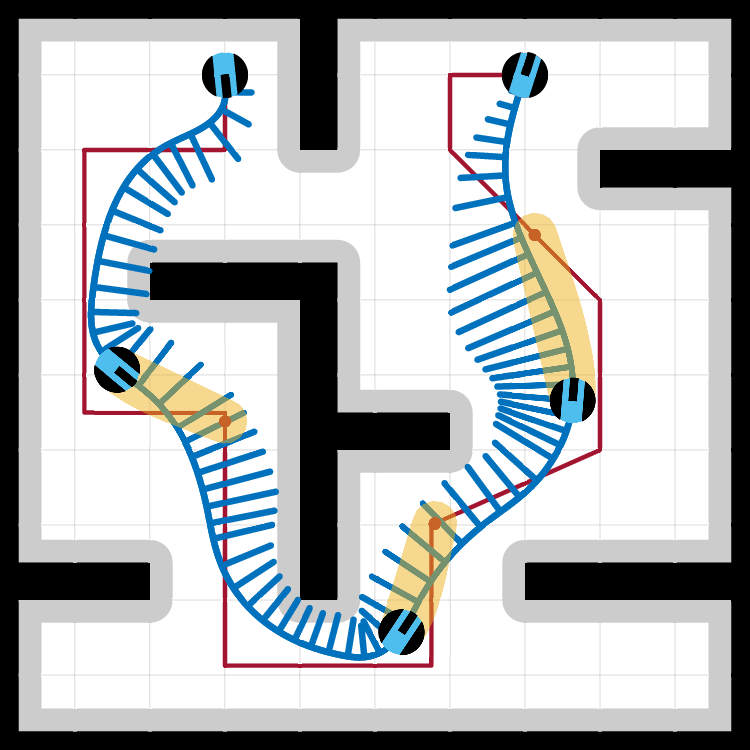} 
\\[-1.5mm]
\footnotesize{(a)} & \footnotesize{(b)} & \footnotesize{(c)} & \footnotesize{(d)}
\end{tabular}
\vspace{-2.0mm}
\caption{Time-governed safe unicycle path-following control around obstacles using feedback motion prediction. 
The safety of the unicycle motion is constantly verified using (a) ball-shaped, (b) cone-shaped, (c) diamond-shaped, (c) forward-reachable motion prediction methods. The unicycle robot motion is illustrated by blue lines, where blue bars indicate robot speed. Yellow regions show an instance of the feedback motion prediction during the robot motion towards a moving
point (red point) along a given reference path (red line).
Here, we set a shared linear control gain of $\lingain=1$ for all simulations and consider two different values for the angular gain as  (top) $\anggain=2$ and (bottom) $\anggain = 3$. 
}
\vspace{-2.5mm}
\label{fig.Safe_Path_Following_Motion}
\end{figure*}

\subsection{Time-Governed Safe Unicycle Path Following}

Consider a disk-shaped unicycle robot of radius $\radius \geq 0$, centered at position $\pos \in \workspace$ with orientation $\ort \in [-\pi, \pi)$, that operates in a known static bounded environment $\workspace \subseteq \R^2$ with a collection of obstacles represented by an open set $\obstspace \subseteq \R^2$.
Hence, the robot's free space, denoted by $\freespace$, of collision-free positions is given by
\begin{align}
    \freespace := \clist{\pos \in \workspace | \ball(\pos, \radius) \subseteq \workspace \setminus \obstspace}
\end{align}
where $\ball(\pos, \radius) := \clist{\vect{y} \in \R^2 | \norm{\vect{y} - \pos} \leq \radius}$ is the closed Euclidean ball centered at $\pos \in \R^2$ with radius $\radius \geq 0$, representing the robot's body.
Let $\path(\pathparam): [\smin, \smax] \rightarrow \freespace$ be a Lipschitz-continuous, collision-free reference path connecting a specified pair of start and goal positions $\startpos, \goalpos \in \freespace$ such that $\path(\smin) = \startpos$ and $\path(\smax) = \goalpos$ and it has a positive clearance from the free space boundary $\partial \freespace$, i.e., $\min_{\pathparam \in [\smin, \smax], \pos \in \partial \freespace} \norm{\path(\pathparam) - \pos} > 0$.

Starting at $t \!=\! 0$ with the initial path parameter \mbox{$\pathparam(0)\! =\! \smin$}, the initial unicycle position $\pos(0) \!= \!\startpos$, and any initial unicycle orientation \mbox{$\ort(0) \in [-\pi, \pi)$}, we design a safe unicycle path-following controller with online continuous time parametrization, using the unicycle motion controller $\ctrl_{\path(\pathparam)}(\pos, \ort) \!=\! \plist{\linvel_{\path(\pathparam)}(\pos, \ort), \angvel_{\path(\pathparam)}(\pos, \ort)\!}$  in \refeq{eq.Bidirectional_Unicycle_Control} towards the path point $\path(\pathparam)$ and an associated feedback motion prediction method $\motionset_{\ctrl_{\path(\pathparam)}}\!(\pos, \ort)$ from \refsec{sec.Unicycle_Motion_Range_Prediction}, as %
\begin{subequations}\label{eq.SafePathFollowing}
\begin{align}
    \dot{\pathparam} &=  \min \plist{ \gain_\safelevel \safedist_\freespace \plist{\motionset_{\ctrl_{\path(\pathparam)\!}}\!(\pos, \ort)} , \!-\gain_\pathparam (\pathparam \!-\! \smax) }\!\!
    \\
    \dot{\pos} &= \linvel_{\path(s)}(\pos, \ort) 
    \\
    \dot{\ort} &= \angvel_{\path(s)}(\pos, \ort)
\end{align}
\end{subequations}
where  $\gain_\safelevel, \gain_\pathparam > 0$ are fixed positive control coefficients, and the safety of the unicycle motion is measured by the minimum distance between the feedback motion prediction set $\motionset_{\ctrl_{\path(\pathparam)}}(\pos, \ort)$  and the free space boundary $\partial \freespace$  as
\begin{align*}\label{eq.safedist}
    \safedist_{\freespace} (\motionset_{\ctrl_{\path(\pathparam)}}\!(\pos, \ort) \!) 
    & \!\ldf\! \!\left \{ \begin{array}{@{}l@{\,}l@{}}
    \min\limits_{\substack{\vect{a} \in \motionset_{\ctrl_{\path(\pathparam)}}\!\!(\pos, \ort)\\ \vect{b} \in \partial \freespace} }\!\!\!\!\! \!\!\!\!\norm{\vect{a} \!-\! \vect{b}} & \text{, if } \motionset_{\ctrl_{\path(\pathparam)}} \!(\pos, \ort) \!\subseteq\! \freespace \\
    0 & \text{, otherwise.}
    \end{array}
    \right.
\end{align*}    
The safe path following dynamics in \refeq{eq.SafePathFollowing} incrementally increase the path parameter $\pathparam$, based on the safety of the predicted unicycle motion until reaching the end of the path, while the unicycle robot under the feedback motion control $\ctrl_{\path(\pathparam)}$ chases the current reference path point $\path(\pathparam)$ as a local goal.
Since the reference path $\path$ is assumed to have a positive clearance from collisions, the asymptotic radial decay property of the feedback motion prediction guarantees that the path parameter $\pathparam(t)$ and the unicycle robot position $\pos(t)$ under the safe path following controller in \refeq{eq.SafePathFollowing} asymptotically converge to the end of the reference path while also guaranteeing that the unicycle robot stays away from collisions along the way \cite{arslan_arXiv2022}, i.e.,
\begin{align*}
\pos(t) \in \freespace \quad \forall t \geq 0, \,\,
\lim_{t\rightarrow \infty} \pathparam(t) = \smax, \,\,
\lim_{t\rightarrow \infty} \pos(t) = \path(\smax).
\end{align*}

\begin{figure}[t]
\centering
\begin{tabular}{@{}c@{\hspace{0.5mm}}c@{}}
\includegraphics[width = 0.49\columnwidth]{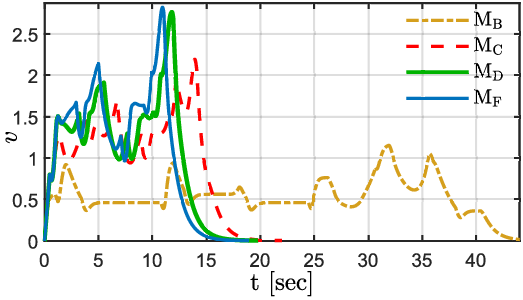} &
\includegraphics[width = 0.49\columnwidth]{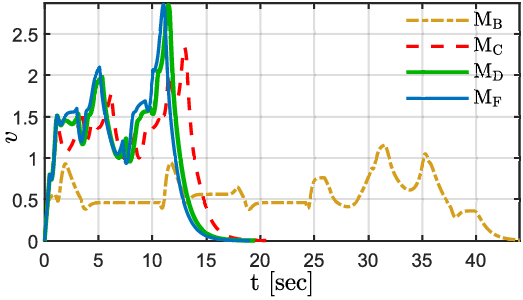} 
\end{tabular}
\vspace{-2.5mm}
\caption{\!\!Unicycle speed profiles during safe path following around obstacles using different unicycle feedback motion prediction methods (ball-shaped $\mathrm{M_B}$, cone-shaped $\mathrm{M_C}$, diamond-shaped $\mathrm{M_D}$, and forward-reachable $\mathrm{M_F}$). These profiles are presented for a shared linear gain of $\lingain \!\!=\! 1$ and two different angular gains: (left) $\!\anggain \!\!= \!2$ and (right) $\!\anggain \!\!=\! 3$. Higher angular gain~leads to faster robot motion and a smaller diamond-shaped~motion~prediction.
}
\label{fig.Safe_Path_Following_Speed}
\vspace{-3mm}
\end{figure}

\subsection{Numerical Simulations}
\label{sec.Numerical_Simulations}

In this part, we provide example numerical simulations\reffn{fn.Numerical_Simulations} to demonstrate safe path following of a unicycle mobile robot around obstacles using feedback motion prediction.
In \reffig{fig.Safe_Path_Following_Motion} and \reffig{fig.Safe_Path_Following_Speed}, we present the resulting unicycle position trajectories and velocity profiles during safe unicycle path following using the ball-shaped, cone-shaped, diamond-shaped, and forward-reachable feedback motion prediction methods of the unicycle motion control in \refeq{eq.Bidirectional_Unicycle_Control}.
As a ground truth, we use the forward-reachable motion set of the unicycle motion control that is numerically computed.
As seen \reffig{fig.Safe_Path_Following_Speed}, the accuracy of feedback motion prediction influences the resulting unicycle motions, leading to significant variations in both speed and travel time.%
\addtocounter{footnote}{1}
\footnotetext{\label{fn.Numerical_Simulations}For all simulations, unless specified, we set linear and angular control gains as $\lingain \!=\! 1$ and $ \anggain \!=\! 2$ for the unicycle control, and the control coefficients for the time governor in \refeq{eq.SafePathFollowing} $\gain_\pathparam \!=\! 4$, $\gain_\safelevel \!=\! 4$. We use the arc-length parametrization of a given reference path $\path(\pathparam)$ such that the reference path length $L$ determines the path parameter range as $[\smin, \smax] = [0, L]$. All simulations are obtained by numerically solving the time-governed unicycle path-following dynamics in  \refeq{eq.SafePathFollowing} using the \texttt{ode45} function of MATLAB. Please see the accompanying video for the animated robot motion.} 
As expected, the forward-reachable motion prediction method shows superior performance in terms of average speed and travel time, although this comes at a significantly higher computational cost.
In addition to the numerical computation of the forward-reachable motion set, computing the distance-to-collision at each point within the forward-reachable motion set for safety assessment is computationally demanding.
On the other hand, the diamond-shaped unicycle motion prediction demonstrates comparable performance like the forward-reachable motion prediction at a significantly lower computation cost because of its simple triangular shape and explicit analytical form  in \refeq{eq.Diamond-Shaped_Motion_Prediction}.
The conic motion prediction also exhibits reasonable performance at a similar computational cost to the diamond-shaped motion prediction. However, it is relatively less accurate as it depends on the unicycle state but has no direct dependency on control parameters.
The circular unicycle motion prediction results in the slowest motion because it is the most conservative and less accurate compared to other unicycle motion predictions, relying solely on the unicycle's distance to the goal.
Overall, feedback motion prediction that strongly depends on the robot's state and control parameters can more accurately capture the closed-loop robot motion, enabling faster safe robot motion around obstacles.

\section{Conclusions}
\label{sec.Conclusions}

In this paper, we introduce a highly simple, highly accurate triangular feedback motion prediction method for a standard unicycle motion control approach with angular feedback linearization.  
We achieve this by explicitly determining the total turning effort and the final orientation of the unicycle control, enabling us to build an intuitive geometric characterization of the closed-loop unicycle motion.
We also present circular and conic feedback motion prediction methods based on other important geometric properties of the unicycle control, such as decreasing the positional goal distance and the orientational goal alignment distance.
In addition to mathematically demonstrating the superior accuracy of the triangular motion prediction over the circular and conic alternatives (\refprop{prop.Inclusion_Order}), we showcase and compare example numerical applications of these feedback motion prediction methods for safe path following around obstacles. 
We observe that the strong dependency of the triangular feedback motion prediction on the unicycle state and control parameters yields a comparable performance as the exact forward-reachable motion set of the unicycle control at a significantly lower computational cost.
This makes the triangular feedback motion prediction the most suitable method for real-time safety-critical navigation applications of unicycle mobile robots.

Our current work in progress focuses on perception-aware safe unicycle motion control with real hardware experiments, especially for safe robot navigation in unknown dynamic environments \cite{arslan_koditschek_IJRR2019}. 
Another promising research direction is the use of feedback motion prediction in model predictive control and sampling-based motion planning \cite{arslan_berntorp_tsiotras_ICRA2017}.

\bibliographystyle{IEEEtran}
\bibliography{references}

\begin{thebibliography}{10}
\providecommand{\url}[1]{#1}
\csname url@rmstyle\endcsname
\providecommand{\newblock}{\relax}
\providecommand{\bibinfo}[2]{#2}
\providecommand\BIBentrySTDinterwordspacing{\spaceskip=0pt\relax}
\providecommand\BIBentryALTinterwordstretchfactor{4}
\providecommand\BIBentryALTinterwordspacing{\spaceskip=\fontdimen2\font plus
\BIBentryALTinterwordstretchfactor\fontdimen3\font minus
  \fontdimen4\font\relax}
\providecommand\BIBforeignlanguage[2]{{%
\expandafter\ifx\csname l@#1\endcsname\relax
\typeout{** WARNING: IEEEtran.bst: No hyphenation pattern has been}%
\typeout{** loaded for the language `#1'. Using the pattern for}%
\typeout{** the default language instead.}%
\else
\language=\csname l@#1\endcsname
\fi
#2}}

\bibitem{renan_nascimento_RAS2021}
{\'{I}}.~R. da~Costa~Barros and T.~P. Nascimento, ``Robotic mobile fulfillment
  systems: A survey on recent developments and research opportunities,''
  \emph{Robot. Auton. Syst.}, vol. 137, p. 103729, 2021.

\bibitem{fiorini_botturi_ISR2008}
P.~Fiorini and D.~Botturi, ``Introducing service robotics to the pharmaceutical
  industry,'' \emph{Intelligent Service Robotics}, vol.~1, no.~4, pp. 267--280,
  2008.

\bibitem{kim_etal_RAM2009}
M.~Kim, S.~Kim, S.~Park, M.-T. Choi, M.~Kim, and H.~Gomaa, ``Service robot for
  the elderly,'' \emph{IEEE Robotics \& Automation Magazine}, vol.~16, no.~1,
  pp. 34--45, 2009.

\bibitem{jones_RAM2006}
J.~Jones, ``Robots at the tipping point: the road to irobot roomba,''
  \emph{IEEE Robot Autom Mag.}, vol.~13, no.~1, pp. 76--78, 2006.

\bibitem{gul_rahiman_alhady_sahal_CE2019}
F.~Gul, W.~Rahiman, and S.~S. Nazli~Alhady, ``A comprehensive study for robot
  navigation techniques,'' \emph{Cogent Engineering}, vol.~6, no.~1, 2019.

\bibitem{snape_etal_IROS2010}
J.~Snape, J.~van~den Berg, S.~J. Guy, and D.~Manocha, ``Smooth and
  collision-free navigation for multiple robots under differential-drive
  constraints,'' in \emph{IEEE/RSJ International Conference on Intelligent
  Robots and Systems}, 2010, pp. 4584--4589.

\bibitem{philippsen_siegwart_ICRA2003}
R.~Philippsen and R.~Siegwart, ``Smooth and efficient obstacle avoidance for a
  tour guide robot,'' in \emph{IEEE International Conference on Robotics and
  Automation}, vol.~1, 2003, pp. 446--451.

\bibitem{chakravarthy_debasish_TSM1998}
A.~Chakravarthy and D.~Ghose, ``Obstacle avoidance in a dynamic environment: a
  collision cone approach,'' \emph{IEEE Trans. Syst. Man Cybern. Part A},
  vol.~28, pp. 562--574, 1998.

\bibitem{fox_burgard_thrun_RAM1997}
D.~Fox, W.~Burgard, and S.~Thrun, ``The dynamic window approach to collision
  avoidance,'' \emph{IEEE Robotics Automation Magazine}, vol.~4, no.~1, pp.
  23--33, 1997.

\bibitem{fiorini_shiller_IJRR1998}
P.~Fiorini and Z.~Shiller, ``Motion planning in dynamic environments using
  velocity obstacles,'' \emph{The International Journal of Robotics Research},
  vol.~17, no.~7, pp. 760--772, 1998.

\bibitem{arslan_koditschek_ICRA2017}
{\"O}.~Arslan and D.~E. Koditschek, ``Smooth extensions of feedback motion
  planners via reference governors,'' in \emph{IEEE Int. Conf. on Robotics and
  Automation}, 2017, pp. 4414--4421.

\bibitem{isleyen_vandewouw_arslan_RAL2022}
A.~{\.I}{\c{s}}leyen, N.~van~de Wouw, and {\"O}.~Arslan, ``From low to high
  order motion planners: Safe robot navigation using motion prediction and
  reference governor,'' \emph{IEEE Robotics and Automation Letters}, vol.~7,
  no.~4, pp. 9715--9722, 2022.

\bibitem{arslan_isleyen_arXiv2023}
{\"O}.~Arslan and A.~{\.I}{\c{s}}leyen, ``Vandermonde trajectory bounds for
  linear companion systems,'' \emph{arXiv:2302.10995}, 2023.

\bibitem{li_yi_niu_atanasov_arXiv2023}
Z.~Li, Y.~Yi, Z.~Niu, and N.~Atanasov, ``East: Environment aware safe tracking
  using planning and control co-design,'' \emph{arXiv preprint
  arXiv:2310.01363}, 2023.

\bibitem{astolfi_JDSMC1999}
A.~Astolfi, ``{Exponential Stabilization of a Wheeled Mobile Robot Via
  Discontinuous Control},'' \emph{Journal of Dynamic Systems, Measurement, and
  Control}, vol. 121, no.~1, pp. 121--126, 1999.

\bibitem{astolfi_SCL1996}
------, ``Discontinuous control of nonholonomic systems,'' \emph{Systems \&
  Control Letters}, vol.~27, no.~1, pp. 37--45, 1996.

\bibitem{lee_etal_IROS2000}
S.-O. Lee, Y.-J. Cho, M.~Hwang-Bo, B.-J. You, and S.-R. Oh, ``A stable
  target-tracking control for unicycle mobile robots,'' in \emph{IEEE/RSJ
  International Conference on Intelligent Robots and Systems}, 2000, pp.
  1822--1827.

\bibitem{deluca_oriolo_vandittelli_IFAC2000}
A.~De~Luca, G.~Oriolo, and M.~Vendittelli, ``Stabilization of the unicycle via
  dynamic feedback linearization,'' \emph{IFAC Proceedings Volumes}, vol.~33,
  no.~27, pp. 687--692, 2000.

\bibitem{das_etal_TRA2002}
A.~Das, R.~Fierro, V.~Kumar, J.~Ostrowski, J.~Spletzer, and C.~Taylor, ``A
  vision-based formation control framework,'' \emph{IEEE Transactions on
  Robotics and Automation}, vol.~18, no.~5, pp. 813--825, 2002.

\bibitem{novel_campion_bastin_IJRR1995}
B.~d'Andr{\'e}a Novel, G.~Campion, and G.~Bastin, ``Control of nonholonomic
  wheeled mobile robots by state feedback linearization,'' \emph{The
  International Journal of Robotics Research}, vol.~14, no.~6, pp. 543--559,
  1995.

\bibitem{isleyen_vandewouw_arslan_IROS2023}
A.~{\.I}{\c{s}}leyen, N.~van~de Wouw, and {\"O}.~Arslan, ``Feedback motion
  prediction for safe unicycle robot navigation,'' in \emph{IEEE/RSJ
  International Conference on Intelligent Robots and Systems (in press)}, 2023.

\bibitem{isleyen_vandewouw_arslan_CDC2023}
------, ``Adaptive headway motion control and motion prediction for safe
  unicycle motion design,'' in \emph{(accepted to) IEEE Conference on Decision
  and Control}, 2023.

\bibitem{lefevre_vasquez_laugier_ROBOMECH2014}
S.~Lef{\`e}vre, D.~Vasquez, and C.~Laugier, ``A survey on motion prediction and
  risk assessment for intelligent vehicles,'' \emph{ROBOMECH Journal}, vol.~1,
  no.~1, pp. 1--14, 2014.

\bibitem{arslan_arXiv2022}
{\"O}.~Arslan, ``Time governors for safe path-following control,'' \emph{arXiv
  preprint arXiv:2212.01444}, 2022.

\bibitem{althoff_dolan_TR02014}
M.~Althoff and J.~M. Dolan, ``Online verification of automated road vehicles
  using reachability analysis,'' \emph{IEEE Transactions on Robotics}, vol.~30,
  no.~4, pp. 903--918, 2014.

\bibitem{althoff_frehse_girard_ARCRAS2021}
M.~Althoff, G.~Frehse, and A.~Girard, ``Set propagation techniques for
  reachability analysis,'' \emph{Annual Review of Control, Robotics, and
  Autonomous Systems}, vol.~4, pp. 369--395, 2021.

\bibitem{mitchell_HSCC2007}
I.~M. Mitchell, ``Comparing forward and backward reachability as tools for
  safety analysis,'' in \emph{International Workshop on Hybrid Systems:
  Computation and Control}, 2007, pp. 428--443.

\bibitem{schubert_richter_wanielik_ICIF2008}
R.~Schubert, E.~Richter, and G.~Wanielik, ``Comparison and evaluation of
  advanced motion models for vehicle tracking,'' in \emph{International
  Conference on Information Fusion}, 2008, pp. 1--6.

\bibitem{schreier_willert_adamy_TITS2016}
M.~Schreier, V.~Willert, and J.~Adamy, ``An integrated approach to
  maneuver-based trajectory prediction and criticality assessment in arbitrary
  road environments,'' \emph{IEEE Transactions on Intelligent Transportation
  Systems}, vol.~17, no.~10, pp. 2751--2766, 2016.

\bibitem{bennewitz_etal_IJRR2005}
M.~Bennewitz, W.~Burgard, G.~Cielniak, and S.~Thrun, ``Learning motion patterns
  of people for compliant robot motion,'' \emph{The International Journal of
  Robotics Research}, vol.~24, no.~1, pp. 31--48, 2005.

\bibitem{boyadzhiev_CMJ1999}
K.~N. Boyadzhiev, ``Spirals and conchospirals in the flight of insects,''
  \emph{The College Mathematics Journal}, vol.~30, no.~1, pp. 23--31, 1999.

\bibitem{muller_wehner_JCP1994}
M.~M{\"u}ller and R.~Wehner, ``The hidden spiral: systematic search and path
  integration in desert ants, {C}ataglyphis fortis,'' \emph{Journal of
  Comparative Physiology A}, vol. 175, pp. 525--530, 1994.

\bibitem{nikiforov1988special}
A.~F. Nikiforov and V.~B. Uvarov, \emph{Special functions of mathematical
  physics}.\hskip 1em plus 0.5em minus 0.4em\relax Springer, 1988, vol. 205.

\bibitem{arslan_koditschek_IJRR2019}
{\"O}.~Arslan and D.~E. Koditschek, ``Sensor-based reactive navigation in
  unknown convex sphere worlds,'' \emph{The International Journal of Robotics
  Research}, vol.~38, no. 2-3, pp. 196--223, 2019.

\bibitem{arslan_berntorp_tsiotras_ICRA2017}
O.~Arslan, K.~Berntorp, and P.~Tsiotras, ``Sampling-based algorithms for
  optimal motion planning using closed-loop prediction,'' in \emph{IEEE Int.
  Conf. Robot. Autom.}, 2017, pp. 4991--4996.

\bibitem{khalil_NonlinearSystems2001}
H.~K. Khalil, \emph{Nonlinear Systems}.\hskip 1em plus 0.5em minus 0.4em\relax
  Prentice Hall, 2001.

\bibitem{blanchini_Automatica1999}
F.~Blanchini, ``Set invariance in control,'' \emph{Automatica}, vol.~35,
  no.~11, pp. 1747 -- 1767, 1999.

\bibitem{chaney_NA1990}
R.~W. Chaney, ``Piecewise $c^k$ functions in nonsmooth analysis,''
  \emph{Nonlinear Analysis: Theory, Methods \& Applications}, vol.~15, no.~7,
  pp. 649--660, 1990.

\bibitem{kuntz_scholtes_JMAA1994}
L.~Kuntz and S.~Scholtes, ``Structural analysis of nonsmooth mappings, inverse
  functions, and metric projections,'' \emph{Journal of Mathematical Analysis
  and Applications}, vol. 188, no.~2, pp. 346 -- 386, 1994.

\bibitem{liu_JCO1995}
J.~Liu, ``Sensitivity analysis in nonlinear programs and variational
  inequalities via continuous selections,'' \emph{SIAM Journal on Control and
  Optimization}, vol.~33, no.~4, pp. 1040--1060, 1995.

\end{thebibliography}

\appendices 

\section{Proofs}
\label{app.Proofs}

\subsection{Proof of \reflem{lem.Global_Convergence}}
\label{app.Global_Convergence}

\begin{proof}
By considering $V_{\goal}(\pos, \ort) = \headingerror_{\goal}(\pos, \ort)^2 + \norm{\goal \!-\! \pos}^2$ as a Lyapunov function candidate, one can verify using \refeq{eq.Distance_To_Goal_Dynamics} and \refeq{eq.Linear_Angular_Heading_Error_Dynamics} that for any $\pos \neq \goal$ 
\begin{align*}
\dot{V}_{\goal}(\pos, \ort) = -2\anggain \headingerror_{\goal}(\pos, \ort)^2 - 2 \lingain \plist{\ovecTsmall{\ort}\! \!(\goal\!-\!\pos)\!\!}^{\!2} \!< 0  
\end{align*}
since $\headingerror_{\goal}(\pos, \ort) = 0$ implies \mbox{$\plist{\!\ovecTsmall{\ort}\! \!(\goal\!-\!\pos)\!\!}^{\!2}\!\! =\! \norm{\goal \! - \! \pos}^2$}.
Hence, the result follows from the LaSalle's invariance principle \cite{khalil_NonlinearSystems2001}.
\end{proof}

\subsection{Proof of \refprop{prop.Total_Turning_Effort}}
\label{app.Total_Turning_Effort}

\begin{proof}
The unicycle control in \refeq{eq.Bidirectional_Unicycle_Control} results in the linear heading error dynamics $\dot{\headingerror}_{\goal}(\pos, \ort) = - \anggain \headingerror_{\goal}(\pos, \ort)$ in \refeq{eq.Linear_Angular_Heading_Error_Dynamics}  whose analytical solution along the closed-loop unicycle trajectory $(\pos(t), \ort(t))$ is given for $t \geq 0$ by
\begin{align}
\headingerror_{\goal}(\pos(t), \ort(t)) = \headingerror_{\goal}(\pos_0, \ort_0) e^{-\anggain t}. 
\end{align}
This leads to the following angular velocity profile in \refeq{eq.Bidirectional_Unicycle_Control}:
\begin{align}
\angvel_{\goal}(\pos(t), \ort(t)) & =  \anggain  \headingerror_{\goal}(\pos_0, \ort_0) e^{-\anggain t} \nonumber
\\
& \quad \quad +  \frac{\lingain}{2} \sin\plist{2\headingerror_{\goal}(\pos_0, \ort_0) e^{-\anggain t}} .
\end{align} 
Therefore, the total turning effort can be calculated using integration by substitution (with $u = 2 \headingerror_{\goal}(\pos_0, \ort_0) e^{-\anggain t}$) 
as
\begin{subequations}
\begin{align}
\totalturning_{\goal}(\pos_0, \ort_0) \hspace{-8mm}& \hspace{+8mm}= \int_{0}^{\infty} \angvel_{\goal}(\pos(t), \ort(t)) \diff t 
\\
& = \int_{0}^{\infty}  \anggain  \headingerror_{\goal}(\pos_0, \ort_0) e^{-\anggain t} \diff t \nonumber \\
& \quad \quad + \int_{0}^{\infty} \frac{\lingain}{2} \sin\plist{2\headingerror_{\goal}(\pos_0, \ort_0) e^{-\anggain t}} \diff t
\\
& = \headingerror_{\goal}(\pos_0, \ort_0) + \frac{\lingain}{2 \anggain}\int_{0}^{2 \headingerror_{\goal}(\pos_0, \ort_0)} \frac{\sin(u)}{u} \diff u \!\!
\\
& = \headingerror_{\goal}(\pos_0, \ort_0) + \frac{\lingain}{2 \anggain} \Si\plist{2 \headingerror_{\goal}(\pos_0, \ort_0)}
\end{align}
\end{subequations}
where the substitution variable $u = 2 \headingerror_{\goal}(\pos_0, \ort_0) e^{-\anggain t}$ satisfies $\diff u = -2 \anggain \headingerror_{\goal}(\pos_0, \ort_0) e^{-\anggain  t}  \diff t = - \anggain u \diff t$, $\diff t = -\frac{1}{\anggain  u} \diff u$, $u(0) = 2 \headingerror_{\goal}(\pos_0, \ort_0)$ and  $u(\infty) = 0$. 
\end{proof}

\subsection{Proof of \reflem{lem.Projected_Goal}}
\label{app.Projected_Goal}

\begin{proof}
The result follows from the triangular geometry as follows: $\cpos_{\goal}(\pos, \ort)$ is located away from $\pos$ by a distance of  $\cos(\headingerror_{\goal}\!(\pos,\ort)\!) \!\norm{\goal\! - \pos}$ along the vector $\Rmat(-\headingerror_{\goal}\!(\pos,\ort)\!) \!(\goal \!- \pos)$, whereas $\cposr_{\goal}(\pos, \ort)$ has the same distance from $\pos$ along the reflected direction $\Rmat(+\headingerror_{\goal}(\pos,\ort)\!) (\goal\! - \pos)$.
\end{proof}

\subsection{Proof of \reflem{lem.Heading_Line_Intersection}}
\label{app.Heading_Line_Intersection}

%
\begin{proof}
For $\lingain \leq \anggain$, the total turning effort  is bounded in \refeq{eq.Turning_Effort_Heading_Error_Inequality} as $\absval{\totalturning_{\goal}(\pos, \ort)} < \pi$ and so the initial and final heading lines always intersect.
The current and final heading errors and the total turning effort also satisfy \refeq{eq.Total_Turning_Initila_Final_Heading_Error_Equality}.
%
Hence, the unicycle position $\pos$, the goal location $\goal$, and the intersection point $\xpos_{\goal}(\pos, \ort)$ define a triangle with corresponding interior angles $\absval{\headingerror_{\goal}(\pos,\ort)}$, $\absval{\headingerror_{\goal}(\pos, \fort_{\goal}(\pos, \ort))}$, and $\pi - \absval{\totalturning_{\goal}(\pos, \ort)}$, respectively.  
Due to the sine theorem, the intersection point $\xpos_{\goal}(\pos, \ort)$ of the current and final heading lines satisfies
{\small
\begin{align*}
\scalebox{0.95}{$
\dfrac{\norm{\xpos_{\goal}(\pos, \ort) - \pos}}{\absval{\sin(\headingerror_{\goal}(\pos, \fort_{\goal}(\pos, \ort)\!)\!)}} = \dfrac{\norm{\xpos_{\goal}(\pos, \ort) - \goal}}{\absval{\sin(\headingerror_{\goal}(\pos,\ort)\!)}} = \dfrac{\norm{\pos - \goal}}{\absval{\sin(\totalturning_{\goal}(\pos, \ort)\!)}}
$}
\end{align*} 
}%
and it follows from  the definition of the angular heading error $\headingerror_{\goal}(\pos,\ort)$ in \refeq{eq.Angular_Heading_Error} that

\noindent
{\small
\begin{align*}
\frac{\xpos_{\goal}(\pos, \ort) - \pos}{\norm{\xpos_{\goal}(\pos, \ort) - \pos}} &= \Rmat(-\headingerror_{\goal}(\pos, \ort)\!) \frac{\goal - \pos}{\norm{\goal - \pos}}
\\
\frac{\xpos_{\goal}(\pos, \ort) - \goal}{\norm{\xpos_{\goal}(\pos, \ort) - \goal}} &= -\Rmat(-\headingerror_{\goal}(\pos, \fort(\pos, \ort)\!)\!) \frac{\goal - \pos}{\norm{\goal - \pos}}.
\end{align*}
}%
Hence, one can determine the location of the intersection point $\xpos_{\goal}(\pos, \ort)$ using these observations as in  \refeq{eq.Heading_Line_Intersection}, whereas its reflection point $\xposr_{\goal}(\pos, \ort)$ can be obtained by changing the direction of rotation of the goal direction $\goal - \pos$. 
\end{proof}

\subsection{Proof of \refprop{prop.Triangular_Motion_Bound}}
\label{app.Triangular_Motion_Bound}

\begin{proof}
The result trivially holds for $\pos_0 = \goal$ since $\xpos_{\goal}(\pos_0, \ort_0) \in \blist{\pos_0, \goal}$. Hence, for the rest of the proof, we assume the unicycle is away from the goal, i.e., $\pos_0 \neq \goal$.

To observe that the unicycle position $\pos(t)$ along the closed-loop unicycle trajectory $(\pos(t), \ort(t))$  stays in $\conv(\pos_0, \goal, \xpos_{\goal}(\pos_0, \ort_0))$, we show below that if the unicycle approaches to the boundary of $\conv(\pos_0, \goal, \xpos_{\goal}(\pos_0, \ort_0))$, its velocity vector $\dot{\pos}(t)$ always points towards a point in $\conv(\pos_0, \goal, \xpos_{\goal}(\pos_0, \ort_0))$. Therefore, the unicycle position $\pos(t)$ remains in $\conv(\pos_0, \goal, \xpos_{\goal}(\pos_0, \ort_0))$ because of the Nagumo sub-tangentiality condition \cite{blanchini_Automatica1999}.   

Recall from \refeq{eq.Linear_Angular_Heading_Error_Dynamics} that the linear angular heading dynamics ensure a monotonically decaying angular heading error, i.e.,
\begin{align*}
\absval{\headingerror_{\goal}(\pos(t), \ort(t))} \geq \absval{\headingerror_{\goal}(\pos(t'), \ort(t'))} \quad \forall t'\geq t \geq 0.
\end{align*}
Moreover, the linear velocity control is defined in \refeq{eq.Bidirectional_Unicycle_Control} as
\begin{align*}
\dot{\pos} &= \linvel_{\goal}(\pos, \ort) \ovectsmall{\ort} = \lingain \ovecTsmall{\ort}(\goal - \pos) \ovectsmall{\ort} 
\\
& = \lingain \cos(\headingerror_{\goal}(\pos, \ort)) \Rmat(-\headingerror_{\goal}(\pos,\ort)) (\goal - \pos)
\end{align*}
which is due to the following relations
{\small
\begin{align*}
\absval{\ovecTsmall{\ort}(\goal - \pos)} &=  \cos(\headingerror_{\goal}(\pos, \ort)) \norm{\pos - \goal}   \\
\frac{\dot{\pos}}{\norm{\dot{\pos}}} &= \Rmat(-\headingerror_{\goal}(\pos, \ort)\!) \frac{\goal - \pos}{\norm{\goal - \pos}}.
\end{align*}
}%
So the unicycle moves in the direction of $\scalebox{0.89}{$\Rmat(\!-\headingerror_{\goal}\!(\pos,\!\ort )\!) (\goal\!\! - \! \pos)$}$.

Accordingly, one can observe as below that if the unicycle reaches the boundary of $\conv(\pos_0, \goal, \xpos_{\goal}(\pos_0, \ort_0))$, it moves towards a point within that convex hull.

$\bullet$ If $\pos(t) = \goal$, then $\dot{\pos}(t) \!=\! 0$ and $\pos(t') \!=\! \goal$ for all $t'\geq t$.

$\bullet$ If $\pos(t) \in [\xpos_{\goal}(\pos_0, \ort_0), \goal)$, then we have 
\begin{align*}
\headingerror_{\goal}(\pos(t), \fort_{\goal}(\pos(t), \ort(t)\!)\!) &= \headingerror_{\goal}(\pos(t), \fort_{\goal}(\pos_0, \ort_0)\!) \\
&  = 0 = -\frac{\lingain}{2\anggain} \Si(\headingerror_{\goal}(\pos(t), \ort(t)\!)\!)
\end{align*}
which implies the unicycle has zero angular heading error, i.e.,  $\headingerror_{\goal}(\pos(t), \ort(t)\!) \!= \!0$,
 and so it moves towards the goal $\goal$ since  $ \dot{\pos} \,  \sim \, \Rmat(-\headingerror_{\goal}(\pos(t),\ort(t)\!)\!) (\goal - \pos(t)\!) = \goal - \pos(t)$.

$\bullet$ If $\pos(t) \in [\pos_0, \xpos_{\goal}(\pos_0, \ort_0)]$ or $\pos(t) \in [\pos_0, \goal)$, then  the unicycle heading error satisfies 
\begin{align}
\absval{\headingerror_{\goal}(\pos(t), \ort(t)\!)} & \leq  \absval{\headingerror_{\goal}(\pos_0, \ort_0)} \nonumber
\\
&\leq \absval{\measuredangle (\xpos_{\goal}(\pos_0, \ort_0)\! -\! \pos(t) , \goal\! -\! \pos(t)\!)} \label{eq.Angular_Triangle_Geometry}
\end{align}
where all these angles share the same sign, and we denote by  $\measuredangle(\vect{u}, \vect{v}):= \arctan\plist{\!\tr{\plist{\Rmat(\tfrac{\pi}{2}) \vect{u}}\!} \vect{v}\Big/\tr{\vect{u}} \vect{v}\!}$ the counterclockwise angle from $\vect{u}$ to $\vect{v}$, e.g., $\headingerror_{\goal}(\pos, \ort) = \measuredangle( \scalebox{0.85}{$\ovectsmall{\ort}$}, \goal - \pos)$. 
As a result, the unicycle velocity $\dot{\pos}$ points in the direction of $\Rmat(-\headingerror_{\goal}(\pos(t),\ort(t))) (\goal - \pos)$ from $\pos(t)$  to a point between the intersection point $\xpos_{\goal}(\pos_0, \ort_0)$ and the goal $\goal$, within the set  $\conv(\pos_0, \goal, \xpos_{\goal}(\pos_0, \ort_0))$.

Therefore, the unicycle always moves towards a point within $\conv(\pos_0, \goal, \xpos_{\goal}(\pos_0, \ort_0))$, which in turn defines a bound on the unicycle position trajectory $\pos(t)$ for all future times $t \geq 0$, which completes the proof.
\end{proof}

\subsection{Proof of \refprop{prop.Circular_Conic_Motion_Prediction}}
\label{app.Circular_Conic_Motion_Prediction}

\begin{proof}
%
The circular motion prediction set results from the continuously decreasing Euclidean distance between the unicycle position and the goal position, as described in \refeq{eq.Distance_To_Goal_Dynamics}, along the closed-loop unicycle motion.

The conic motion prediction set can be verified by using its representation as a union of a triangle and a ball as
\begin{align}
\!\!\motionset_{\goal, \cone}(\pos_0, \ort_0) =  &\, \conv(\pos_0, \cpos_{\goal}(\pos_0, \ort_0), \cposr_{\goal}(\pos_0, \ort_0) ) \nonumber \\
& \,\, \cup \ball(\goal\!, \sin(\absval{\headingerror_{\goal}(\pos_0, \ort_0)}) \norm{\goal \!- \!\pos_0})\!\!\!
\end{align}
where $\cpos_{\goal}(\pos, \ort)$ is the projected goal point onto the current heading line that bounds the motion cone  $\motionset_{\goal, \cone}(\pos, \ort)$, and $\cposr_{\goal}(\pos, \ort)$ is its reflection with respect to the goal line (i.e., the cone center line), as defined in \refeq{eq.Projected_Goal}.
We have from the circular motion prediction that if the unicycle enters $\ball(\goal\!, \sin(\absval{\headingerror_{\goal}(\pos_0, \ort_0)}) \norm{\goal \!- \!\pos_0})$, then it remains inside that circular motion set.
Also, as in the proof of \refprop{prop.Triangular_Motion_Bound} and also described in \cite{isleyen_vandewouw_arslan_IROS2023}, one can observe that if the unicycle position is in $\conv(\pos_0, \goal, \cpos_{\goal}(\pos, \ort)) \setminus \ball(\goal\!, \sin(\absval{\headingerror_{\goal}(\pos_0, \ort_0)}) \norm{\goal \!- \!\pos_0})$, then the unicycle moves towards a point in between $\goal$ and $\cpos_{\goal}(\pos, \ort)$. Therefore, the unicycle position trajectory $\pos(t)$ remains in $\conv(\pos_0, \goal, \cpos_{\goal}(\pos, \ort))$ until it reaches the ball $\ball(\goal\!, \sin(\absval{\headingerror_{\goal}(\pos_0, \ort_0)}) \norm{\goal \!- \!\pos_0})$ that contains the unicycle position trajectory for all the remaining future times. 
Hence, the result follows since $\motionset_{\goal, \cone}(\pos, \ort)$ is a superset of $\conv(\pos, \goal, \cpos_{\goal}(\pos, \ort)) \cup \ball(\goal\!, \sin(\absval{\headingerror_{\goal}(\pos, \ort)}) \norm{\goal \!- \!\pos})$.
\end{proof}

\subsection{Proof of \refprop{prop.Positive_Inclusion}}
\label{app.Positive_Inclusion}

\begin{proof}
The positive inclusion of the ball-shaped motion prediction set $\motionset_{\ctrl_{\goal}, \ball}(\pos, \ort)$ follows from that the distance to the goal, $\norm{\pos - \goal}$, is decreasing over time as shown in~\refeq{eq.Distance_To_Goal_Dynamics}.

The positive inclusion of the cone-shape motion prediction set $\motionset_{\ctrl_{\goal}, \cone}(\pos, \ort)$ is due to its convex hull property,
\begin{align*}
\motionset_{\ctrl_{\goal}, \cone}(\pos, \ort) = \conv\plist{\pos, \ball(\goal, \sin(\absval{\headingerror_{\goal}(\pos,\ort)} \norm{\pos - \goal})\!)\!}
\end{align*}  
since both the absolute angular heading error and the distance to the goal are decreasing along the closed-loop unicycle trajectory, as shown in \refeq{eq.Angular_Heading_Error_Dynamics} and \refeq{eq.Distance_To_Goal_Dynamics}, respectively. 

The positive inclusion of the diamond shaped motion prediction  $\motionset_{\ctrl_{\goal}, \dshape}(\pos, \ort) = \conv\plist{\pos, \goal, \xpos_{\goal}(\pos, \ort), \xposr_{\goal}(\pos, \ort)}$ along the closed-loop unicycle trajectory $(\pos(t), \ort(t))$ is due to its convexity and the fact that  for all $t' \geq t$ we have  $\pos(t'), \goal \in \motionset_{\ctrl_{\goal}, \dshape}(\pos(t), \ort(t))$ and  $\xpos_{\goal}(\pos(t'), \ort(t'))$ is in between $ \xpos_{\goal}(\pos(t), \ort(t))$ and $\goal$, which also, by symmetry, holds for its reflection, i.e., $\xposr_{\goal}(\pos(t'), \ort(t')) \in \blist{\xposr_{\goal}(\pos(t), \ort(t)), \goal}$.   
Here, one can conclude $\xpos_{\goal}(\pos(t'), \ort(t')) \in \blist{\xpos_{\goal}(\pos(t), \ort(t)), \goal}$ for all $t'\geq t$ using  the geometry of the triangle  $\conv(\pos, \goal, \xpos_{\goal}(\pos,\ort))$ as in the proof of \refprop{prop.Triangular_Motion_Bound} using \refeq{eq.Angular_Triangle_Geometry}. 
Because the final heading line is constant, $\conv(\pos, \goal, \xpos_{\goal}(\pos,\ort))$ is a triangular motion bound on the closed-loop unicycle position trajectory, i.e., $\pos(t') \in \conv(\pos(t), \goal, \xpos_{\goal}(\pos(t), \ort(t)))$ for all $t'\geq t$, and the absolute angular heading error is decreasing as in \refeq{eq.Angular_Heading_Error_Dynamics}, i.e., $\absval{\headingerror_{\goal}\!(\pos(t), \ort(t)\!)} \! \geq\! \absval{\headingerror_{\goal}\!(\pos(t'), \ort(t')\!)}$ for all $t'\geq t$.  

Last, the positive inclusion of the forward-reachable motion set $\motionset_{\ctrl_{\goal}, \fwdsim}$
is directly evident from \refdef{def.Forward_Simulated_Motion_Prediction}. 
\end{proof}

\subsection{Proof of \refprop{prop.Radial_Decay}}
\label{app.Radial_Decay}

\begin{proof}
The radii of the ball-shaped, cone-shaped, and forward-reachable motion prediction sets relative to the goal are all equal and  given by the unicycle's position distance to the goal $\norm{\pos - \goal}$, i.e., 
\begin{align*}
\max_{\pos' \in \motionset_{\ctrl_{\goal}, \ball}(\pos, \ort)} \norm{\pos' \!- \! \goal} &\!=\! \max_{\pos' \in \motionset_{\ctrl_{\goal}, \cone}(\pos, \ort)} \norm{\pos' \!- \! \goal} 
\\
& \!=\! \max_{\pos' \in \motionset_{\ctrl_{\goal}, \fwdsim}(\pos, \ort)} \norm{\pos' \!- \! \goal} \!= \norm{\pos \!-\! \goal}
\end{align*}
which asymptotically decreases to zero as shown in \refeq{eq.Distance_To_Goal_Dynamics}. 

The radius of the diamond-shaped motion prediction set $\motionset_{\ctrl_{\goal}, \dshape}(\pos, \ort) \! =\! \conv\plist{\pos, \goal, \xpos_{\goal}(\pos, \ort), \xposr_{\goal}(\pos, \ort)\!}$ is given by 
\begin{align*}
\max_{\pos' \in \motionset_{\ctrl_{\goal}, \dshape}(\pos, \ort)} \norm{\pos' \!- \! \goal} = \max(\norm{\pos - \goal}, \norm{\xpos_{\goal}(\pos, \ort) - \goal}) 
\end{align*}
where $\norm{\pos - \goal}$ asymptotically decreases to zero as shown in \refeq{eq.Distance_To_Goal_Dynamics}, and  $\norm{\xpos_{\goal}(\pos, \ort) - \goal}$ becomes bounded above by $\norm{\pos - \goal}$ in finite time%
\footnote{As discussed in the proof of \reflem{lem.Heading_Line_Intersection}, we have from the sine theorem 
\begin{align*}
\norm{\xpos_{\goal}(\pos, \ort) - \goal} = \frac{\sin(\headingerror_{\goal}(\pos, \ort))}{\sin(\totalturning_{\goal}(\pos, \ort))}\norm{\pos - \goal}
\end{align*}
and $\absval{\headingerror_{\goal}(\pos, \ort)} \leq \absval{\totalturning_{\goal}(\pos, \ort)} \leq (1 + \tfrac{\lingain}{\anggain})\absval{\headingerror_{\goal}(\pos, \ort)}$. This implies due to the monotonicity of the sine function over $[-\pi/2, \pi/2]$ that for $\lingain \leq \anggain$ 
\begin{align*}
\absval{\headingerror_{\goal}(\pos, \ort)} \leq \pi/4 \Longrightarrow \norm{\xpos_{\goal}(\pos, \ort) - \goal} \leq \norm{\pos - \goal}.  
\end{align*}
}
due to the exponential decay of the angular heading error as shown in \refeq{eq.Angular_Heading_Error_Dynamics}. 
Thus, the result follows.
\end{proof}

\subsection{Proof of \refprop{prop.Motion_Prediction_Distance}}
\label{app.Motion_Prediction_Distance}

\begin{proof}
The result follows from the following properties of locally Lipschitz functions: (i) a piecewise continuously differentiable function is locally Lipschitz on its domain \cite{chaney_NA1990}, (ii) metric projection onto convex sets are piecewise continuously differentiable (and so locally Lipschitz) \cite{kuntz_scholtes_JMAA1994}, (iii) a continuous selection and composition of locally Lipschitz continuous functions is also locally Lipschitz \cite{liu_JCO1995}, and (iv) the
minimum distance to a set is Lipschitz continuous under affine transformations of that set \cite{isleyen_vandewouw_arslan_RAL2022}.
For example, the minimum set distance $\min_{\pos' \in \motionset_{\ctrl_{\goal}}(\pos, \ort)} \norm{\vect{y} - \pos'}$ is locally Lipschitz continuous with respect to $\vect{y}$  due to the properties (ii) and (iii) since the Euclidean norm is Lipschitz.

The minimum distance of a point $\vect{y}$ to the ball-shaped motion prediction set $\motionset_{\ctrl_{\goal}, \ball}(\pos, \ort) = \ball(\goal, \norm{\pos - \goal})$  is given by $\min(0, \norm{\vect{y} - \vect{\goal}} - \norm{\pos - \goal})$ and is locally Lipschitz continuous with respect to the unicycle state $(\pos, \ort)$ and the goal position $\goal$ due to the property (iii). 

The minimum distance of a point $\vect{y}$ to the cone-shaped motion prediction set $\motionset_{\ctrl_{\goal}, \cone}(\pos, \ort)$
can be determined using its decomposition into a triangle and a ball in \refeq{eq.Conic_Motion_Prediction_Decomposition} as the minimum distance of $\vect{y}$ to  $\conv(\pos, \goal, \cpos_{\goal}(\pos, \ort), \cposr_{\goal}(\pos, \ort))$ and $\ball(\goal, \norm{\cpos_{\goal}(\pos, \ort) - \goal})$.
Hence, its local Lipschitz continuity, with respect to the unicycle state $(\pos, \ort)$ and the goal position $\goal$, follows from the properties (ii), (iii), and (iv).
Because $\cpos_{\goal}(\pos, \ort)$ and $\cposr_{\goal}(\pos, \ort))$ in \refeq{eq.Projected_Goal} are continuously differentiable for $\pos \neq \goal$ and $\absval{\headingerror_{\goal}(\pos, \ort)}< \tfrac{\pi}{2}$; and otherwise $\lim_{\pos \rightarrow \goal}\cpos_{\goal}(\pos, \ort) =  \lim_{\pos \rightarrow \goal}\cposr_{\goal}(\pos, \ort) = \goal$  and  $\lim_{\absval{\headingerror_{\goal}(\pos, \ort)} \rightarrow \frac{\pi}{2}} \cpos_{\goal}(\pos, \ort) = \lim_{\absval{\headingerror_{\goal}(\pos, \ort)} \rightarrow \frac{\pi}{2}} \cposr_{\goal}(\pos, \ort)= \pos$.    

Similarly, the minimum distance of a point $\vect{y}$ to the diamond-shaped motion prediction set $\motionset_{\ctrl_{\goal}, \dshape}(\pos, \ort) = \conv\plist{\pos, \goal, \xpos_{\goal}(\pos, \ort), \xposr_{\goal}(\pos, \ort)}$ can be determined using its decomposition into two triangles, $\conv\plist{\pos, \goal, \xpos_{\goal}(\pos, \ort)}$ and $\conv\plist{\pos, \goal, \xposr_{\goal}(\pos, \ort)}$.  
Hence, the local Lipschitz continuity of the point distance to  $\motionset_{\ctrl_{\goal}, \dshape}(\pos, \ort)$, with respect to the unicycle state $(\pos, \ort)$ and the goal position $\goal$, is due to the properties (ii), (iii), and (iv).
Because $ \xpos_{\goal}(\pos, \ort)$ and $ \xposr_{\goal}(\pos, \ort)$ in \refeq{eq.Heading_Line_Intersection} are continuously differentiable for $\pos \neq \goal$ and $\absval{\headingerror_{\goal}(\pos, \ort)}< \tfrac{\pi}{2}$; and otherwise, we have $\lim_{\pos \rightarrow \goal}\xpos_{\goal}(\pos, \ort) =  \lim_{\pos \rightarrow \goal}\xposr_{\goal}(\pos, \ort) = \goal$  and 
{\small
\begin{align*}
&\!\lim_{\absval{\headingerror_{\goal}(\pos, \ort)} \rightarrow \tfrac{\pi}{2}} \clist{ \xpos_{\goal}(\pos, \ort), \xposr_{\goal}(\pos, \ort)} \\
& \hspace{8mm}=  \scalebox{0.8}{$\clist{\!\pos \!+\! \dfrac{1}{\cos(\frac{\lingain}{2\anggain} \Si(\pi)\!)}\Rmat(\tfrac{\pi}{2}) (\goal\! -\! \pos), \pos \!-\!  \dfrac{1}{\cos(\frac{\lingain}{2\anggain} \Si(\pi)\!)}\Rmat(\tfrac{\pi}{2}) (\goal \!- \!\pos)\!}$}
\end{align*}
}%
which follows from \refeq{eq.Turning_Effort_Heading_Error_Equality} and the fact that $0\!<\!\frac{\lingain}{2\anggain} \Si(\pi) \!<\! \tfrac{\pi}{2}$ for $0 \!<\! \lingain \!<\! \anggain$. This completes the proof.
\end{proof}

\subsection{Proof of \refprop{prop.Inclusion_Order}}
\label{app.Inclusion_Order}

\begin{proof}
The result follows from the triangle geometry and the sine theorem (see the proof of \reflem{lem.Heading_Line_Intersection}) since  $\xpos_{\goal}(\pos, \ort) = \cpos_{\goal}(\pos, \ort)$ for $\absval{\totalturning_{\goal}(\pos, \ort)} = \tfrac{\pi}{2}$, and $\xpos_{\goal}(\pos, \ort) \in [\pos, \cpos_{\goal}(\pos, \ort)]$ when the absolute total turning effort is less than $\tfrac{\pi}{2}$.
\end{proof}

\section{Directional Unicycle Control \& Prediction}
\label{app.Directional_Unicycle_Control}

In this part, we briefly demonstrate a simple adaptation of the bidirectional unicycle control in \refeq{eq.Bidirectional_Unicycle_Control} for directional unicycle robots (without providing proofs).
Because robots might be restricted to move in either forward or backward direction, possibly due to perception and actuation constraints such as a restricted field of sensing to the front or back, or the presence of a manipulator arm in the front or back.

\subsection{Forward Unicycle Control and Motion Prediction}

As in \refsec{sec.Unicycle_Dynamics_Control}, using angular feedback linearization, we construct the forward unicycle motion control $\fwdctrl_{\goal}(\pos, \ort):=(\fwdlinvel_{\goal}(\pos, \ort), \fwdangvel_{\goal}(\pos, \ort))$ that moves the unicycle robot in the forward direction towards the goal position $\goal$ by setting the linear and angular velocity inputs as 
{\small
\begin{subequations}\label{eq.Forward_Unicycle_Control}
\begin{align}
\fwdlinvel_{\goal}(\pos, \ort) &:= \max \plist{0, \lingain \ovecTsmall{\ort}\! \!(\goal\!-\!\pos)\! \!}  \nonumber
\\
& = \scalebox{0.92}{$\left \{ 
 \begin{array}{@{}c@{}l@{}}
\lingain \ovecTsmall{\ort}\! \!(\goal\!-\!\pos),  & \text{ if } |\fwdheadingerror_{\goal}\!(\pos, \ort)| \! \leq \! \tfrac{\pi}{2} \\
0, & \text{ otherwise }  
\end{array} \right. $}
\\
\fwdangvel_{\goal}(\pos, \ort) &\!:=\! 
\scalebox{0.92}{$\left \{ 
 \begin{array}{@{}c@{}l@{}}
\anggain \fwdheadingerror_{\goal}\!(\pos, \ort) \!+\! \frac{\lingain}{2}\sin(2\fwdheadingerror_{\goal}\!(\pos, \ort)\!),  & \text{ if } |\fwdheadingerror_{\goal}\!(\pos, \ort)|\! \leq \! \tfrac{\pi}{2} \\
\anggain \fwdheadingerror_{\goal}(\pos, \ort), & \text{ otherwise }  
\end{array} \right. $}
\end{align}
\end{subequations}
}%
where $\lingain>0$ and $\anggain>0$ are fixed positive control gains, and the angular heading error $\fwdheadingerror_{\goal}(\pos, \ort)$ of the forward-moving unicycle state $(\pos, \ort)$ relative to $\goal$ is defined as 
{\small
\begin{equation}\label{eq.Forward_Angular_Heading_Error}
\!\fwdheadingerror_{\goal}(\pos, \ort) \!:= \arctantwo\plist{\!\nvecTsmall{\ort}\!\! (\goal\!-\!\pos) ,  \! \ovecTsmall{\ort}\!\! (\goal\!-\!\pos)\!\!}.\!\!
\end{equation}
}%
Here, $\arctantwo(y,x)$ is the 2-argument inverse tangent function that returns the counterclockwise angle in radians in $[-\pi, \pi)$ from the horizontal axis to the ray starting from the origin to the point $(x,y)$. 
To resolve indeterminacy, we set $\fwdheadingerror_{\goal}(\pos, \ort) = 0$ for $\pos = \goal$.
 
Since the forward unicycle control $\fwdctrl_{\goal}$ in \refeq{eq.Forward_Unicycle_Control} generates the same closed-loop unicycle motion as the  bidirectional unicycle control $\ctrl_{\goal}$ in \refeq{eq.Bidirectional_Unicycle_Control} when $|\fwdheadingerror_{\goal}(\pos, \ort)| \leq \frac{\pi}{2}$, the ball-shaped, cone-shaped, and diamond-shaped motion predictions of the forward unicycle control $\fwdctrl_{\goal}$ can be performed using the associated motion prediction methods of the bidirectional unicycle control $\ctrl_{\goal}$  as
{\small
\begin{subequations} \label{eq.Forward_Unicycle_Motion_Predictions}
\begin{align}
\motionset_{\fwdctrl_{\goal}, \ball}(\pos, \ort)&:= \motionset_{\ctrl_{\goal}, \ball}(\pos, \ort)
\\
\!\!\motionset_{\fwdctrl_{\goal}, \cone}(\pos, \ort)&:= \!\left \{ 
\begin{array}{@{}c@{}l@{}}
\motionset_{\ctrl_{\goal}, \cone}(\pos, \ort), & \text{ if } |\fwdheadingerror_{\goal}\!(\pos, \ort)| \leq \frac{\pi}{2} \\
\motionset_{\ctrl_{\goal}, \ball}(\pos, \ort), & \text{ otherwise}
\end{array}
\right .  \!\! 
\\
\!\!\motionset_{\fwdctrl_{\goal}, \dshape}(\pos, \ort)&:= \! \left \{ 
\begin{array}{@{}l@{}}
\motionset_{\ctrl_{\goal}, \dshape}(\pos, \ort),  \text{ if } |\fwdheadingerror_{\goal}\!(\pos, \ort)| \leq \frac{\pi}{2} \\
\conv(\goal, \overline{\xpos}_{\goal}(\pos), \overline{\xpos}_{\goal}^{r}\!(\pos)\!),  \text{ otherwise}
\end{array}
\right .  \!\! 
\end{align} 
\end{subequations}
}%
where the motion prediction sets $\motionset_{\ctrl_{\goal},\ball}$, $\motionset_{\ctrl_{\goal},\cone}$, and $\motionset_{\ctrl_{\goal},\dshape}$ are defined as in \refeq{eq.Circular_Motion_Prediction}, \refeq{eq.Conic_Motion_Prediction}, \refeq{eq.Diamond-Shaped_Motion_Prediction}, respectively, and the current and final heading line intersection $\overline{\xpos}_{\goal}(\pos)$  and its reflection $\overline{\xpos}_{\goal}^{r}(\pos)$ are given for $\fwdheadingerror_{\goal}\!(\pos, \ort) = \pm \frac{\pi}{2}$ by\footnote{For $|\headingerror_{\goal}\!(\pos, \ort)| = \frac{\pi}{2}$, the final orientation and the total turning satisfy $|\headingerror_{\goal}(\pos, \ort^*_{\goal}(\pos, \ort))| = \frac{\lingain}{2\anggain} \Si(\pi)$ and $|\totalturning_{\goal}(\pos, \ort)| = \tfrac{\pi}{2} + \frac{\lingain}{2\anggain} \Si(\pi)$. }
{\small
\begin{subequations}
\begin{align}
\overline{\xpos}_{\goal}(\pos)& \scalebox{0.95}{$:= \pos \!+\! \scalebox{0.9}{$\dfrac{\sin(\frac{\lingain}{2\anggain} \Si(\pi)\!)}{\sin(\tfrac{\pi}{2}+\frac{\lingain}{2\anggain} \Si(\pi)\!)}$}  \Rmat(-\tfrac{\pi}{2}) (\goal\! - \!\pos)$}\! 
\\
\overline{\xpos}^{r}_{\goal}(\pos)& \scalebox{0.95}{$:= \pos \!+\! \scalebox{0.9}{$\dfrac{\sin(\frac{\lingain}{2\anggain} \Si(\pi)\!)}{\sin(\tfrac{\pi}{2}+\frac{\lingain}{2\anggain} \Si(\pi)\!)}$}  \Rmat(+\tfrac{\pi}{2}) (\goal\! - \!\pos)$}.\! 
\end{align}
\end{subequations}
}%
Hence, the motion prediction sets $\motionset_{\fwdctrl_{\goal},\ball}$, $\motionset_{\fwdctrl_{\goal},\cone}$, and $\motionset_{\fwdctrl_{\goal},\dshape}$ of the forward unicycle control $\fwdctrl_{\goal}$ inherit key characteristic properties such as positive inclusion (\refprop{prop.Positive_Inclusion}), radial decay (\refprop{prop.Radial_Decay}), Lipschitz continuous distance (\refprop{prop.Motion_Prediction_Distance}), and inclusion order (\refprop{prop.Inclusion_Order}).    
Moreover, similar to \refprop{prop.Total_Turning_Effort}, the total turning effort of the forward unicycle control $\fwdctrl_{\goal}$ can be determined as 
{\small
\begin{align}
\fwdtotalturning_{\goal}(\pos, \ort)\!\!:=\! \fwdheadingerror_{\goal}(\pos, \ort) \!+\! \frac{\lingain}{2\anggain} 
\scalebox{0.81}{$
\left \{ \!
\begin{array}{@{}c@{}l@{}}
\Si(-\pi), & \text{ if } \fwdheadingerror_{\goal}(\pos, \ort) \!<\! -\tfrac{\pi}{2} \\
\Si(2\fwdheadingerror_{\goal}(\pos, \ort)\!), & \text{ if }  -\!\tfrac{\pi}{2}\! \leq\! \fwdheadingerror_{\goal}(\pos, \ort) \!\leq\! \tfrac{\pi}{2} \\
\Si(\pi), & \text{ if } \fwdheadingerror_{\goal}(\pos, \ort) \!>\! \tfrac{\pi}{2}. 
\end{array}
\right.
$}
\end{align}  
}%
  
\subsection{Backward Unicycle Control and Motion Prediction}

Similar to \refeq{eq.Forward_Unicycle_Control}, the backward-directional unicycle control $\bckctrl_{\goal}(\pos, \ort):= (\bcklinvel_{\goal}(\pos, \ort), \bckangvel_{\goal}(\pos, \ort))$ can be designed for a unicycle robot with the backward movement constraint, as
{\small
\begin{subequations}\label{eq.Backward_Unicycle_Control}
\begin{align}
\bcklinvel_{\goal}(\pos, \ort) &:= \min \plist{0, \lingain \ovecTsmall{\ort}\! \!(\goal\!-\!\pos)\!\!} \nonumber\\
& =  \scalebox{0.92}{$\left \{ 
 \begin{array}{@{}c@{}l@{}}
\lingain \ovecTsmall{\ort}\! \!(\goal\!-\!\pos),  & \text{ if } |\bckheadingerror_{\goal}\!(\pos, \ort)| \! \leq \! \tfrac{\pi}{2} \\
0, & \text{ otherwise }  
\end{array} \right. $}
\\
\bckangvel_{\goal}(\pos, \ort) &\!:=\! 
\scalebox{0.92}{$\left \{ 
 \begin{array}{@{}c@{}l@{}}
\anggain \bckheadingerror_{\goal}\!(\pos, \ort) \!+\! \frac{\lingain}{2}\sin(2\bckheadingerror_{\goal}\!(\pos, \ort)\!),  & \text{ if } |\bckheadingerror_{\goal}\!(\pos, \ort)|\! \leq \! \tfrac{\pi}{2} \\
\anggain \bckheadingerror_{\goal}(\pos, \ort), & \text{ otherwise }  
\end{array} \right. $}
\end{align}
\end{subequations}
}%
where $\lingain>0$ and $\anggain>0$ are fixed positive control gains and the angular heading error $\bckheadingerror_{\goal}(\pos, \ort)$ of the backward-moving unicycle state $(\pos, \ort)$ relative to $\goal$ is defined as 
{\small
\begin{equation}\label{eq.Forward_Angular_Heading_Error}
\!\!\bckheadingerror_{\goal}(\pos, \ort) \!:= \arctantwo\plist{\!-\nvecTsmall{\ort}\!\! (\goal\!-\!\pos) ,  \! -\ovecTsmall{\ort}\!\! (\goal\!-\!\pos)\!\!}.\!\!\!
\end{equation}
}%
Hence, as in \refeq{eq.Forward_Unicycle_Motion_Predictions}, the circular, conic, and diamond-shaped feedback motion prediction sets for the backward unicycle control $\bckctrl_{\goal}$ can be constructed as
{\small 
\begin{subequations}
\begin{align}
\motionset_{\bckctrl_{\goal}, \ball}(\pos, \ort)&:= \motionset_{\ctrl_{\goal}, \ball}(\pos, \ort)
\\
\!\!\motionset_{\bckctrl_{\goal}, \cone}(\pos, \ort)&:= \!\left \{ 
\begin{array}{@{}c@{}l@{}}
\motionset_{\ctrl_{\goal}, \cone}(\pos, \ort), & \text{ if } |\bckheadingerror_{\goal}\!(\pos, \ort)| \leq \frac{\pi}{2} \\
\motionset_{\ctrl_{\goal}, \ball}(\pos, \ort), & \text{ otherwise}
\end{array}
\right .  \!\! 
\\
\!\!\motionset_{\bckctrl_{\goal}, \dshape}(\pos, \ort)&:= \! \left \{ 
\begin{array}{@{}l@{}}
\motionset_{\ctrl_{\goal}, \dshape}(\pos, \ort),  \text{ if } |\bckheadingerror_{\goal}\!(\pos, \ort)| \leq \frac{\pi}{2} \\
\conv(\goal, \overline{\xpos}_{\goal}(\pos), \overline{\xpos}_{\goal}^{r}\!(\pos)\!),  \text{ otherwise}
\end{array}
\right .  \!\! 
\end{align} 
\end{subequations}
}%
which come with the positive inclusion (\refprop{prop.Positive_Inclusion}), radial decay (\refprop{prop.Radial_Decay}), Lipschitz continuous distance (\refprop{prop.Motion_Prediction_Distance}), and inclusion order (\refprop{prop.Inclusion_Order}) properties. 
Finally, as in \refprop{prop.Total_Turning_Effort}, the total turning effort of the backward unicycle control $\bckctrl_{\goal}$ is given by
{\small
\begin{align}
\bcktotalturning_{\goal}(\pos, \ort)\!\!:=\! \bckheadingerror_{\goal}(\pos, \ort) \!+\! \frac{\lingain}{2\anggain} 
\scalebox{0.81}{$
\left \{ \!
\begin{array}{@{}c@{}l@{}}
\Si(-\pi), & \text{ if } \bckheadingerror_{\goal}(\pos, \ort) \!<\! -\tfrac{\pi}{2} \\
\Si(2\bckheadingerror_{\goal}(\pos, \ort)\!), & \text{ if }  -\!\tfrac{\pi}{2}\! \leq\! \bckheadingerror_{\goal}(\pos, \ort) \!\leq\! \tfrac{\pi}{2} \\
\Si(\pi), & \text{ if } \bckheadingerror_{\goal}(\pos, \ort) \!>\! \tfrac{\pi}{2}. 
\end{array}
\right.
$}
\end{align}  
}%

\end{document}